\PassOptionsToPackage{prologue,dvipsnames}{xcolor}

\documentclass{article} %
\pdfoutput=1
\usepackage{iclr2025_conference,times}

\usepackage{amsmath}
\usepackage{amsfonts}
\usepackage{bm}
\def\Eqref#1{Equation~\ref{#1}}
\usepackage[pagebackref]{hyperref}       %
\usepackage{url}
\usepackage[final,nopatch=footnote]{microtype}
\usepackage{inconsolata}

\usepackage{pifont}

\usepackage{graphicx}
\graphicspath{{figures/}}%
\usepackage{float}
\usepackage{subcaption}
\usepackage{xurl}

\newlength{\figurewidth}\setlength{\figurewidth}{0.334\textwidth} %

\usepackage{float}
\usepackage[dvipsnames]{xcolor}
\hypersetup{
    colorlinks=true,
    citecolor=MidnightBlue,
    linkcolor=purple,
    urlcolor=MidnightBlue,
}

\usepackage{caption}%
\usepackage{ifthen}%

\usepackage{wrapfig} %
\usepackage{amsmath}
\usepackage{amssymb}
\usepackage{mathtools}
\usepackage[linesnumbered,ruled,vlined]{algorithm2e}
\usepackage{setspace}

\usepackage{amsthm}
\usepackage[capitalize,noabbrev]{cleveref}
\theoremstyle{plain}
\newtheorem{theorem}{Theorem}[section]
\newtheorem{proposition}[theorem]{Proposition}

\theoremstyle{definition}

\theoremstyle{remark}

\usepackage{nicefrac}
\usepackage{bbm} %
\usepackage{mathtools}

\usepackage{colortbl}
\definecolor{Lightgray}{RGB}{235,235,235}
\usepackage{colortbl}
\usepackage{multirow}
\usepackage{booktabs,nicematrix}

\definecolor{Gray}{gray}{0.85}
\definecolor{LightCyan}{rgb}{0.88,1,1}
\newcolumntype{a}{>{\columncolor{Gray}}c}
\newcolumntype{b}{>{\columncolor{white}}c}

\renewcommand*{\backref}[1]{}
\renewcommand*{\backrefalt}[4]{%
\ifcase #1 %
    No citations.%
\or
    (p. #2.)%
\else
    (pp. #2.)%
\fi}%

\let\tilde\widetilde

\def\given{{\,|\,}}

\def\EE{\mathbb{E}}
\newcommand{\algname}{LSAC\,}
\usepackage{soul}

\usepackage{caption} %

\usepackage{enumitem} %

\usepackage{multirow}

\title{Langevin Soft Actor-Critic: Efficient Exploration through Uncertainty-Driven Critic Learning}

\makeatletter
\newcommand{\printfnsymbol}[1]{%
  \textsuperscript{\@fnsymbol{#1}}%
}
\makeatother

\author{Haque Ishfaq\thanks{Equal contribution}\,\,,  Guangyuan Wang\printfnsymbol{1}, Sami Nur Islam, Doina Precup\\ 
Mila, McGill University\\
\texttt{\{haque.ishfaq, guangyuan.wang\}@mail.mcgill.ca} \\
}

\iclrfinalcopy %
\begin{document}

\maketitle

\begin{abstract}
Existing actor-critic algorithms, which are popular for continuous control reinforcement learning (RL) tasks, suffer from poor sample efficiency due to lack of principled exploration mechanism within them. Motivated by the success of Thompson sampling for efficient exploration in RL, we propose a novel model-free RL algorithm, \emph{Langevin Soft Actor Critic} (LSAC), which prioritizes enhancing critic learning through uncertainty estimation over policy optimization. LSAC employs three key innovations: approximate Thompson sampling through distributional Langevin Monte Carlo (LMC) based $Q$ updates, parallel tempering for exploring multiple modes of the posterior of the $Q$ function, and diffusion synthesized state-action samples regularized with $Q$ action gradients. Our extensive experiments demonstrate that LSAC outperforms or matches the performance of mainstream model-free RL algorithms for continuous control tasks.
Notably, LSAC marks the first successful application of an LMC based Thompson sampling in continuous control tasks with continuous action spaces.
\end{abstract}

\section{Introduction}

We introduce a practical and efficient model-free online RL algorithm termed \emph{Langevin Soft Actor-Critic} (LSAC), which incorporates distributional Langevin Monte Carlo (LMC) \citep{welling2011bayesian} critic updates with parallel tempering and action refinement on diffusion synthesized trajectories. Our approach employs a distributional $Q$ objective and allows diverse sampling from multimodal $Q$ posteriors through the use of parallel tempering \citep{chandra2019langevin}, making LSAC especially well-suited for continuous control tasks in environments like MuJoCo control tasks \citep{brockman2016openai} 
and DeepMind Control Suite (DMC) \citep{tassa2018deepmind}.

Although Langevin-style update is powerful for learning posteriors by performing noisy gradient descent updates to approximately sample from the exact posterior distribution of the $Q$ function, when naively applied to continuous control settings, it meets with the following three challenges:

\begin{itemize}[left=16pt]
    \item[(C1)] \label{item:c1} {\bf Multidimensional continuous action spaces.} In continuous control settings, actions are typically continuous and multidimensional tensors. This makes it computationally intractable to naively select the exact greedy actions based on $Q$ posterior approximations, often leading to sub-optimal performance and inefficient exploration.
    \item[(C2)] \label{item:c2} {\bf Value approximation errors.} While LMC update helps in better exploration \citep{ishfaq2023provable,ishfaq2024more}, it may also lead to instability issues when coupled with deep neural networks \citep{dauphin2014identifying} due to overestimation bias of $Q$-function. Moreover, naive LMC might lead to similar actions being overly explored due to high correlation among samples from the LMC Markov chain at nearby steps which in turn can lead to value approximation error \citep{holden2019mixing, vishnoi2021introduction}.

    \item[(C3)] \label{item:c3} {\bf Low sample efficiency.} Naive LMC updates, akin to many actor-critic frameworks, use a Update-To-Data (UTD) ratio of 1, which is the number of network updates to actual environment interactions. This limited UTD ratio often leads to underfitting the complex state-action representations in continuous control \citep{chen2021randomized,dorka2023dynamic}. Relying solely on a single critic update per iteration from on-policy experience is insufficient, as it fails to leverage the diversity of the data space, thereby hindering critic learning.
\end{itemize}

Recently there have been several works \citep{dwaracherla2020langevin,ishfaq2023provable,ishfaq2024more} that provide provably efficient RL algorithms that rely on LMC-style updates. However, these algorithms are scalable only to pixel-based Deep RL environments with discrete action spaces. One challenge arises from the multidimensional continuous action spaces in continuous control environments, as detailed in (\hyperref[item:c1]{C1}). LSAC addresses this challenge by eliminating the need to compute the maximum of $Q$ values over the entire action space to select greedy actions \citep{ishfaq2023provable}. Instead, it employs a distributional critic learning framework using LMC along with a Maximum Entropy (Max-Ent) policy objective \citep{eysenbach2022maximum}. Learning distributional critic further mitigates $Q$-value overestimation issue as detailed in (\hyperref[item:c2]{C2}). To address (\hyperref[item:c3]{C3}), LSAC further incorporates $Q$ action gradient refinement \citep{yang2023policy} in diffusion synthesized state-action samples during the critic update. This introduces diverse and potentially high-valued synthetic state-action pairs into the collected trajectories, thereby enhancing critic learning. The actor in turn benefits from more accurate value estimations and thus improved policy learning. We further use parallel tempering \citep{chandra2019langevin} to allow sampling from multi-modal $Q$ function posterior.

Furthermore, traditional continuous control benchmarks such as DSAC-T \citep{duan2023dsac}, REDQ \citep{chen2021randomized}, SAC \citep{haarnoja2018soft}, and TD3 \citep{fujimoto2018addressing}, while benefiting from heuristics such as noise perturbed actions sampling and entropy maximization, do not sufficiently emphasize principled and directed exploration in their design principles. On the flip side, LSAC is highly exploratory in nature and effectively increases state-coverage during training by virtue of using theoretically principled LMC based Thompson sampling. 

\subsection{Key Contributions}

To address the aforementioned challenges, we propose \emph{Langevin Soft Actor-Critic} (LSAC) that endows traditional Max-Ent actor-critic algorithms with LMC-style updates and multimodal posterior sampling techniques for $Q$ function. We summarize our algorithmic contributions as follows:

\paragraph{Distributional Adaptive Langevin Monte Carlo.} Incorporating LMC for updating the $Q$ function significantly boosts exploration while simultaneously maintaining a crucial balance with exploitation. To address (\hyperref[item:c2]{C2}), we first define a distributed Max-Ent critic objective inspired by \cite{duan2023dsac}. Then, we employ distributional critic learning with the addition of adaptive LMC samplers.

\paragraph{Multimodal $Q$ Posteriors.} One downside of naive LMC updates is that potentially homogeneous posterior samples are generated at adjacent gradient steps. This translates to sampling similar critic for adjacent areas of $Q$-function posterior and this high correlation restricts exploration within the policy space while using a single critic for policy update. Consequently, a much longer burn-in period \citep{roy2020convergence} is necessary to ensure adequate mixing of the Markov chain. However, this extended burn-in period conflicts with the frequent updates required by the agent in complex exploration tasks. To overcome this challenge, we introduce a simplified version of parallel tempering or replica exchange method \citep{geyer1995annealing,chandra2019langevin} that helps with exploring different modes of the $Q$-posterior more effectively. This in turn diversifies the actions sampled by the Max-Ent policy.  

\paragraph{Diffusion $Q$ Action Gradient.} In the off-policy model-free RL setting, directly sampling from diffusion policies can be prohibitively expensive \citep{chen2024score},
often requiring tens to hundreds of iterative inference steps per action, particularly in the absence of pretraining with a diffusion behavior model. To address this challenge and resolve (\hyperref[item:c3]{C3}), we explore an alternative approach to introduce diversity and multimodality without relying solely on policy learning. Our strategy incorporates diffusion synthetic data, as proposed by \cite{lu2024synthetic}, to enhance critic updates. By blending online data with synthetic trajectories, and refining actions within the diffusion synthetic buffer, we leverage the $Q$ action gradient to effectively constrain synthetic actions within the support set of optimal actions. This approach reduces computational costs and ensures that synthetic actions effectively contribute to the stability and quality of critic learning, resulting in more accurate and robust value estimates.

\section{Preliminary}\label{sec:preliminary}

\paragraph{Markov Decision Process and Maximum Entropy RL.} We consider \emph{Markov Decision Process} (MDP) defined as a tuple $(\mathcal{S}, \mathcal{A}, P_0, P, R, \gamma)$ where $\mathcal{S}$ is a continuous state space, $\mathcal{A}$ is a continuous action space, $P_0$ is the initial state distribution, $P:\mathcal{S}\times\mathcal{A}\rightarrow \mathcal{S}$ is the transition probability, $R:\mathcal{S}\times\mathcal{A}\rightarrow \Delta(\mathbb{R})$ is the reward distribution function and $\gamma \in (0,1)$ is the discount factor. At each timestep $t$, the agent observes a state $s_t \in \mathcal{S}$ and takes an action $a_t \sim \pi(a_t \given s_t) \in \mathcal{A}$ following policy $\pi$ and transitions to the next state $s_{t+1}$ according to $s_{t+1} \sim P(s_{t+1} \given s_t, a_t)$ while receiving the reward $R(s_t,a_t)$. For simplicity, we use the notation $(s,a)$ and $(s', a')$ as the current and the next state-action pairs respectively. Furthermore, we adopt $r_t$ to denote $R(s_t,a_t)$ and use $\rho_\pi(s)$ and $\rho_\pi(s,a)$ to denote the state and state-action occupancy measure induced by policy $\pi$.

While standard RL aims to find a policy that maximizes the expected cumulative return, in this work, we consider maximum entropy RL \citep{ziebart2010modeling,haarnoja2017reinforcement} in which the objective function is augmented with the entropy of a policy at each visited state $s_t$:
\begin{align}
    J_\pi &= \sum_{i = 0}^\infty \EE_{(s_i,a_i) \sim \rho_\pi} \gamma^i[r_i + \alpha \mathcal{H}(\pi(\cdot \given s_i))],
\end{align}
where $\mathcal{H}(\pi(\cdot \given s)) \coloneqq \EE_{a \sim \pi(\cdot \given s)}[-\log \pi(a \given s)]$ is the policy entropy and $\alpha > 0$ is a temperature coefficient. We denote the entropy augmented cumulative return from $s_t$, also known as soft return, by $G_t = \sum_{i = t}^\infty \gamma^i[r_i - \alpha \log \pi(a_i \given s_i)]$. The soft Q-value of policy $\pi$, which describes the expected soft return of policy $\pi$ upon taking action $a_t$ at state $s_t$, is defined as $Q^\pi(s_t, a_t) \coloneqq r_t + \gamma \EE[G_{t+1}]$, where the expectation is taken over trajectory distribution under policy $\pi$.

\paragraph{Langevin Monte Carlo (LMC).} LMC is a popular sampling algorithm in machine learning that leverages Euler discretization method to approximate the continuous-time Langevin diffusion process \citep{welling2011bayesian}. Langevin diffusion \citep{rossky1978brownian, roberts2002langevin} is a stochastic process that is defined by the stochastic differential equation (SDE) $dw_t = - \nabla L(w_t)dt + \sqrt{2}dB_t$, where $L:\mathbb{R}^n \rightarrow \mathbb{R}$ is a twice-differentiable function and  $B_t$ is a standard Brownian motion in $\mathbb{R}^d$. Taking Euler-Murayama discretization of the SDE, we obtain the iterative update rule for LMC:
\begin{align}\label{Eq:LMC}
    w_{t+1} = w_t - \eta \nabla L(w_{t}) + \sqrt{2\eta \beta^{-1}}\epsilon_t,
\end{align}
where $\eta$ is a fixed step size, $\beta$ is the inverse temperature and $\epsilon_t \sim \mathcal{N}(0, I_d)$. LMC update generates a Markov chain whose stationary distribution converges to a target distribution $p(x) \propto \exp{(-\beta L(w))}$ \citep{roberts1996exponential}. Intuitively, LMC can be thought as a version of gradient descent perturbed by Gaussian noise. Replacing the true gradient $\nabla L(w_k)$ with some stochastic gradient estimators results in the celebrated stochastic gradient Langevin dynamics (SGLD) algorithm \citep{welling2011bayesian}.

\paragraph{Diffusion Models.} Diffusion models \citep{ho2020denoising, sohl2015deep} are a class of generative models that were inspired by non-equilibrium thermodynamics and first used in image synthesis. They have recently emerged as a powerful framework for RL to enhance multimodal decision-making process \citep{wang2023diffusion, hansen2023idql}. Given data marginally distributed as $q_0(x_0)$, we sample from it by first defining a stochastic differential equation (SDE) 
\begin{align}\label{eq:forward}
dx_t = f(t) x_t dt + g(t) dw_t, \quad x_0 \sim q_0(x_0),
\end{align}
where $w_t$ is the standard $d$-dimensional Wiener process. Diffusion models gradually add Gaussian noise from $t=0$ to $T$ by setting noise schedules $\sigma_{\max} = \sigma_T > \sigma_{T-1} > \cdots > \sigma_0 = 0$ such that $x_t \sim q_t(x_t; \sigma_t)$ and $q_T\sim \mathcal N(0, \sigma_{\max}^2 I)$ is indistinguishable from pure Gaussian noise. \Eqref{eq:forward} admits an equivalent reverse denoising process starting with the fully noised distribution $q_T(x_T)$:
\begin{align}\label{eq:backward}
    dx_t = (f(t)x_t - g^2(t) \nabla_x \log q_t(x_t)) dt + g(t)d\bar{w}_t, \quad x_T \sim q_T(x_T),
\end{align}

Since the score function $\nabla_x \log q_t(x_t)$ at each time step $t$ is unknown, \cite{karras2022elucidating} considers training a noise predictor $\epsilon_\theta (x_t, t)$ on the score matching objective
\begin{align*}
    L_{\mathrm{VLB}} (\theta) = \min_\theta \mathbb E_{x\sim q_0(x_0), \epsilon\sim \mathcal N(0, \sigma^2 I)} \|D_\theta (x+ \epsilon; \sigma) - x \|_2^2.
\end{align*}
to predict the added noise $\epsilon$ that converts $x_0$ to $x_T$. Then, the score function can be expressed as $\nabla_x \log p_0(x; \sigma) = (D_\theta(s; \sigma) - x)/\sigma^2$, and we can generate synthetic samples by solving either the backward SDE in \Eqref{eq:backward} or using DPM solvers \citep{lu2022dpm}.

\section{Algorithm Design}\label{sec:algorithm design}

In this section, we introduce  \emph{Langevin Soft Actor Critic} (LSAC), as shown in  Algorithm~\ref{algo:algo1}, which builds off three main ideas. First, during critic learning, we want to learn and efficiently sample a Q-value function from its approximate posterior distribution. We leverage Langevin Monte Carlo (LMC) to perform this. This is a natural adaptation to posterior sampling or Thompson sampling that is widely used in RL for efficient exploration. Second, we couple LMC based posterior sampling with distributional value function learning \citep{duan2023dsac,ma2020dsac} that helps with mitigating the well-known overestimation issue. Third, to ensure LMC can sample from different modes of $Q$-posterior, we use parallel tempering \citep{chandra2019langevin}. Fourth, to improve sample efficiency and the UTD ratio, during critic update, we synthesize diverse and potentially high-valued state-action samples using a diffusion model and $Q$ action gradient refinement. 

\begin{algorithm}[H]
\caption{Langevin Soft Actor-Critic (LSAC)}\label{algo:algo1}
\KwIn{Policy networks $\pi_\phi, \pi_{\bar{\phi}}$, critic networks $\mathcal{Z}_\psi, \mathcal{Z}_{\bar{\psi}}$, and diffusion model $\mathcal M$.

Replay buffer $\mathcal D\leftarrow \emptyset$, diffusion buffer $\mathcal D' \leftarrow \emptyset$.

Collection of posteriors $\Psi_\mathcal{Z} = \{\psi^{(i)}\}_{i=1}^n$, entropy factor $\alpha$, initialize weights $\psi^{(i)}$ for $\{\psi^{(i)}\}_{i=1}^n$.

Set step size $\eta_Q > 0$ and temperatures $\beta_a, \beta_\alpha, \beta_\pi, \beta_Q, \beta_M$.}

\While{policy has not converged}{
    \For{each sampling step}{
        Online interaction with $\pi_\phi$ in the environment, $\mathcal D \leftarrow \mathcal D\cup \{(s,a,r,s',d)\}$.
    }
    \For{each update step}{
        
        \For{$i = 1, \ldots, n$}{
            Sample mini-batch $B_{D_i}$ from $\mathcal D$ and $B_{M_i} = \{(s_M,a_M,r_M,s_M',d_M)\}$ from $\mathcal D'$.
    
            Refine $a_M$ with $a_M \leftarrow a_M + \beta_a \nabla_a Q_{\psi^{(i)}} (s_M,a_M)$ in $B_{M_i}$.
            
            Update $\mathcal{Z}_{\psi^{(i)}}$ on $B_i = B_{D_i}\cup B_{M_i}$ with Algorithm \ref{algo:algo2}.

        }

        Sample $\psi^{(i)} \sim \mathcal U(\Psi_\mathcal{Z})$ at random and recover $B_i$.
        
        Compute $\alpha$ with \Eqref{eq:alpha} and update $\pi_\phi$ on $B_i$ with \Eqref{eq:policy}.

        Update $\mathcal M$ on $\mathcal D$ with \Eqref{eq:diffuser_loss} and fill up $\mathcal D'$.
        
        Polyak update $\bar{\psi}^{(i)}\leftarrow \tau\psi^{(i)} + (1-\tau)\bar{\psi}^{(i)}$, $\bar{\phi} \leftarrow \tau\phi + (1-\tau) \bar{\phi}$.
    }
}
\end{algorithm}

\paragraph{Distributional Critic Learning with Adaptive Langevin Monte Carlo.}  To describe distributional critic update, we first define few terminologies. We first define soft state-action return, a random variable, given by $Z^\pi(s_t,a_t) \coloneqq r_t + \gamma G_{t+1}$, which is a function of policy $\pi$ and state-action pair $(s_t,a_t)$. It is easy to observe that $Q^\pi(s,a) = \EE[Z^\pi(s,a)]$. Instead of the expected state-action return $Q^\pi(s,a)$, we aim to model the distribution of the random variable $Z^\pi(s,a)$. We define $\mathcal{Z}^\pi(Z^\pi(s,a) \given s,a) : \mathcal{S}\times \mathcal{A} \rightarrow \mathcal{P}(Z^\pi(s,a))$ as a mapping from $(s,a)$ to a distribution over the soft state-action return $Z^\pi(s,a)$. We refer to this mapping as value distribution function. We define the distributional Bellman operator in the maximum entropy framework as 
\begin{align}\label{eq:z_pi}
    \mathcal{T}^\pi Z^\pi(s,a) \overset{D}{\coloneqq} r + \gamma (Z^\pi(s',a') - \alpha \log \pi(a'\given s')).
\end{align}

We model the value distribution function and stochastic policy as diagonal Gaussian distribution and parameterize as $\mathcal{Z}_\psi(\cdot \given s, a)$ and $\pi_\phi(\cdot \given s)$, where $\psi$ and $\phi$ are the neural network parameters. Due to Gaussian assumption, $\mathcal{Z}_\psi$ can be expressed as $\mathcal{Z}_\psi(\cdot \given s, a) = \mathcal{N}(Q_\psi(s,a), \sigma_\psi(s,a)^2)$, where $Q_\psi(s,a)$ and $\sigma_\psi(s,a)$ are the mean and standard deviation of value distribution respectively. The distributional critic is updated by minimizing the following loss function:
\begin{align}\label{eq:critic-obj}
    L_\mathcal{Z}(\psi) & := \omega \mathbb E_{(s,a)\sim B} D_\mathrm{KL} ( \mathcal T^{\pi_{\bar{\phi}}} \mathcal{Z}_{\bar{\psi}} (s,a) \|\mathcal{Z}_{\psi} (s,a)),
\end{align}
where $D_\mathrm{KL}$ is the Kullback-Leibler (KL) divergence, $B$ is the replay buffer, and  $\bar{\psi}$ and $\bar{\phi}$ denotes the target network parameters of $\mathcal{Z}_\psi$ and $\pi_\phi$ respectively. The gradient scalar $\omega := \mathbb E_{(s,a)\sim B} [\sigma_\psi (s, a)^2]$ depends on the variance of the  distribution function.

Following \cite{duan2023dsac}, we decompose the critic update gradient into two components: mean-related gradient $\nabla_\psi L_{\mathcal{Z},m}(\psi)$ and variance-related gradient  $\nabla_\psi L_{\mathcal{Z},v}(\psi)$:
\begin{align*}
    \nabla_\psi L_{\mathcal{Z},m}(\psi) & := -\frac{y_Q - Q_\psi (s,a)}{\sigma_\psi (s,a)^2 + \epsilon_\sigma} \nabla_\psi Q_\psi (s,a) \\
    \nabla_\psi L_{\mathcal{Z},v}(\psi) & := - \frac{(\operatorname{clip}_b (y_Z) - Q_\psi (s,a))^2 - \sigma_\psi (s,a)^2}{\sigma_\psi (s,a)^3 + \epsilon_\sigma} \nabla_\psi \sigma_\psi (s,a),
\end{align*}
where the target terms $y_Q$ and $y_Z$ are defined as $y_Q := r + \gamma(Q_{\bar{\psi}} (s',a') - \alpha \log \pi_{\bar{\phi}} (a'|s'))$ and $y_Z := r + \gamma(Z_{\bar{\psi}} (s',a') - \alpha \log \pi_{\bar{\phi}} (a'|s'))$. $y_Z$ is further clipped with a clipping function $\operatorname{clip}_b(y_Z) \coloneqq \operatorname{clip}(y_Z, Q_\psi(s,a) - b, Q_\psi(s,a) + b)$ where $b \propto \mathbb E_{(s,a)\sim B} \sigma_\psi (s,a)$  is an automated boundary. 

Finally, we can express the sample-based critic update gradient as
\begin{align}\label{eq:sample_based_critic_obj}
    & \nabla_\psi L_\mathcal{Z}(\psi) \approx  \mathbb E_{(s,a)\sim B} \left[ \nabla_\psi L_{\mathcal{Z},m}(\psi) + \nabla_\psi L_{\mathcal{Z},v}(\psi) \right],
\end{align}

In \Cref{appendix:theoretical-insight}, we show that, under some mild assumptions, the posterior over $Q_\psi$ is of the form $\exp (-L_\mathcal{Z}(\psi)) / Z$, where $Z$ is the partition function, and that $Q_\psi(s,a) = \mathbb E_{(s,a)\sim B} \mathcal{Z}_\psi (s,a)$. However, exactly sampling from this distribution is non-trivial as we do not know the partition function. To this mean, we can use LMC based sampling algorithm. In place of vanilla LMC described in \Eqref{Eq:LMC}, following \citet{ishfaq2023provable, kim2022stochastic}, we use adaptive Stochastic Gradient Langevin Dynamics (aSGLD), where an adaptively adjusted bias term is included in the drift function to enhance escape from saddle points and accelerate the convergence to the true $Q$ posterior, even in the presence of pathological curvatures and saddle points which are common in deep neural network \citep{dauphin2014identifying}. Concretely, we use the following update rule
\begin{align}\label{eq:lmc}
    \psi_{k+1} \leftarrow \psi_{k} - \eta_Q (\nabla_{\psi} L_\mathcal{Z}(\psi_k) + a \zeta_{\psi_k}) + \sqrt{2 \eta_Q\beta_Q^{-1}} \epsilon_k, \quad \epsilon_k \sim \mathcal N(0, I_d),
\end{align}
for a step size $\eta_Q > 0$,  bias factor $a$, adaptive preconditioner $\zeta_k$, and inverse temperature $\beta_Q$. Inspired from the Adam optimizer \citep{kingma2014adam}, the adaptive preconditioner $\zeta_k$ is defined as $\zeta_{\psi_k} \coloneqq m_k \oslash \sqrt{v_k+\lambda \mathbf{1}}$ where,
\begin{equation}\label{eq:first-second-momentum}
    m_k=\alpha_1 m_{k-1}+(1-\alpha_1) \nabla L_\mathcal{Z}(\psi_k) \quad\text{and}\quad v_k=\alpha_2 v_{k-1}+(1-\alpha_2) \nabla L_\mathcal{Z}(\psi_k) \odot \nabla L_\mathcal{Z}(\psi_k),
\end{equation}
with $\alpha_1, \alpha_2 \in [0,1)$ being the smoothing factors of the first and second moments of the stochastic gradients, respectively. Each sampled $\psi$ following \Eqref{eq:lmc} parameterizes a possible distributional $Q$ function and thus is equivalent to sampling from the posterior over distributional $Q$ function.

While our critic update rule is motivated by \citet{ishfaq2023provable}, we are the first to apply aSGLD based parameter sampling for continuous control task along with distributional critic. 
Moreover, while in each critic update step, we perform one aSGLD update, \citet{ishfaq2023provable}, in their LMCDQN algorithm, performs this update $\tilde{O}(K)$ times, where $K$ is the episode number. This can significantly increase the runtime of LMCDQN compared to that of LSAC.

\setcounter{AlgoLine}{0}

\begin{algorithm}[H]
\caption{Distributional Adaptive Langevin Monte Carlo}\label{algo:algo2}
\KwIn{Policy $\pi_\phi$ and target $\pi_{\bar{\phi}}$, critic weight $\psi$ and target $\bar{\psi}$, data batch $B$.}

Sample $\epsilon \sim \mathcal N(0, I_d)$.

Update $Z_{\psi}, Z_{\bar{\psi}}$ and $\mathcal T^{\pi_{\bar{\phi}}} Z_{\bar{\psi}} (s,a)$ by \Eqref{eq:z_pi}.
    
Set clipping boundary $b \propto \mathbb E_{(s,a)\sim B} \sigma_{\psi}(s,a)$, temperature $\omega \propto \mathbb E_{(s,a)\sim B} [\sigma_{\psi} (s,a)^2]$.

Set $L_\mathcal{Z}(\psi)  := \omega \mathbb E_{(s,a)\sim B} D_\mathrm{KL} ( \mathcal T^{\pi_{\bar{\phi}}} \mathcal{Z}_{\bar{\psi}} (s,a) \|\mathcal{Z}_{\psi} (s,a))$ by \Eqref{eq:critic-obj}.

Compute the adaptive drift bias $\zeta_{\psi} := m \oslash \sqrt{v+\lambda \mathbf{1}}$ using \Eqref{eq:first-second-momentum}.
    
Update $\mathcal{Z}$ posterior weights with $\psi \leftarrow \psi - \eta_Q (\nabla_\psi L_\mathcal{Z}(\psi) + a\zeta_{\psi}) + \sqrt{2\eta_Q \beta_Q^{-1}}\epsilon$.

Polyak update $b \leftarrow \tau b + (1-\tau) \mathbb E_{(s,a)\sim B} [\sigma_{\psi}(s,a)]$, $\omega \leftarrow \tau \omega + (1-\tau) \mathbb E_{(s,a)\sim B} [\sigma_{\psi}(s,a)^2]$.

\end{algorithm}

\paragraph{Parallel Tempering and Multimodal $Q$ Posteriors.} 

Despite the scalability of LMC, its mixing rate is often extremely slow, especially for distributions with complex energy landscapes \citep{li2018visualizing}. Performing naive LMC to approximately sample from multimodal $Q$ posterior can thus converge very slowly which in turn will affect the performance of the algorithm. Parallel tempering (also known as replica exchange) \citep{marinari1992simulated,geyer1995annealing,chandra2019langevin} is a standard approach for exploring multiple modes of the posterior distribution while performing LMC. In parallel tempering, multiple MCMC chains (known as replicas) are executed at different temperature values. It allows global and local exploration which makes it suitable for sampling from multi-modal distributions \citep{hukushima1996exchange,patriksson2008temperature}.

We use a simplified version of parallel tempering where for all replicas we use the same temperature. To reduce complexity, we also do not perform replica exchange. Even though, in principle, it can limit the exploration of the parameter space, as we initialize each replica with different starting points, it achieves enough exploration for our purpose while maintaining a simple implementation. By running multiple LMC chains $\Psi_Q = \{\psi^{(i)}\}_{i=1}^n$, we can sample $Q$-functions for critics from distinct modes of the multimodal posterior distribution while ensuring faster convergence and mixing time. 

\paragraph{Diffusion $Q$ Action Gradient.} Our approach begins with $\pi_\phi$, which approximates a Max-Ent policy \citep{eysenbach2022maximum} used for collecting online trajectories $\tau$. To enhance the diversity of state-action pairs and increase the UTD ratio for critic updates, we first randomly sample a batch $B_{D_i}$ from the online buffer. Next, we sample synthetic data $B_{M_i}$ from a diffusion \citep{wang2023diffusion, lu2024synthetic} generator $\mathcal{M}$.  

However, between the periodic updates of $\mathcal M$, synthetic data generated by $\mathcal M$ can become stale, potentially limiting its effectiveness in dynamic environments. Hence, each action sample in $B_{M_i}$ is then refined through gradient ascent $\beta_a \nabla_a Q_{\psi^{(i)}} (s,\tilde{a})$, targeting improved alignment with high-value regions. While this action gradient approach shares conceptual similarities with DIPO \citep{yang2023policy}, 
which replaces the original actions from the samples in the replay buffer by performing gradient ascent for policy optimization, our dual focus on diversity and quality of mini-batch data used for critic and policy update is distinct. We utilize $Q$ action gradient specifically for critic updates, ensuring that the synthetic actions are not only diverse but also accurately reflect regions of high $Q$ value, all while remaining within the valid support set of the action space. Finally, to increment the UTD ratio, we mix $B_{D_i}$ and $B_{M_i}$ into a single parallel data batch $B_i$ and for each $1 \leq i \leq n$, update  $\psi^{(i)}$ using \Cref{algo:algo2}.

The diffusion model $\mathcal{M}$ is trained with the score matching loss
\begin{align}\label{eq:diffuser_loss}
    L_{\mathcal{M}}(\theta) := \mathbb E_{t\sim \mathcal U ([T]), z\sim \mathcal N(0, I_d), (s,a)\sim \mathcal{D} } \|z - \epsilon_\theta (\sqrt{\bar{\alpha}_t} a + \sqrt{1-\bar{\alpha}_t}z, s, t)\|_2^2,
\end{align}
where $\mathcal U([T])$ denotes the uniform distribution over a finite collection of reverse time indices $\{1, \ldots, T\}$. The weights $\bar{\alpha}_t := \prod_{i=1}^t \alpha_i$, $\alpha_i := 1 - \beta_i$, are computed via predefined diffusion temperatures $\{\beta_i\}_{i=1}^T$.

\paragraph{Policy Improvement.}
For each actor update step, we randomly sample a $\mathcal{Z}$ weight $\psi^{(i)}$ from $\Psi_\mathcal{Z}$ and retrieve the mixed replay data batch $B_i$. Soft policy improvement maximizes the usual Max-Ent objective
\begin{align}\label{eq:policy}
    \begin{split}
        L_\pi (\phi)  &=\mathbb E_{s \sim B_{i}, a \sim \pi_\phi}  [Q_{\psi^{(i)}} (s,a) + \alpha \mathcal H(\pi_\phi (a|s))]\\
        &= \mathbb E_{s \sim B_{i}, a \sim \pi_\phi}  \Big[\EE_{Z(s,a) \sim \mathcal{Z}_{\psi^{(i)}}(\cdot \given s,a)} [Z(s,a)] + \alpha \mathcal H(\pi_\phi (a|s))\Big].
    \end{split}
\end{align}

Following \citet{haarnoja2018soft,haarnoja2018softapplication},  the entropy coefficient $\alpha$ is updated with
\begin{align}\label{eq:alpha}
    \alpha \leftarrow \alpha-\beta_\alpha \nabla_\alpha (-\log \pi_\phi(a | s)-\overline{\mathcal{H}}),
\end{align}
where $\overline{\mathcal{H}}$ is the expected entropy. Finally, we update the diffusion model $\mathcal M$ periodically with on-policy data and generate $|\mathcal D'|$ copies of synthetic transitions into $\mathcal D'$, while target networks are updated using the Polyak averaging approach.

\section{Experiments}\label{sec:experiments}

\subsection{Experiments in MuJoCo and DMC}
\paragraph{Main Results.} We present empirical evaluations of LSAC on the  MuJoCo benchmark \citep{todorov2012mujoco,brockman2016openai} and the DeepMind Control Suite (DMC) \citep{tassa2018deepmind}, showing that LSAC is able to outperform or match several strong baselines, including DSAC-T \citep{duan2023dsac}, the current state-of-the-art model-free off-policy RL algorithm. %
Other baselines include DIPO \citep{yang2023policy}, SAC \citep{haarnoja2018soft}, TD3 \citep{fujimoto2018addressing}, PPO \citep{schulman2017proximal}, TRPO \citep{schulman2015trust}, REDQ \citep{chen2021randomized} and QSM \citep{psenka2024learning}. Our code is available at \url{https://github.com/hmishfaq/LSAC}.

We emphasize that, for implementation simplicity and fair comparisons, both policy and critic networks sizes are kept the same for our algorithm and all of the baselines. After an initial warm-up stage of \texttt{1e5} steps, we gradually anneal LMC step size $\eta_Q$ from the initial \texttt{1e-3} down to \texttt{1e-4}. For computing the adaptive drift bias $\zeta_\psi$, we use fixed values of $\alpha_1 = 0.9$, $\alpha_2 = 0.999$ in \Eqref{eq:first-second-momentum}, and $\lambda = 10^{-8}$ without tuning them. To prevent gradient explosion during training, we clip the sum of the gradient and the adaptive bias term using $\operatorname{clip}_{c}(\nabla_\psi L_Q(\psi) + a \zeta_\psi)$ by a constant $c = 0.7$.

We accelerate training following the SynthER \citep{lu2024synthetic} implementation and update the diffusion generator $\mathcal{M}$ using the data $\mathcal{D}$ every \texttt{1e4} time steps.
During the critic updates, for each $\psi^{(i)} \in \Psi_Q$, where $1 \leq i \leq n$, the sampled replay buffer data is mixed with a synthetic batch $B_{M_i}$ with a ratio of $0.5$. This synthetic batch is generated from the diffuser, with its state-action samples immediately optimized through gradient ascent with respect to the $Q$ function, parameterized by the current weight $\psi^{(i)}$.

From  \Cref{fig:training_curves} and \Cref{table:results}, we see that LSAC outperforms other baselines in 5 out of 6 tasks from MuJoCo. In Humanoid-v3 even though DSAC-T outperforms LSAC, the difference is marginal. For space constraint we report the DMC result in \Cref{appendix:dmc_result} and \Cref{fig:training_curves_dmc}.

\begin{figure}[H]
    \def\inbetweenfiguresspace{\hspace{-3pt}} %
    \def\inbetweentitlespace{\vspace{-15pt}} %
    \def\inbetweenrowsspace{\vspace{5pt}} %
    \centering
    \captionsetup[subfigure]{labelformat=empty}
    \begin{subfigure}{\figurewidth}
        \includegraphics[width=\linewidth, clip, trim=0 0 0 0]{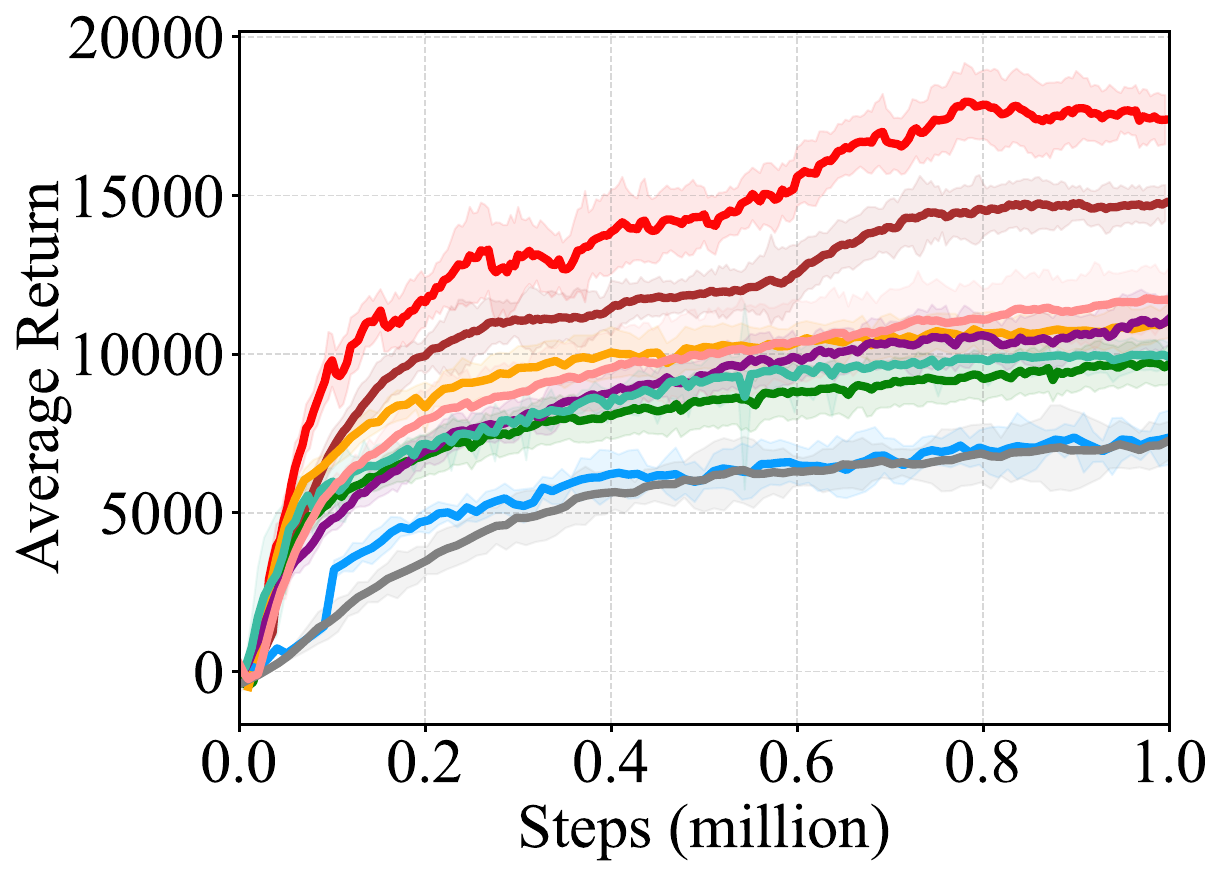}
        \inbetweentitlespace
        \caption{\hspace{15pt} (a) Halfcheetah-v3}\label{fig:halfcheetah_plot}
        \inbetweenrowsspace
    \end{subfigure}
    \inbetweenfiguresspace
    \begin{subfigure}{\figurewidth}
        \includegraphics[width=\linewidth, clip, trim=0.3cm 0 0 0]{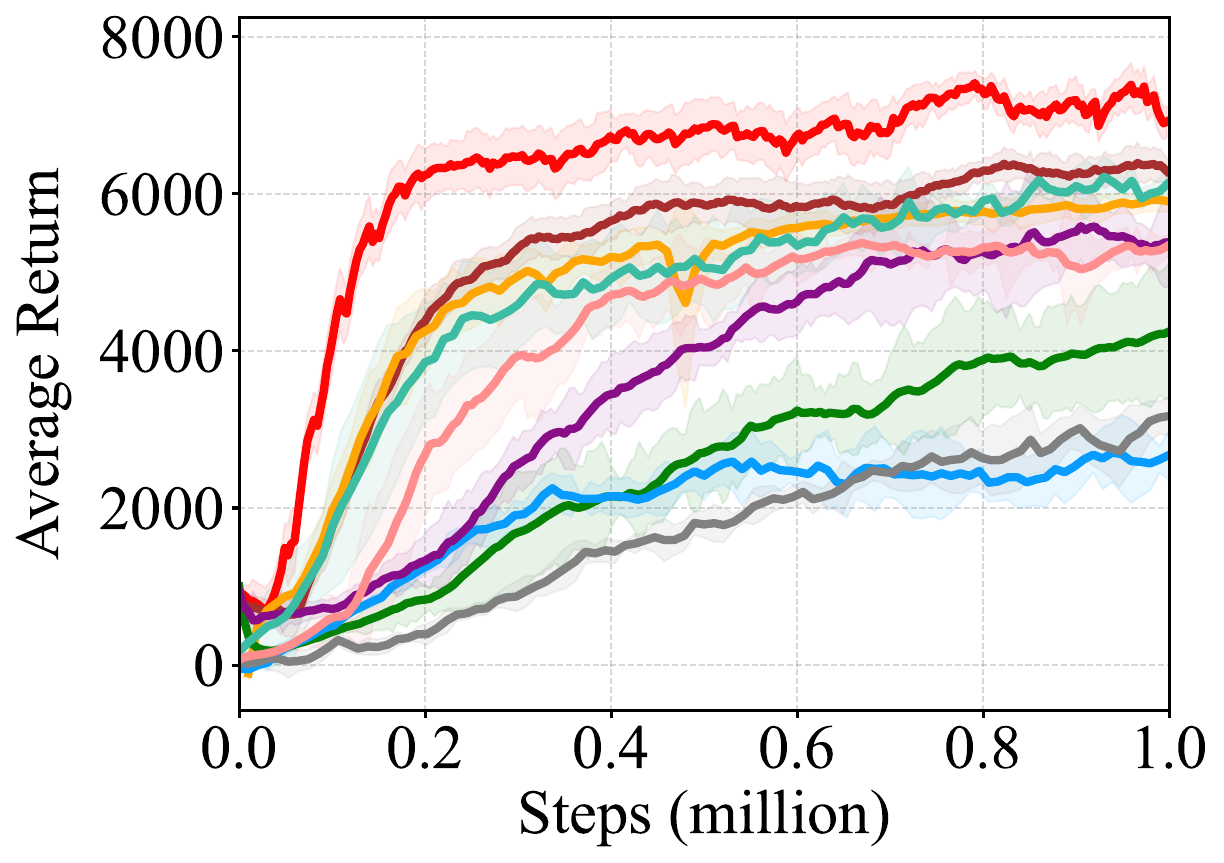}
        \inbetweentitlespace
        \caption{\hspace{15pt} (b) Ant-v3}\label{fig:ant_plot}
        \inbetweenrowsspace
    \end{subfigure}
    \inbetweenfiguresspace
    \begin{subfigure}{\figurewidth}
        \includegraphics[width=\linewidth, clip, trim=0.3cm 0 6 0]{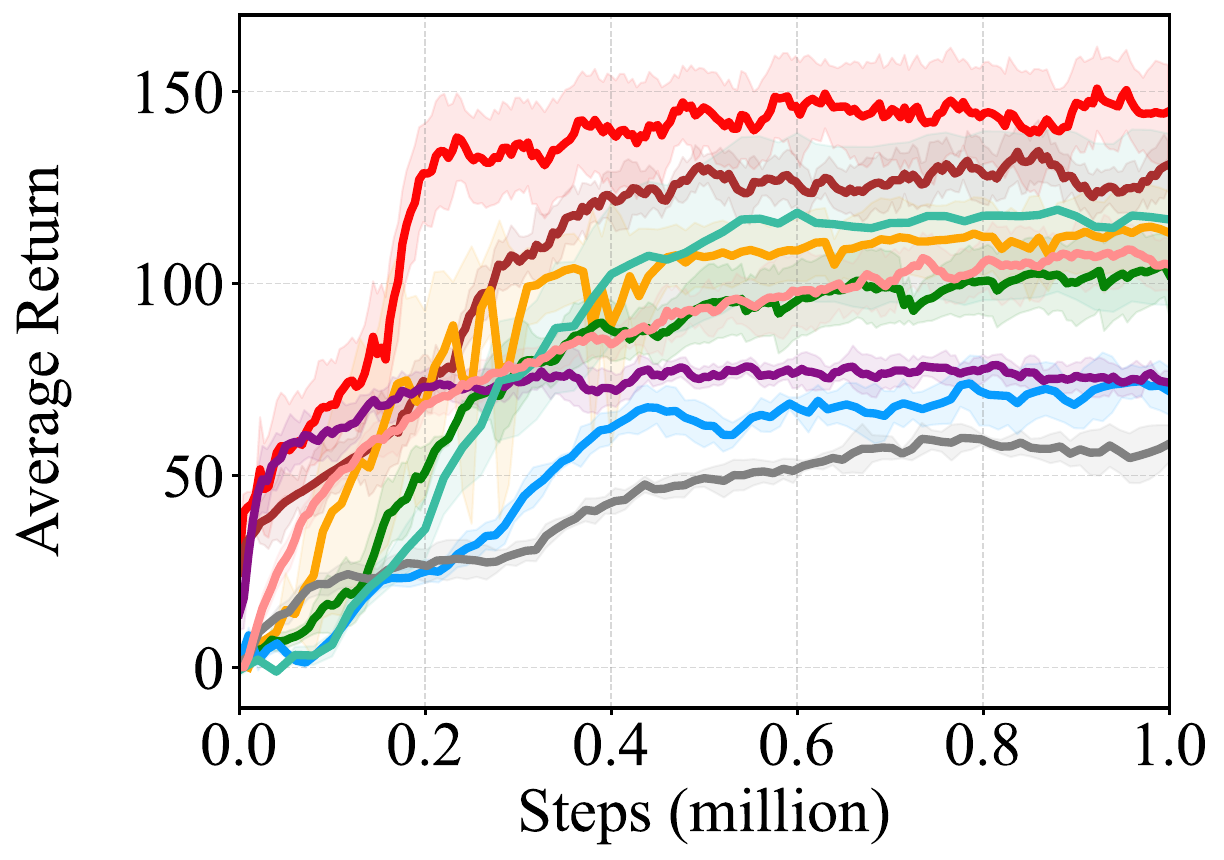}
        \inbetweentitlespace
        \caption{\hspace{17pt} (c) Swimmer-v3}\label{fig:swimmer_plot}
        \inbetweenrowsspace
    \end{subfigure}
    \begin{subfigure}{\figurewidth}
        \includegraphics[width=\linewidth, clip, trim=0 0 0 0]{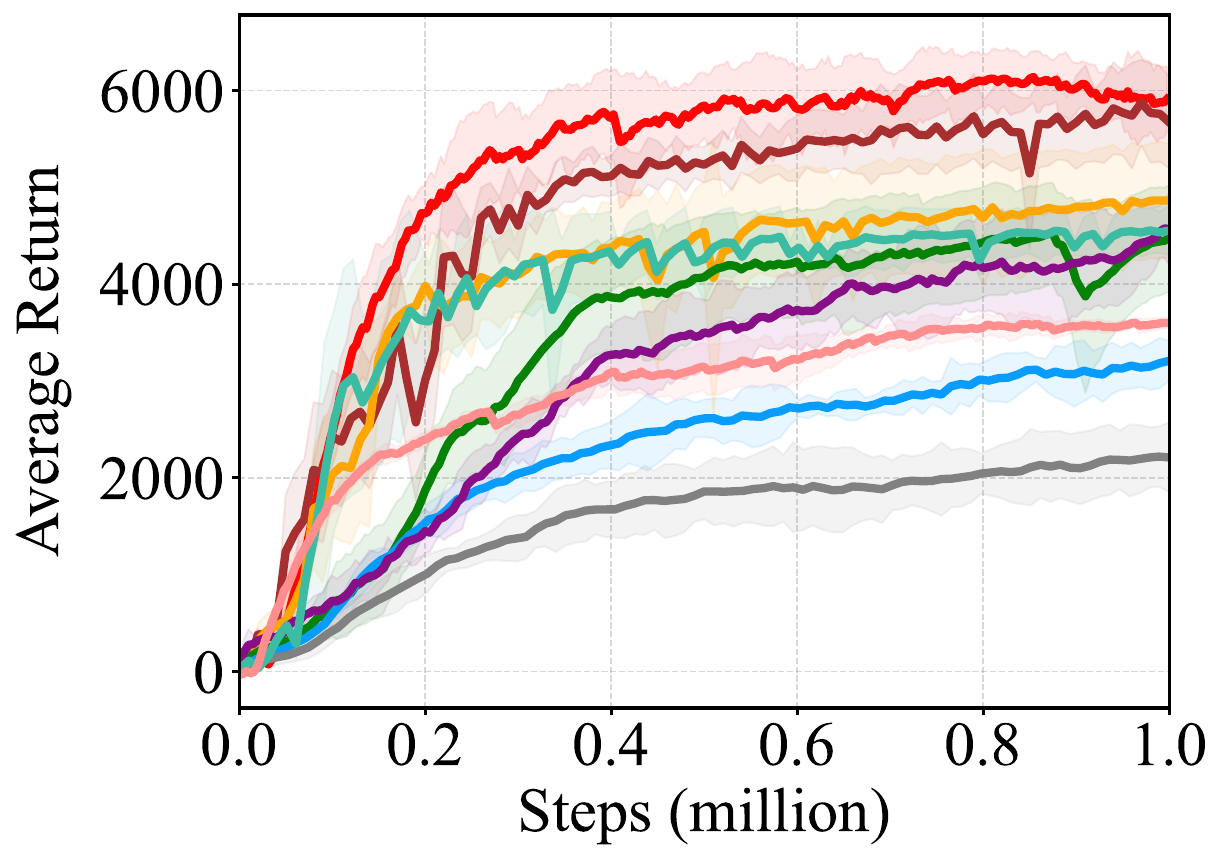}
        \inbetweentitlespace
        \caption{\hspace{15pt} (d) Walker2d-v3}\label{fig:walker2d_plot}
    \end{subfigure}
    \inbetweenfiguresspace
    \begin{subfigure}{\figurewidth}
        \includegraphics[width=\linewidth, clip, trim=0.3cm 0 0 0]{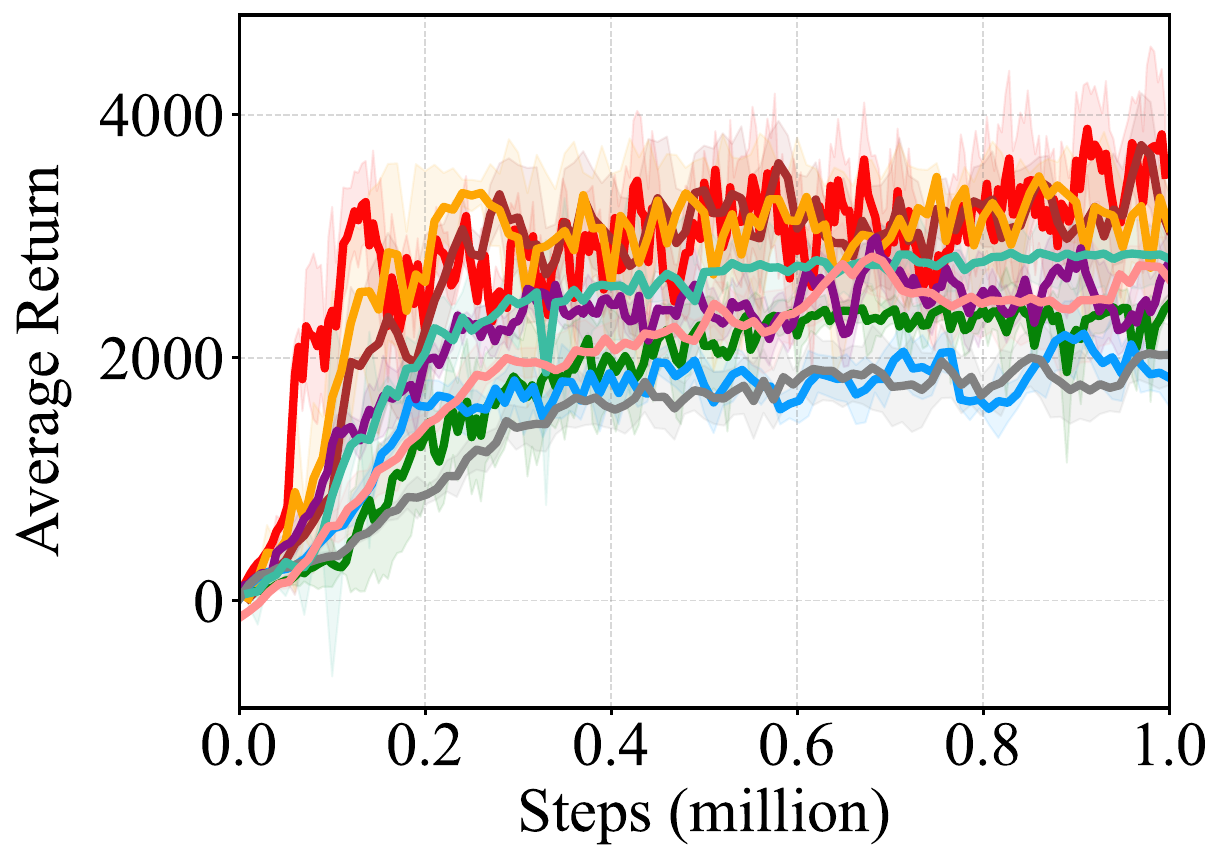}
        \inbetweentitlespace
        \caption{\hspace{10pt} (e) Hopper-v3}\label{fig:hopper_plot}
    \end{subfigure}
    \inbetweenfiguresspace
    \begin{subfigure}{\figurewidth}
        \includegraphics[width=\linewidth, clip, trim=0.3cm 0 0 0]{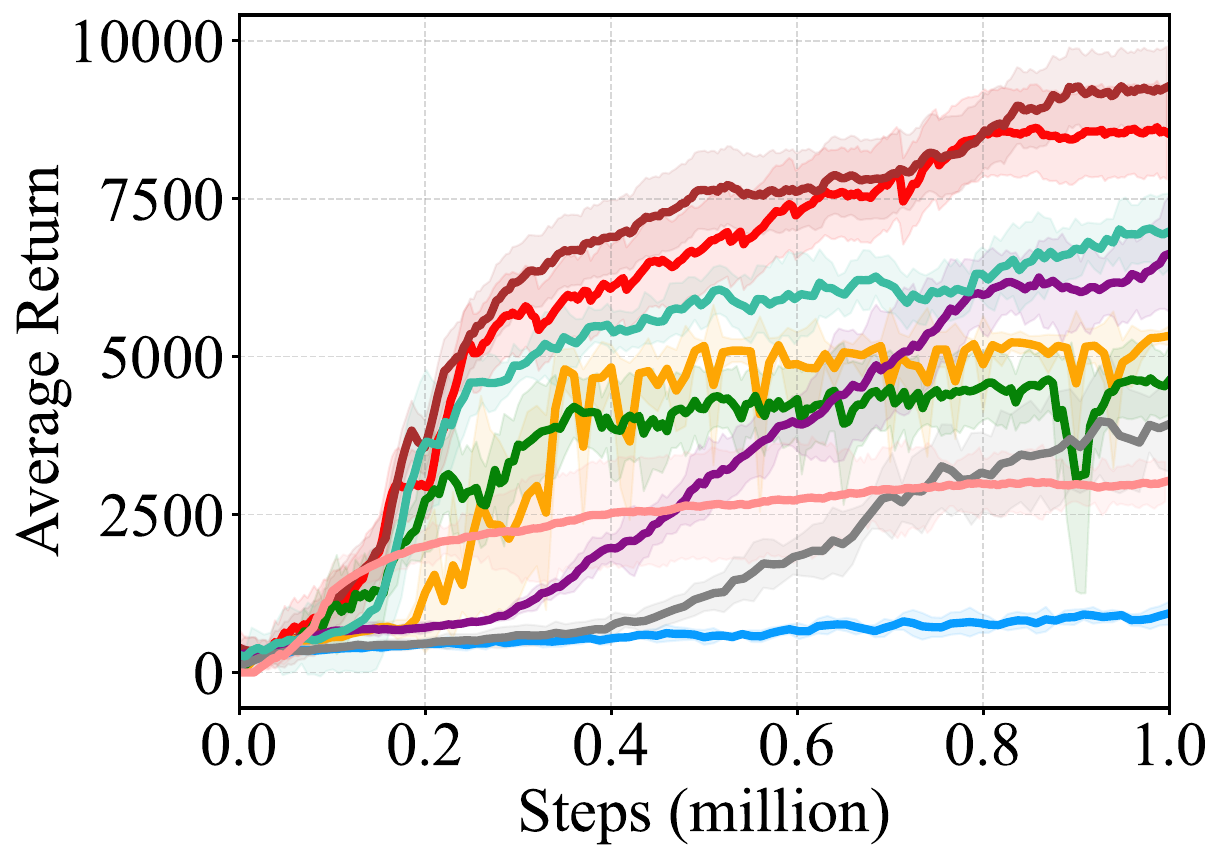}
        \inbetweentitlespace
        \caption{\hspace{15pt} (f) Humanoid-v3}\label{fig:humanoid_plot}
    \end{subfigure}
    \begin{subfigure}{2\figurewidth}
        \inbetweenrowsspace
        \includegraphics[width=\linewidth]{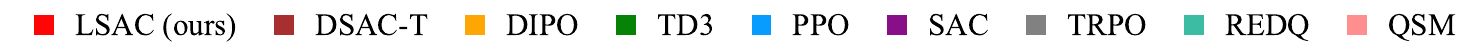}
    \end{subfigure}
    \caption{Training curves for six MuJoCo continuous control tasks over \texttt{1e6} time steps. Results are averaged over a window size of 11 epochs and across 10 seeds. Solid lines represent the median performance, and the shaded regions correspond to 90\% confidence interval.}
    \label{fig:training_curves}
\end{figure}

\begin{table}[th]
\centering
\resizebox{\linewidth}{!}{
\begin{tabular}{ccccccccc}
\toprule
${}_{\text{Environments}}\mkern-6mu\setminus\mkern-6mu{}^{\text{Methods}}$            & LSAC (ours) & DSAC-T & DIPO & SAC & TD3 & PPO & TRPO & REDQ \\ \midrule
HalfCheetah 	& {\bf 17948 $\pm$ 1724} & 12703 $\pm$ 1711 & 9329 $\pm$ 1798 & 10543 $\pm$ 1422 & 9034 $\pm$ 1350 & 6560 $\pm$ 1189 & 
6534 $\pm$ 1345 & 10022 $\pm$ 1298 \\  
Ant	& {\bf 7411 $\pm$ 155} & {6153 $\pm$ 211} & 5459 $\pm$ 163 & 5297 $\pm$ 289 & 4839 $\pm$ 271 & 3055 $\pm$ 131 & 3271 $\pm$ 146 & 6091 $\pm$ 129 \\
Swimmer	& \bf 151 $\pm$ 11 & 129 $\pm$ 9 & 114 $\pm$ 11 & 76 $\pm$ 5 & 102 $\pm$ 10 & 76 $\pm$ 6 & 60 $\pm$ 4 &  134 $\pm$ 22 \\
Walker2d & {\bf 6143 $\pm$ 394} & { 5880 $\pm$ 411} & 4921 $\pm$ 549 & 4535 $\pm$ 402 & 4625 $\pm$ 399 & 3182 $\pm$ 233 & 2228 $\pm$ 302 & 4598 $\pm$ 318\\
Hopper & {\bf 3839 $\pm$ 537} &  3327 $\pm$ 588 & { 3138 $\pm$ 731} & 2919 $\pm$ 165 & 2604 $\pm$ 140 & 2315 $\pm$ 152 & 2096 $\pm$ 201 & {3002 $\pm$ 512}\\
Humanoid & { 8545 $\pm$ 740} & {\bf 9028 $\pm$ 792} & 5012 $\pm$ 811 & 6807 $\pm$ 734 & 4455 $\pm$ 820 & 1018 $\pm$ 102 & 4459 $\pm$ 564 & { 7213 $\pm$ 621}\\
\bottomrule
\end{tabular}
}
\caption{Maximum Average Return across 10 seeds over \texttt{1e6} time steps. Maximum value and corresponding 90\% confidence interval for each task are shown in bold.}
\label{table:results}
\end{table}

\paragraph{Sensitivity Analysis.} In Figure \ref{fig:tuning}, we present the learning curves of LSAC for different values of learning rates $\eta_Q \in \{10^{-2},10^{-3}, 3\times 10^{-4},10^{-4}\}$, inverse temperature $\beta_Q \in \{10^{5}, 10^{6}, 10^{7},10^{8},10^{9}\}$, and bias factor $a \in \{10, 1, 0.1, 0.01\}$.  We observe that our algorithm is most sensitive to the  step size $\eta_Q$ in the LMC update and the bias factor $a$ from \Eqref{eq:lmc}. On the contrary, LSAC is less sensitive to the choice of the inverse temperature $\beta_Q$.

\begin{figure}[H]
    \def\inbetweenfiguresspace{\hspace{-3pt}} %
    \def\inbetweentitlespace{\vspace{-15pt}} %
    \def\inbetweenrowsspace{\vspace{5pt}} %
    \centering
    \captionsetup[subfigure]{labelformat=empty}
    \begin{subfigure}{\figurewidth}
        \includegraphics[width=\linewidth]{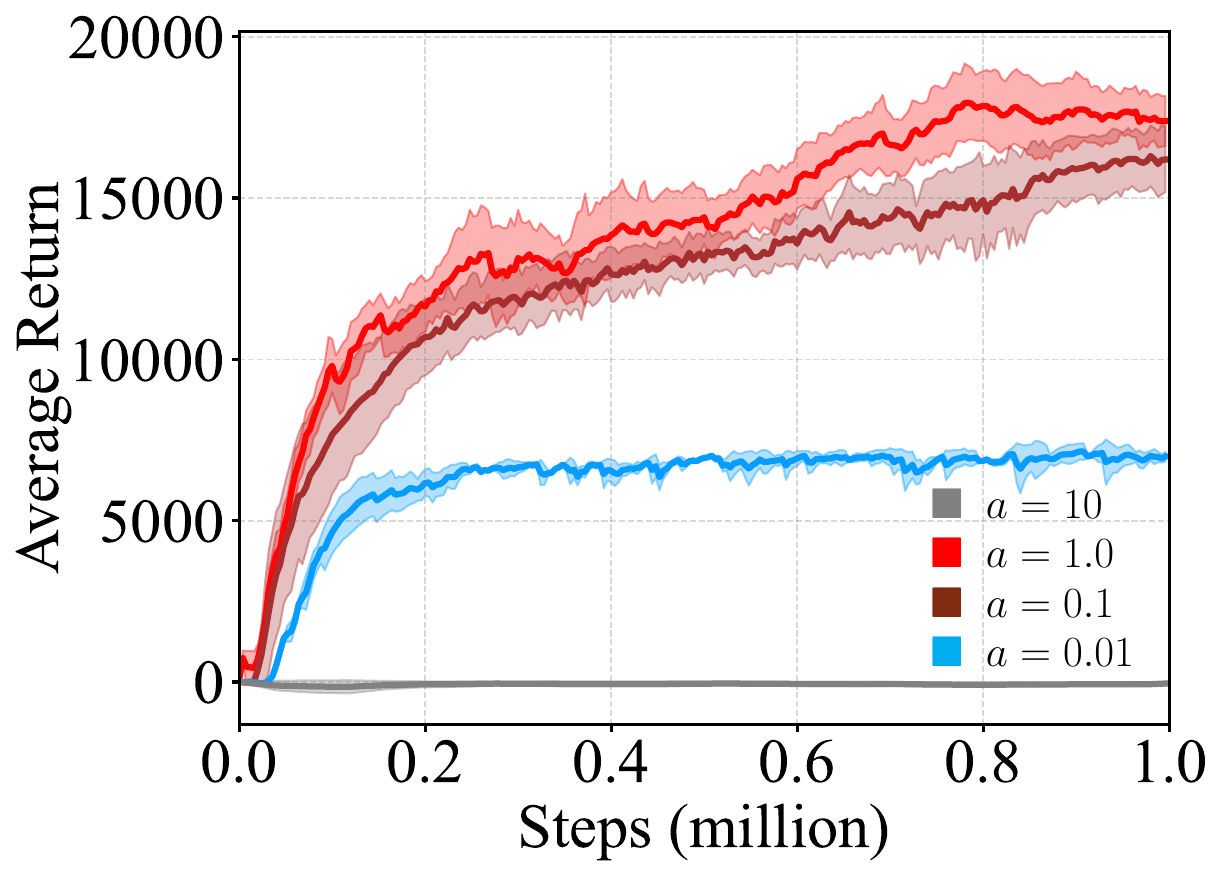}
        \inbetweentitlespace
        \caption{\hspace{10pt} (a) Bias factor $a$}\label{fig:asgld_tune_a}
    \end{subfigure}
    \inbetweenfiguresspace
    \begin{subfigure}{\figurewidth}
        \includegraphics[width=\linewidth]{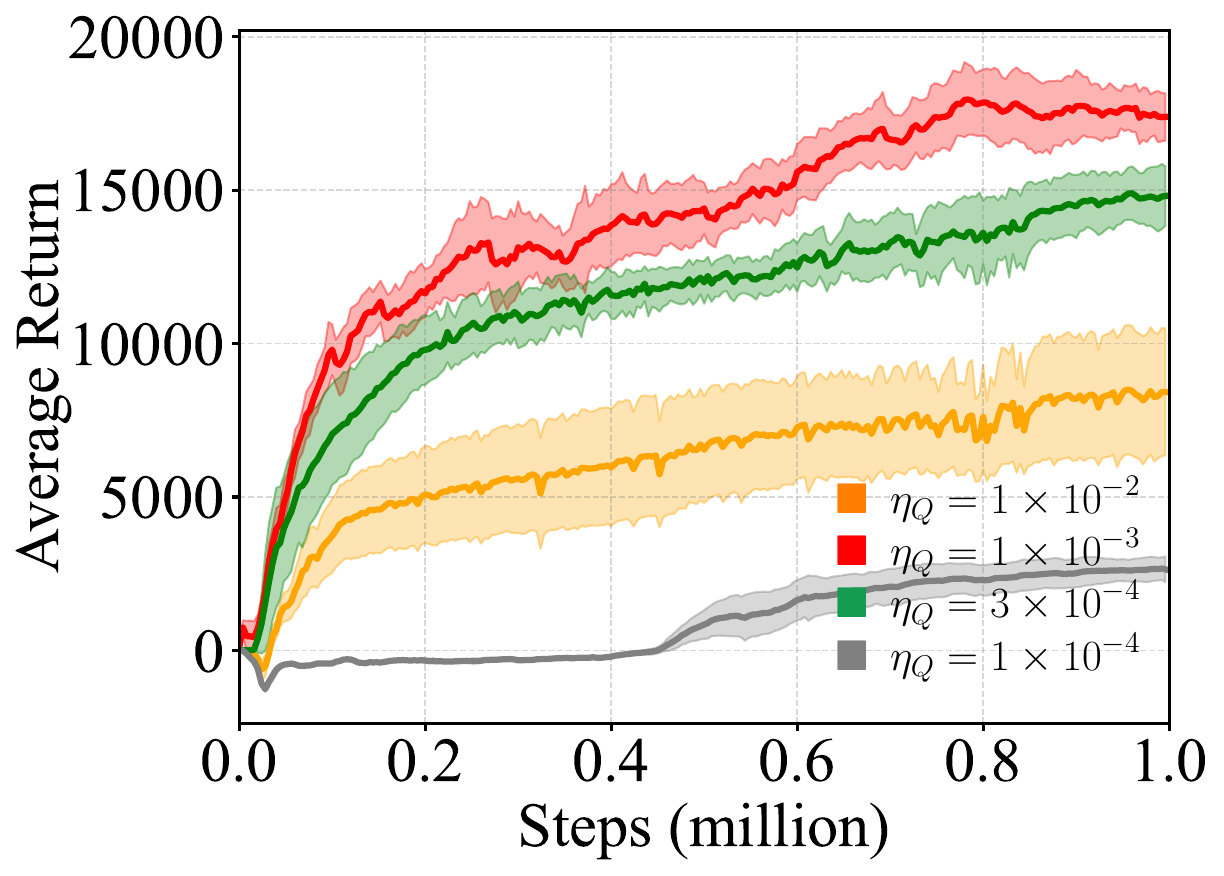}
        \inbetweentitlespace
        \caption{\hspace{15pt} (b) Learning rate $\eta_Q$}\label{fig:asgld_tune_lr}
    \end{subfigure}
    \inbetweenfiguresspace
    \begin{subfigure}{\figurewidth}
        \includegraphics[width=\linewidth]{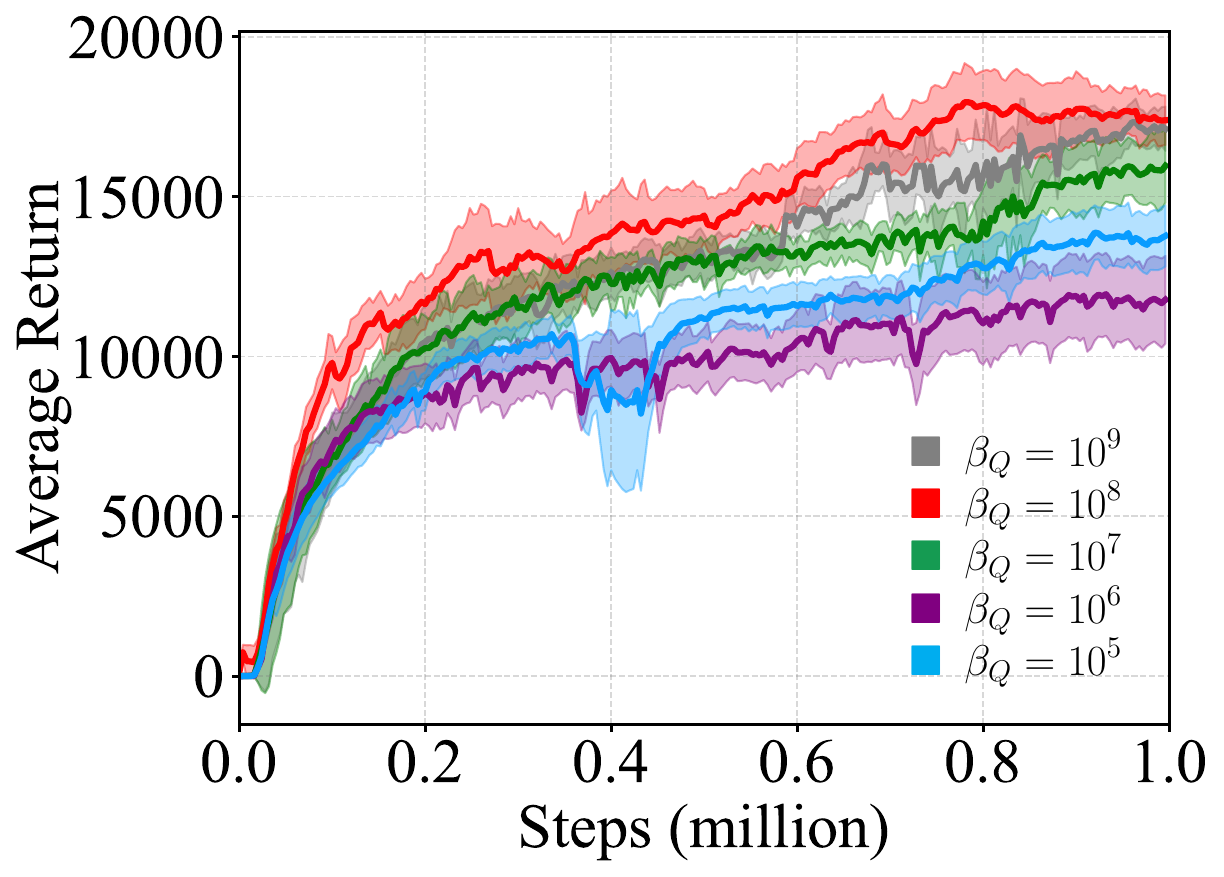}
        \inbetweentitlespace
        \caption{\hspace{15pt} (c) Inverse temperature $\beta_Q$}\label{fig:asgld_tune_noise}
    \end{subfigure}
    \caption{Sensitivity analysis of different parameters on HalfCheetah-v3 environment. A comparison of LSAC with different bias factors $a$, step sizes $\eta_Q$, and inverse temperature parameters $\beta_Q$.
    }
    \label{fig:tuning}
\end{figure}

We now present a comprehensive ablation analysis of LSAC by systematically removing individual algorithmic contributions while maintaining optimal parameters for the remaining components. We refer the readers to Table \ref{table:model_hyperparams} for a complete list of hyperparameters used for each model. 

\paragraph{Impact of distributional critic on performance and overestimation bias.}
To understand the impact of distributional critic, we run ablation studies where we replace our distributional critic with a standard critic implementation in SAC \citep{haarnoja2018soft}. \Cref{fig:distributional_ablation} shows that the performance of LSAC declines significantly when distributional critic is replaced by standard critic. To find out what might be driving such performance gap, following the same evaluation protocol as \citet{chen2021randomized}, we compare the normalized $Q$ estimation biases in \Cref{fig:distributional_ablation_qbias}. We observe that  throughout most
of training, LSAC with distributional critic has a much smaller and often near-constant under-estimation bias compared to LSAC with standard critic. It indicates that distributional critic allows more stable learning and increased performance by lowering $Q$ estimation bias.

\begin{figure}[H]
    \def\inbetweenfiguresspace{\hspace{-3pt}} %
    \def\inbetweentitlespace{\vspace{-15pt}} %
    \def\inbetweenrowsspace{\vspace{5pt}} %
    \centering
    \captionsetup[subfigure]{labelformat=empty}
    \begin{subfigure}{\figurewidth}
        \includegraphics[width=\linewidth, clip, trim=0 0 0 0]{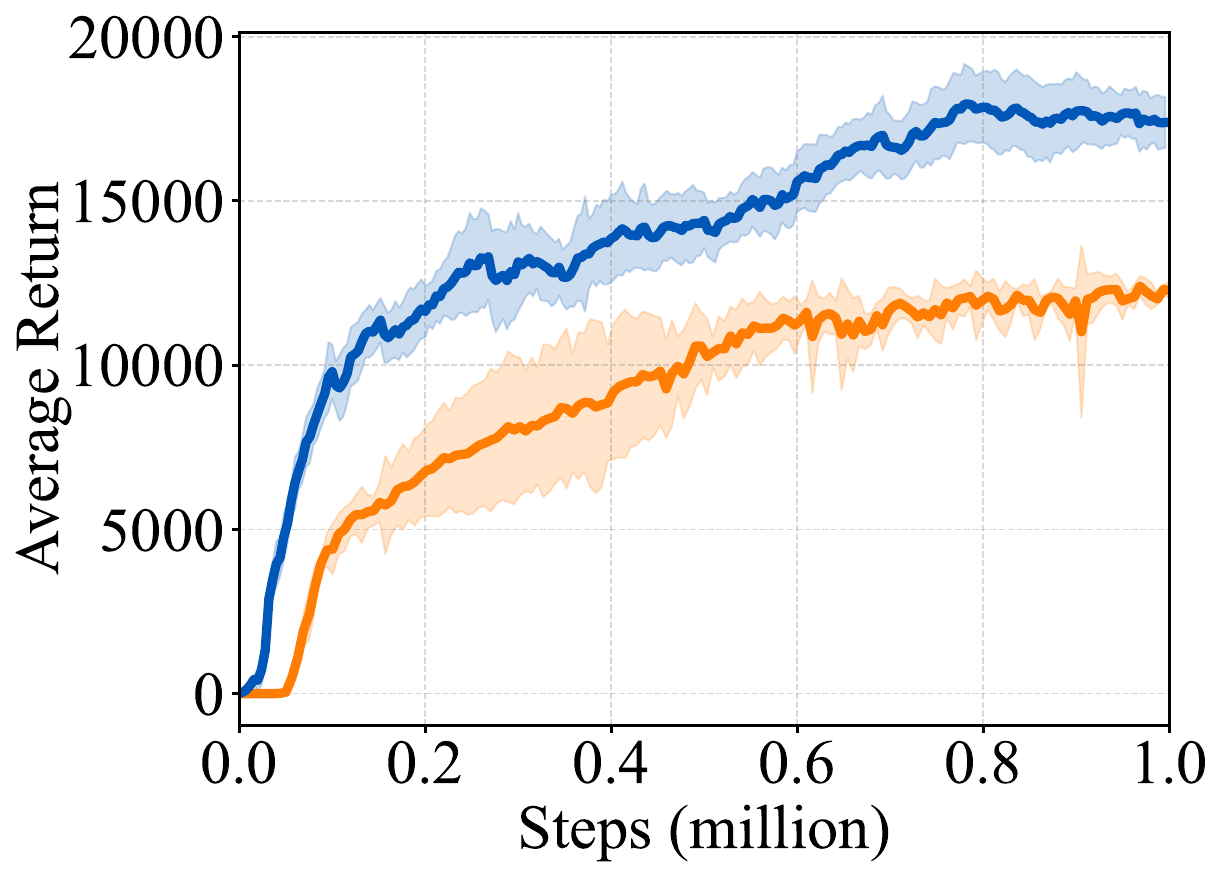}
        \inbetweentitlespace
        \caption{\hspace{15pt} (a) Halfcheetah-v3}\label{fig:distri_ablation_cheetah}
    \end{subfigure}
    \inbetweenfiguresspace
    \begin{subfigure}{\figurewidth}
        \includegraphics[width=\linewidth, clip, trim=0 0 0 0]{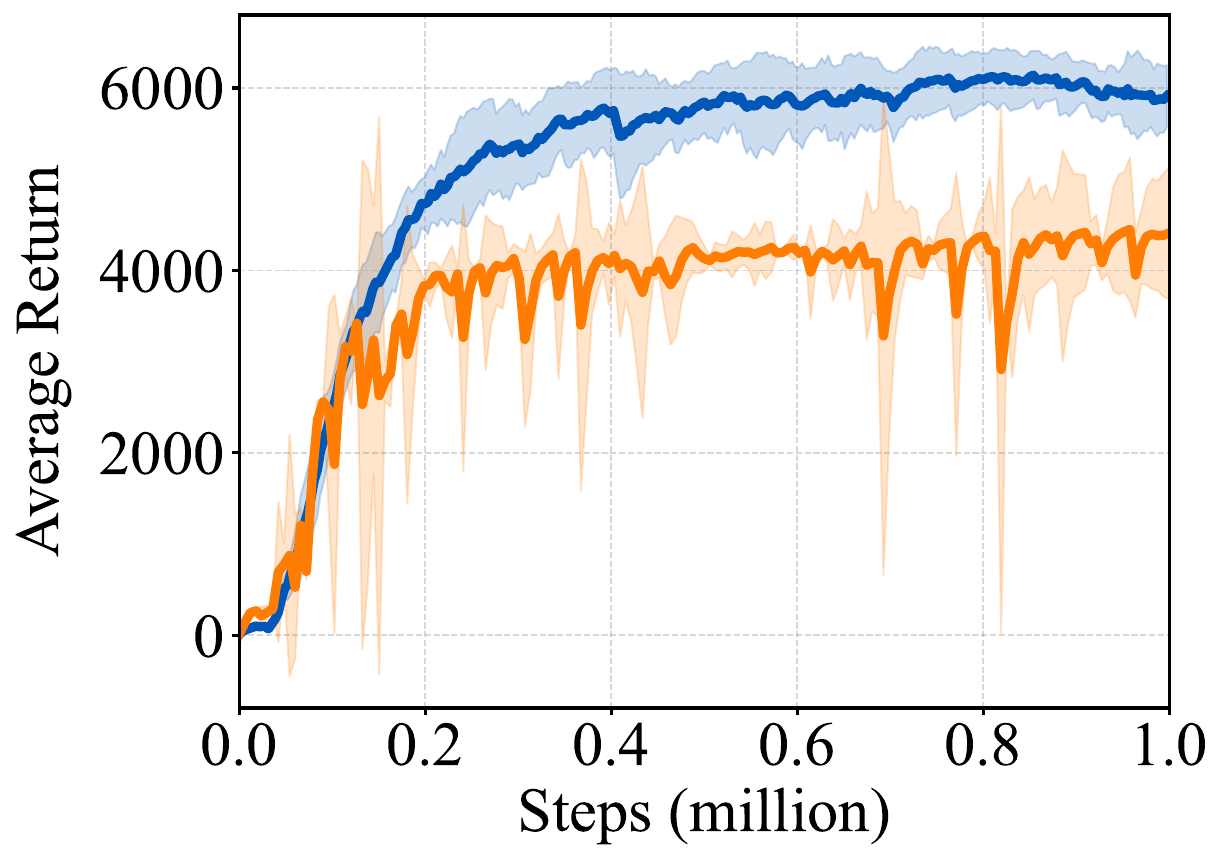}
        \inbetweentitlespace
        \caption{\hspace{15pt} (b) Walker2d-v3}\label{fig:distri_ablation_walker2d}
    \end{subfigure}
    \inbetweenfiguresspace
    \begin{subfigure}{\figurewidth}
        \includegraphics[width=\linewidth, clip, trim=0.3cm 0 6 0]{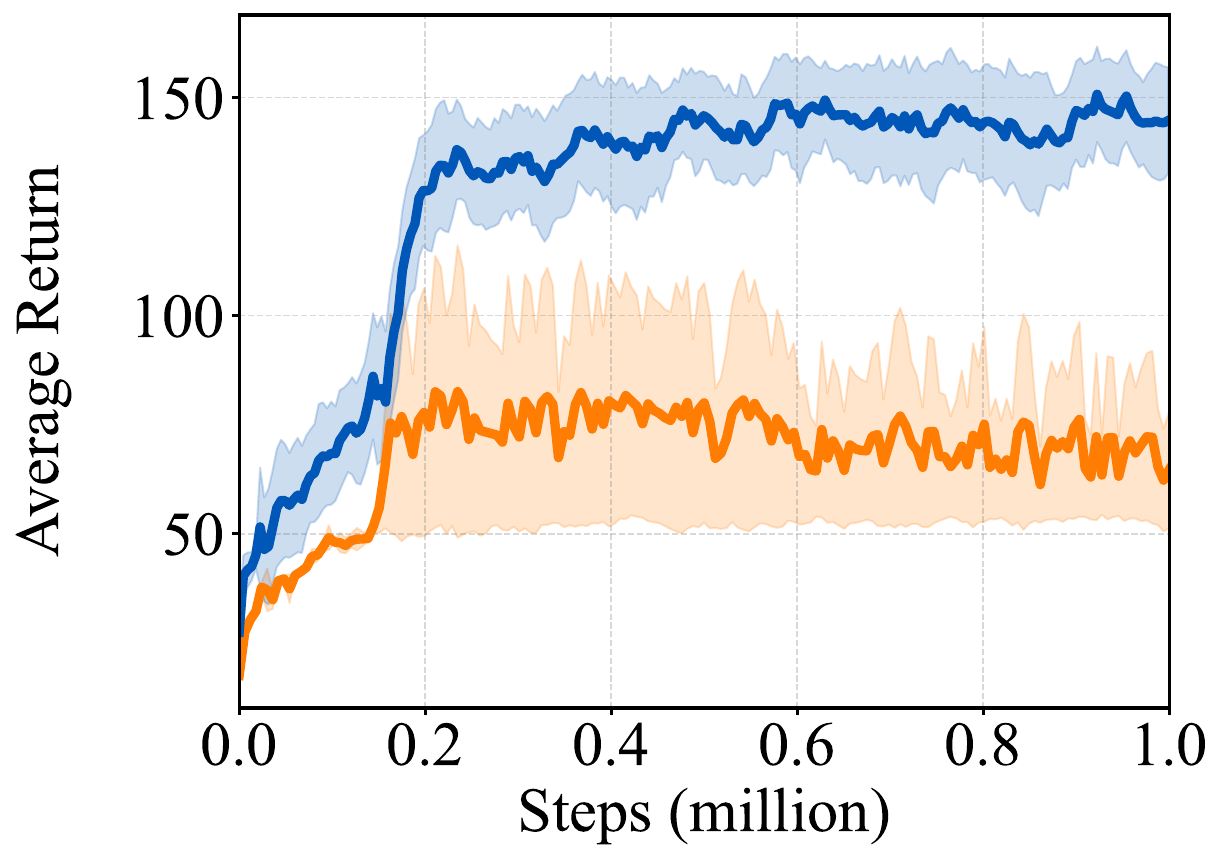}
        \inbetweentitlespace
        \caption{\hspace{17pt} (c) Swimmer-v3}\label{fig:distri_ablation_swimmer}
    \end{subfigure}
    \begin{subfigure}{1.5\figurewidth}
        \includegraphics[width=\linewidth]{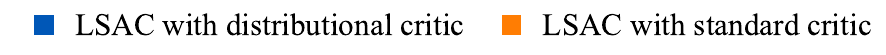}
    \end{subfigure}
    \caption{Ablation on MuJoCo environments comparing the replacement of the distributional critic component with a standard critic. LSAC with distributional critic is more performant than the variant where standard critic is used.}
    \label{fig:distributional_ablation}
\end{figure}

\begin{figure}[h]
    \def\inbetweenfiguresspace{\hspace{-3pt}} %
    \def\inbetweentitlespace{\vspace{-15pt}} %
    \def\inbetweenrowsspace{\vspace{5pt}} %
    \centering
    \captionsetup[subfigure]{labelformat=empty}
    \begin{subfigure}{\figurewidth}
        \includegraphics[width=\linewidth, clip, trim=0 0 0 0]{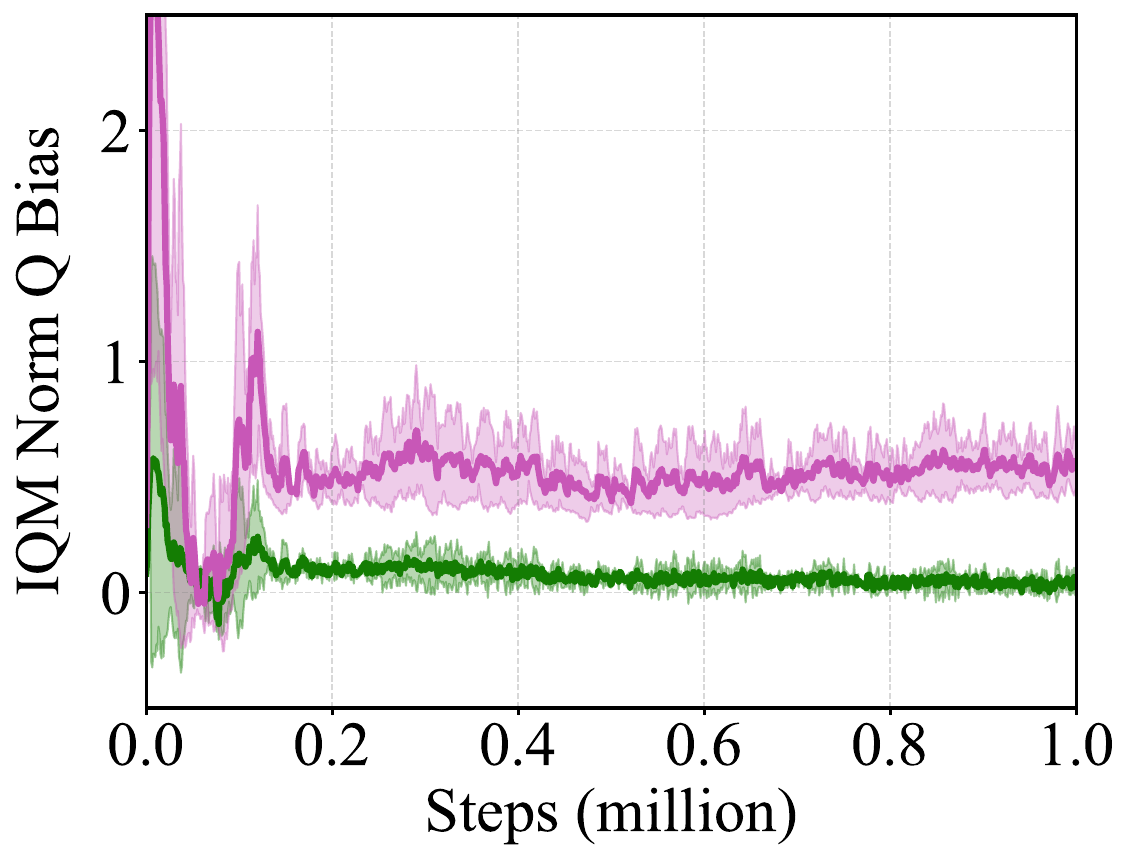}
        \inbetweentitlespace
        \caption{\hspace{15pt} (a) HalfCheetah-v3}\label{fig:qbias_distri_ablation_cheetah}
    \end{subfigure}
    \inbetweenfiguresspace
    \begin{subfigure}{\figurewidth}
        \includegraphics[width=\linewidth, clip, trim=0 0 0 0]{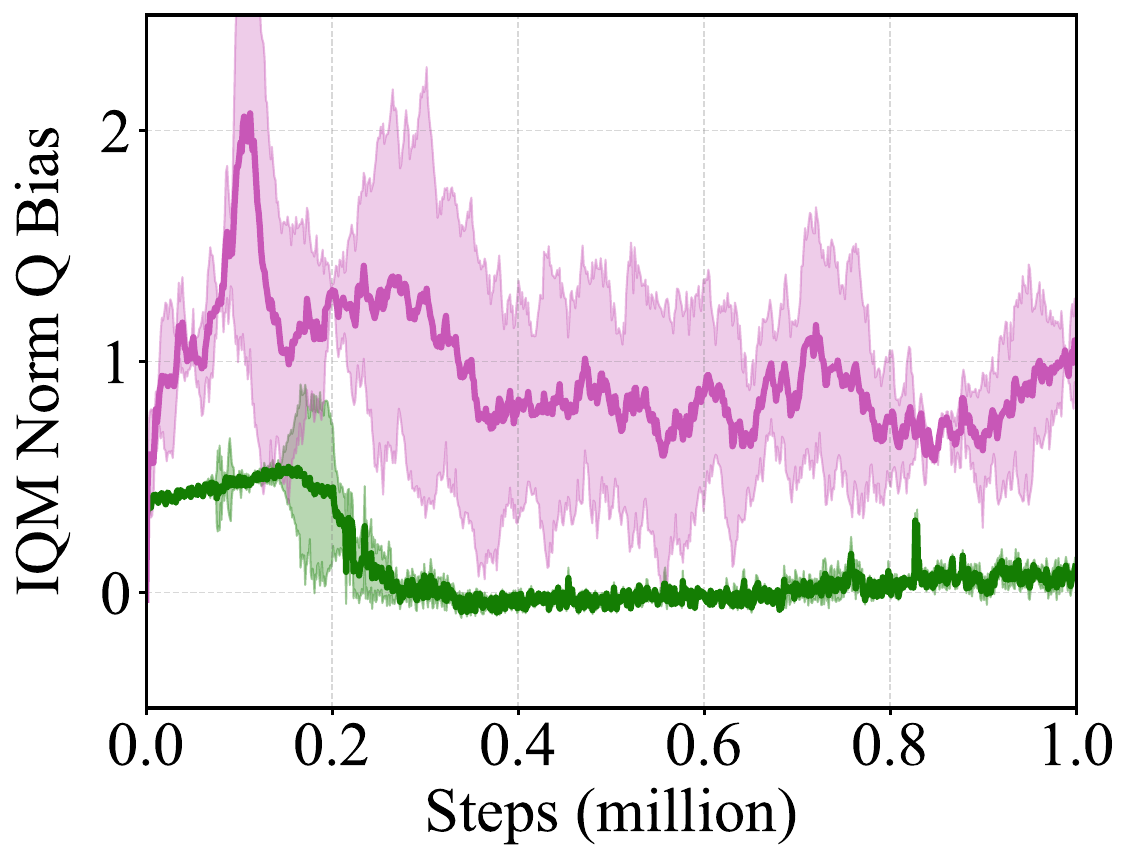}
        \inbetweentitlespace
        \caption{\hspace{15pt} (b) Walker2d-v3}\label{fig:qbias_distri_ablation_walker}
    \end{subfigure}
    \inbetweenfiguresspace
    \begin{subfigure}{\figurewidth}
        \includegraphics[width=\linewidth, clip, trim=0 0 6 0]{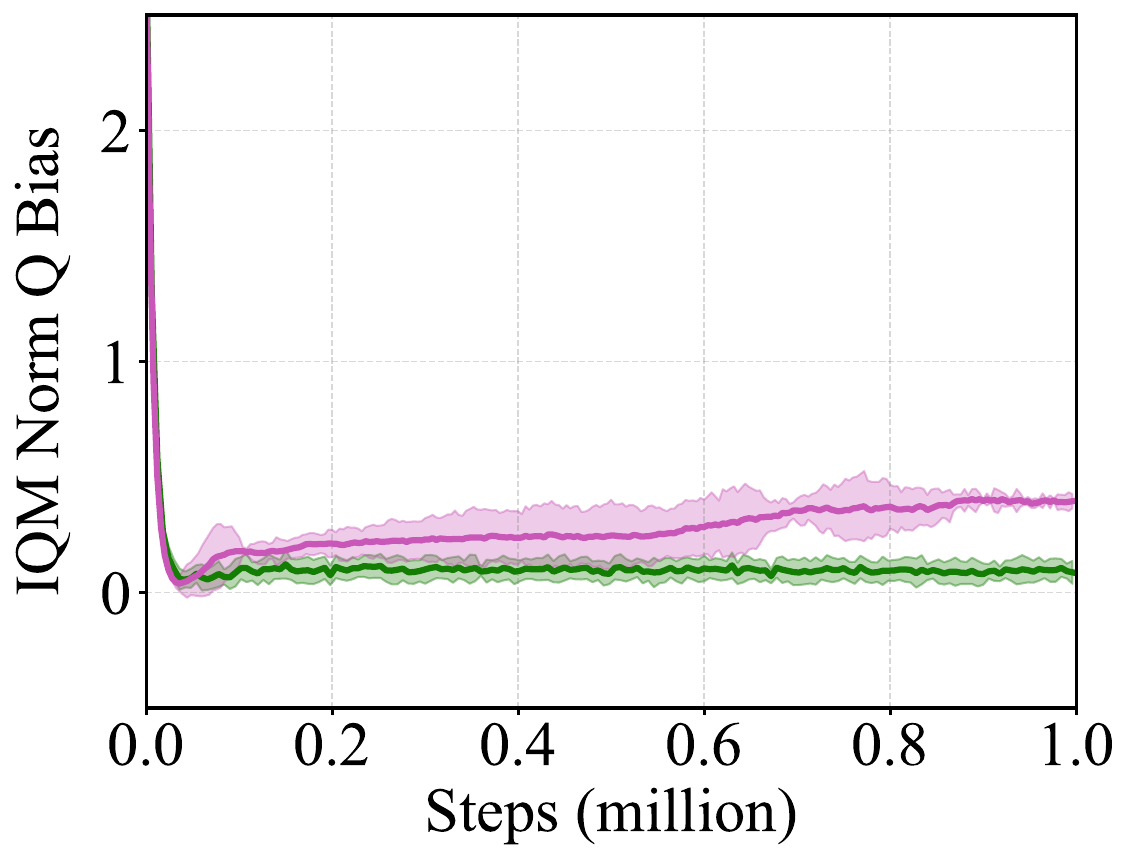}
        \inbetweentitlespace
        \caption{\hspace{17pt} (c) Swimmer-v3}\label{fig:qbias_distri_ablation_swimmer}
    \end{subfigure}
    \begin{subfigure}{1.5\figurewidth}
        \includegraphics[width=\linewidth]{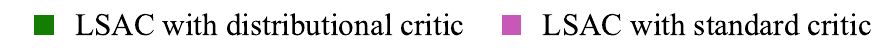}
    \end{subfigure}
    \caption{Normalized $Q$ bias plots for ablation study of the distributional critic component in LSAC. The $Q$ bias value is estimated using the Monte Carlo return over \texttt{1e3} episodes on-policy, starting from states sampled in the replay buffer.
    }
    \label{fig:distributional_ablation_qbias}
\end{figure}

\paragraph{Usefulness of the synthetic experience replay and action gradient ascent.}
Figure \ref{fig:diff_ablat_cheetah} indicates that the performance of LSAC experiences only a marginal decline in HalfCheetah-v3 when the diffusion $Q$ action gradient is excluded. However, a more pronounced drop is observed in Ant-v3 (Figure \ref{fig:diff_ablat_ant}) and Swimmer (Figure \ref{fig:diff_ablat_swimmer}) when action gradient is excluded. 

\paragraph{Number of parallel critics. }
In Figure \ref{fig:ablation_multimodal},  we observe that when LSAC is equipped with too few or too many parallel critics $|\Psi_Q|$, the performance drops. This is due to when the parallel critic number is too low, the LMC sampler cannot explore different modes of the posterior distribution. On the other hand, when the critic number is high, it may hamper the actor learning as during each policy update it may encounter some critics only very few times due to uniform sampling of the critic. This may cause drop in the performance. 

\paragraph{Usefulness of aSGLD sampler.}
In Figure \ref{fig:ablation_asgld}, we observe that approximate Thompson sampling through aSGLD sampler boosts the performance compared to when the critics are simply trained with the Adam \citep{kingma2014adam} optimizer. When only Adam is used, the collection of critics can be thought of as an ensemble akin to bootstrapped DQN \citep{osband2016deep}.

\begin{figure}[H]
    \def\inbetweenfiguresspace{\hspace{-3pt}} %
    \def\inbetweentitlespace{\vspace{-15pt}} %
    \def\inbetweenrowsspace{\vspace{5pt}} %
    \centering  
    \captionsetup[subfigure]{labelformat=empty}
    \begin{subfigure}{\figurewidth}
        \includegraphics[width=\linewidth]{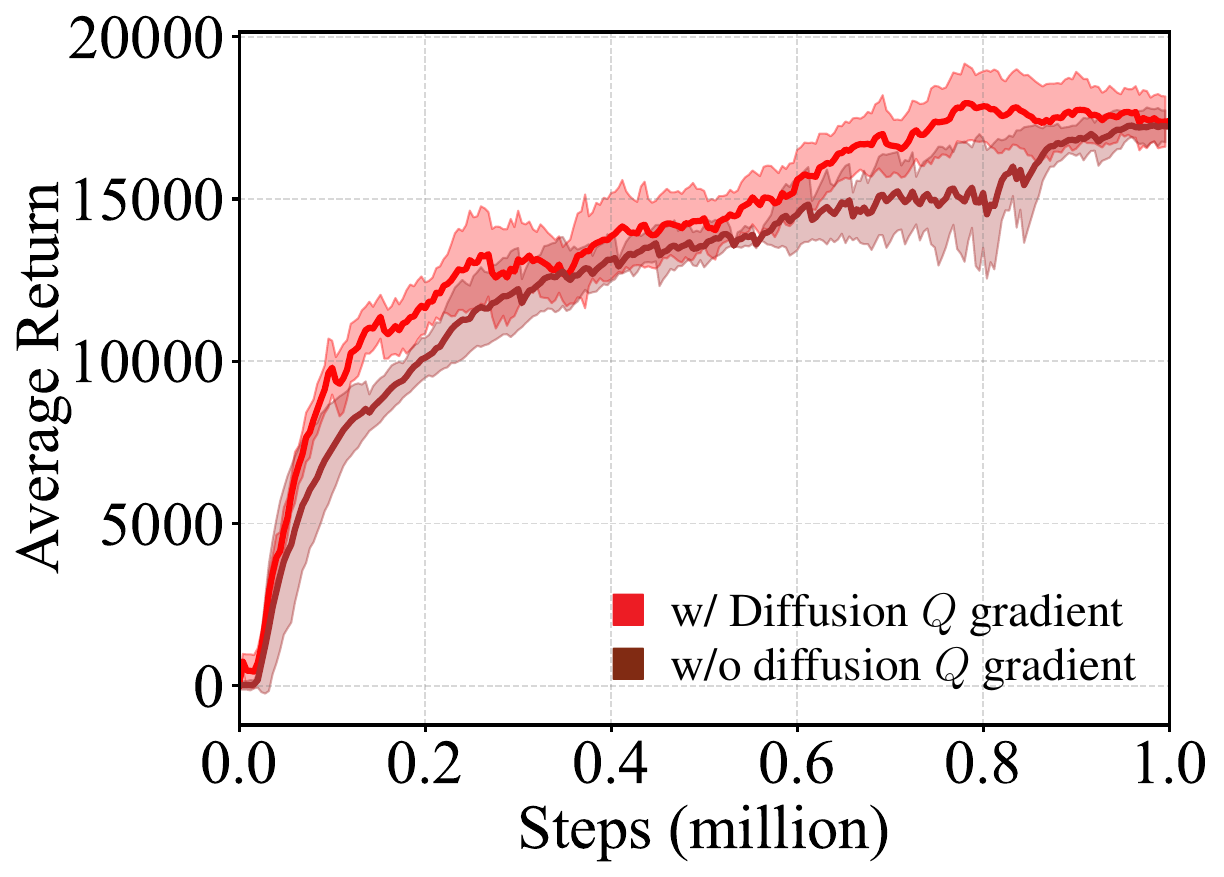}
        \inbetweentitlespace
        \caption{\hspace{15pt} (a) HalfCheetah-v3}\label{fig:diff_ablat_cheetah}
        \inbetweenrowsspace
    \end{subfigure}
    \inbetweenfiguresspace
    \begin{subfigure}{\figurewidth}
        \includegraphics[width=\linewidth]{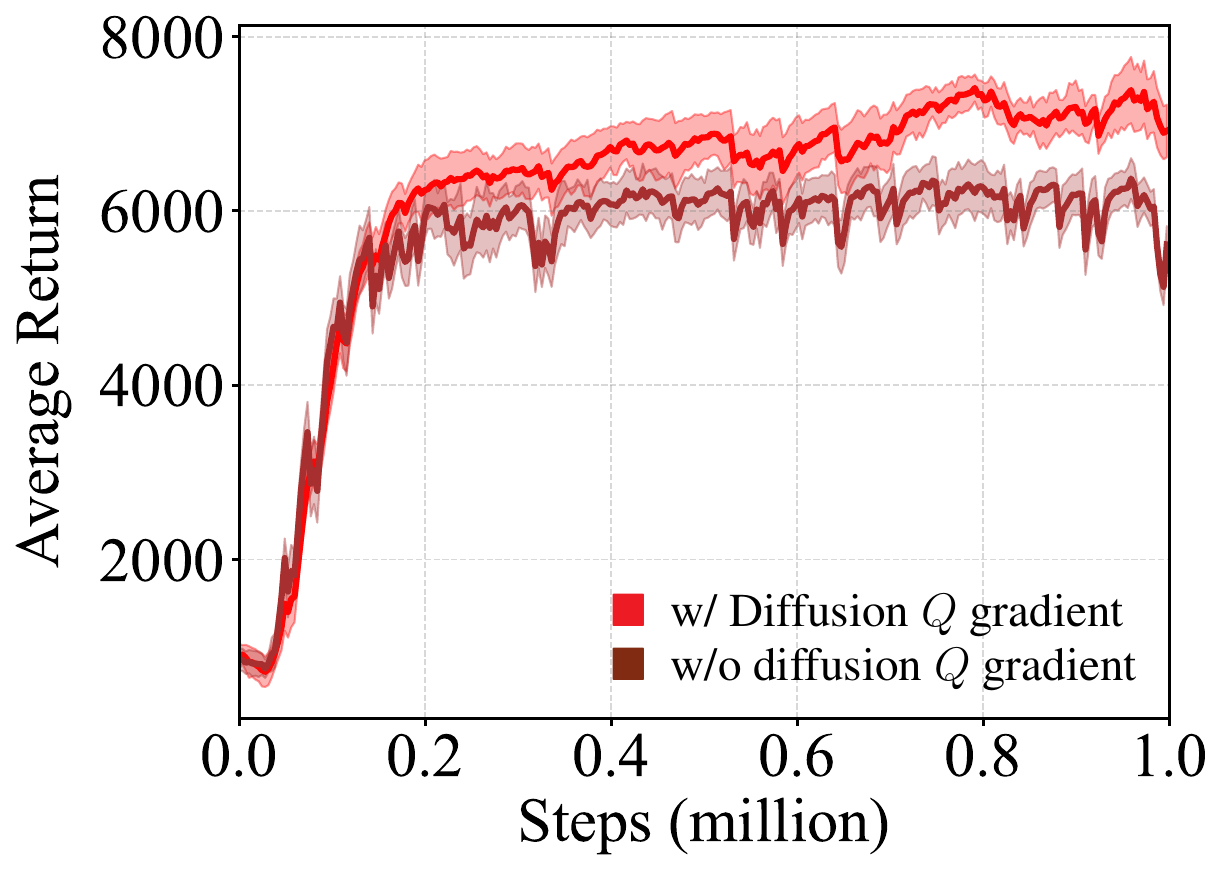}
        \inbetweentitlespace
        \caption{\hspace{17pt} (b) Ant-v3}\label{fig:diff_ablat_ant}
        \inbetweenrowsspace
    \end{subfigure}
    \inbetweenfiguresspace
    \begin{subfigure}{\figurewidth}
        \includegraphics[width=\linewidth]{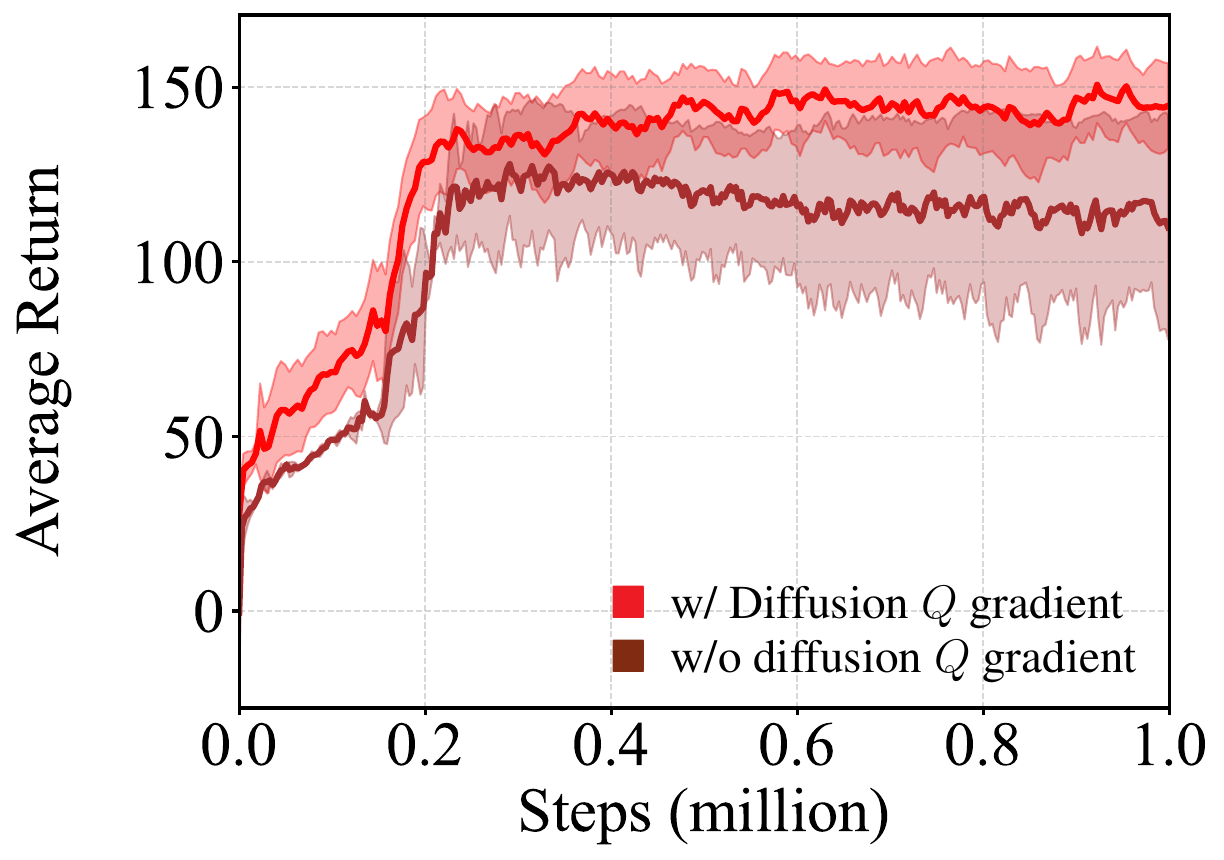}
        \inbetweentitlespace
        \caption{\hspace{15pt} (c) Swimmer}\label{fig:diff_ablat_swimmer}
        \inbetweenrowsspace
    \end{subfigure}
    \caption{Ablation study of $Q$ action gradient regularization of synthetic state-action samples on the effect of average return in three MuJoCo environments.}
    \label{fig:ablation}
\end{figure}

\begin{figure}[h]
    \def\inbetweenfiguresspace{\hspace{-3pt}} %
    \def\inbetweentitlespace{\vspace{-15pt}} %
    \def\inbetweenrowsspace{\vspace{5pt}} %
    \centering
    \captionsetup[subfigure]{labelformat=empty}
    \begin{subfigure}{\figurewidth}
        \includegraphics[width=\linewidth]{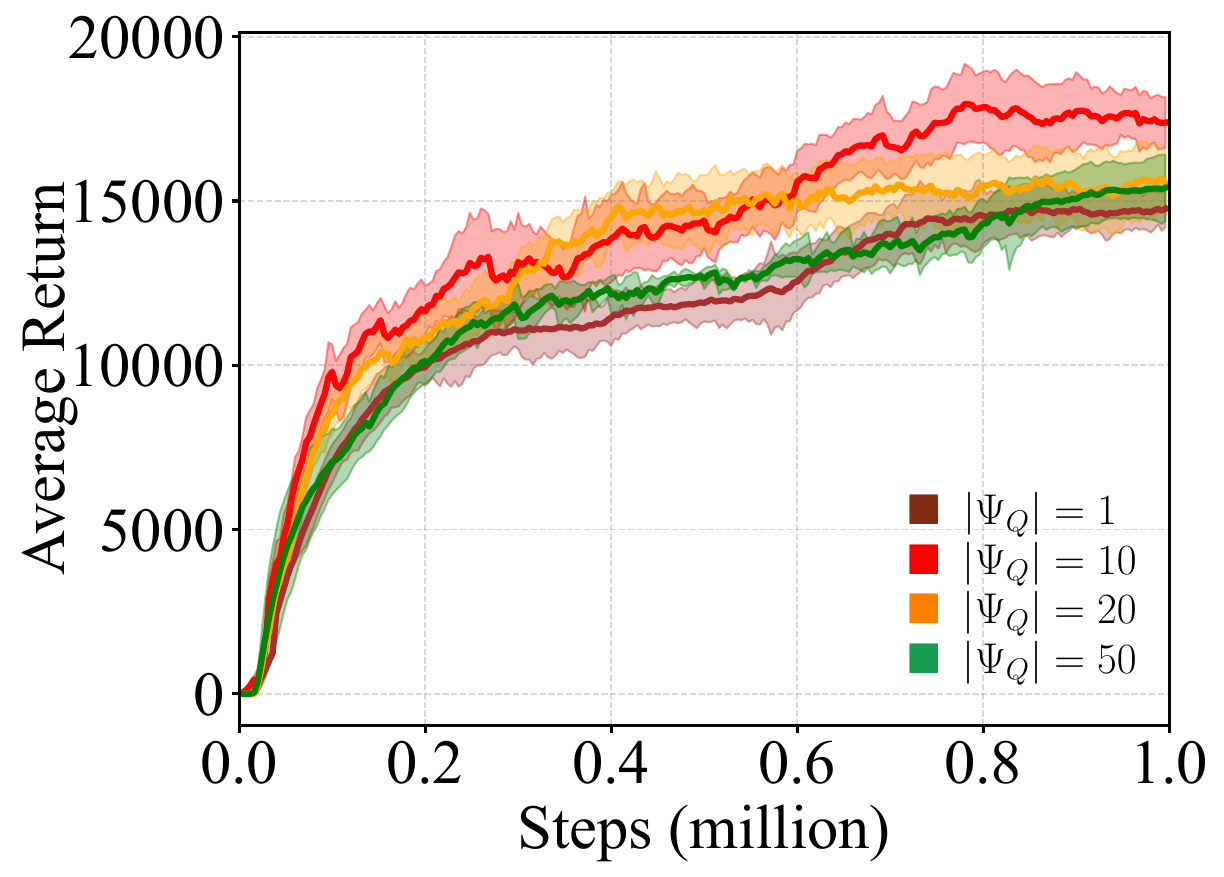}
        \inbetweentitlespace
        \caption{(a) Number of parallel critics}\label{fig:ablation_multimodal}
        \inbetweenrowsspace
    \end{subfigure}
    \inbetweenfiguresspace
    \begin{subfigure}{\figurewidth}
        \includegraphics[width=\linewidth]{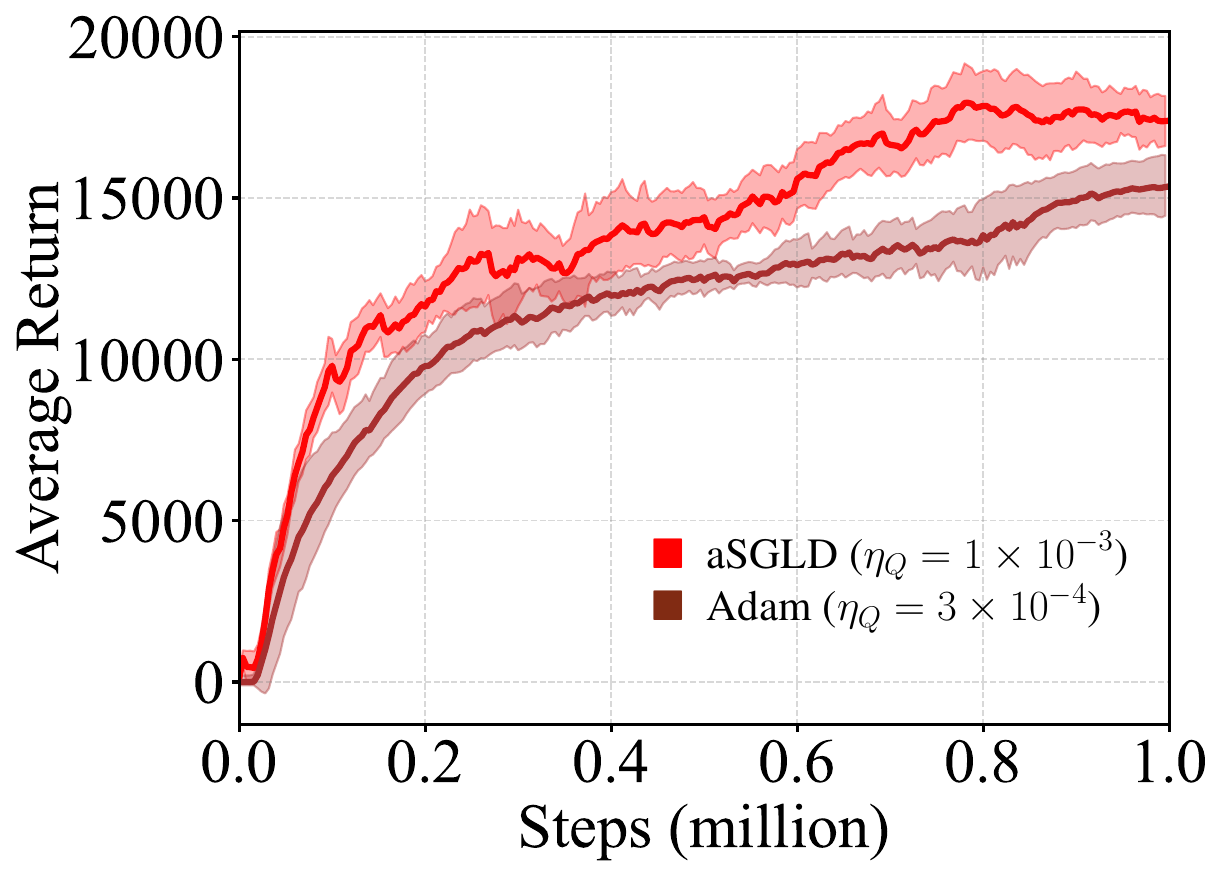}
        \inbetweentitlespace
        \caption{\hspace{10pt} (b) Different $Q$ samplers}\label{fig:ablation_asgld}
        \inbetweenrowsspace
    \end{subfigure}
    \inbetweenfiguresspace
    \begin{subfigure}{0.9\figurewidth}
        \includegraphics[width=\linewidth]{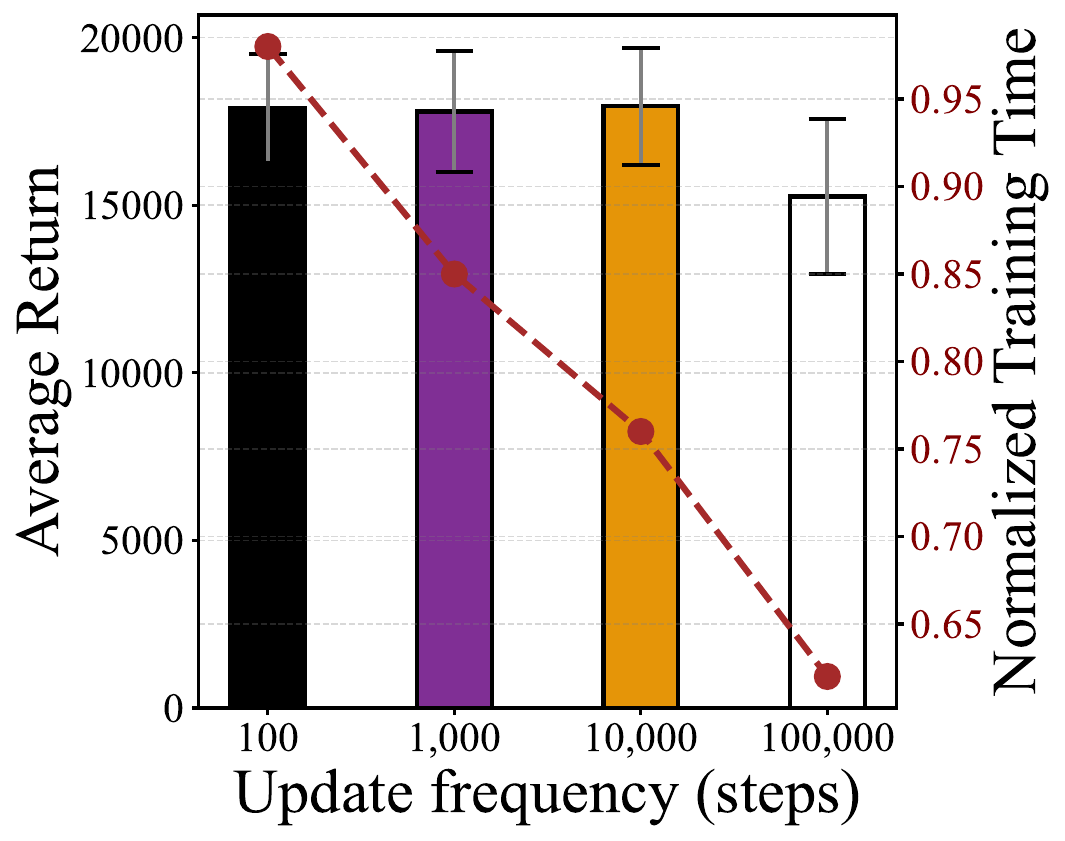}
        \inbetweentitlespace
        \caption{(c) Update frequencies of $\mathcal M$}\label{fig:cheetah_diff_freq}
        \inbetweenrowsspace
    \end{subfigure}
    \caption{Ablation study on HalfCheetah-v3 environment. Performance of LSAC is affected by (a) the choice of parallel critics number and (b) the use of LMC (aSGLD) sampler. (c) The performance difference between each update frequency of diffusion model $\mathcal M$ is not significant.}
    \label{fig:ablation_aSGLD_critic_num}
\end{figure}

\paragraph{Learning is stable in practice.} While off-policy deep RL algorithms are often challenging to stabilize, we found that LSAC is fairly stable as shown in Figure \ref{fig:training_curves_stability}. This is likely due to the KL objective on which parallel distributional critics are optimized, where the stochastic soft state-action value $Z_\psi$ remains close to the value target distribution $\mathcal T^{\pi_{\bar{\phi}}} Z_{\bar{\psi}}$. Moreover, distributional critic stabilizes learning by mitigating overestimation bias.

\subsection{Exploration Capability of LSAC}

To further evaluate the exploration ability of LSAC, we test our method on two types of maze environments, a custom version of \texttt{PointMaze\_Medium-v3} and \texttt{AntMaze-v4} from \citet{gymnasium_robotics2023github}, which are implemented based on the D4RL benchmark \citep{fu2020d4rl}. In \texttt{PointMaze\_Medium-v3}, the agent is tasked with manipulating a ball to reach some unknown goal position in the maze. The initial state of the ball is at the center of the maze and we define two potential goal states for the ball -- the top right and the bottom left corner of the maze. Please refer to \Cref{Section:goal-reaching-experiment-appendix} for further details on the environments. We first train the agent for $500k$ environment steps, and then use its oracle to complete $200$ evaluation episodes. The agent has better exploration ability if it solves the task by reaching multiple goals or finding out multiple paths leading to a goal.

To quantify the exploration ability of LSAC and baseline methods, we discretize the maze and track the cell visitation to visualize the exploration density map and track the cell visitation. We set the maximum density threshold to be 100 visits per cell to reduce the dominance of high-density areas such as the agent's start location, which may otherwise interfere with measuring the true trajectory densities. In Figure~\ref{fig:pointmaze_density_maps} and Figure~\ref{fig:antmaze_density_maps}, we see that LSAC is capable of discovering multiple paths leading to both goals while all other baselines, except for DIPO \citep{yang2023policy}, either fail to solve the task or only manage to discover a single path. While DIPO manages to find multiple paths toward the goal, LSAC offers state coverage that is comparable to or greater than that of DIPO, as shown in Figure~\ref{fig:state_coverage}.

\begin{figure}[H]
    \def\inbetweenfiguresspace{\hspace{-55pt}} %
    \def\mazeoffset{\hspace{34pt}}
    \def\inbetweentitlespace{\vspace{-15pt}} %
    \def\inbetweenrowsspace{\vspace{10pt}} %
    \centering
    \begin{subfigure}{\figurewidth}
        \mazeoffset\includegraphics[width=0.5\linewidth, clip, trim=0 0 0 0]{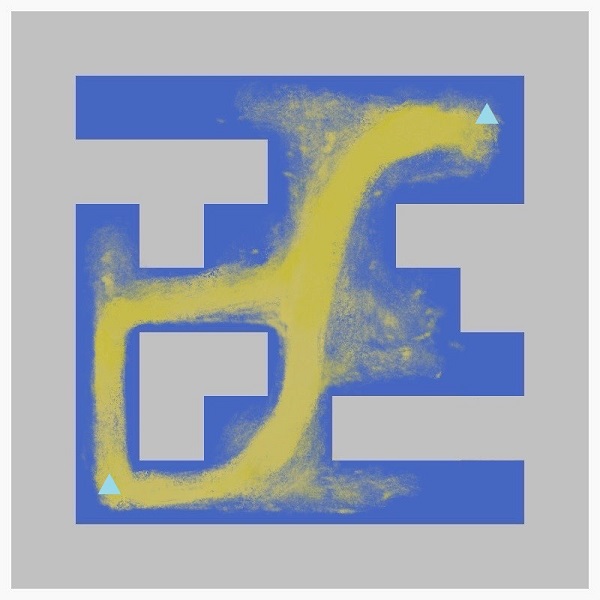}
        \caption{LSAC (ours)}\label{fig:lsac_pointmaze}
    \end{subfigure}
    \inbetweenfiguresspace
    \begin{subfigure}{\figurewidth}
        \mazeoffset\includegraphics[width=0.5\linewidth, clip, trim=0cm 0 0 0]{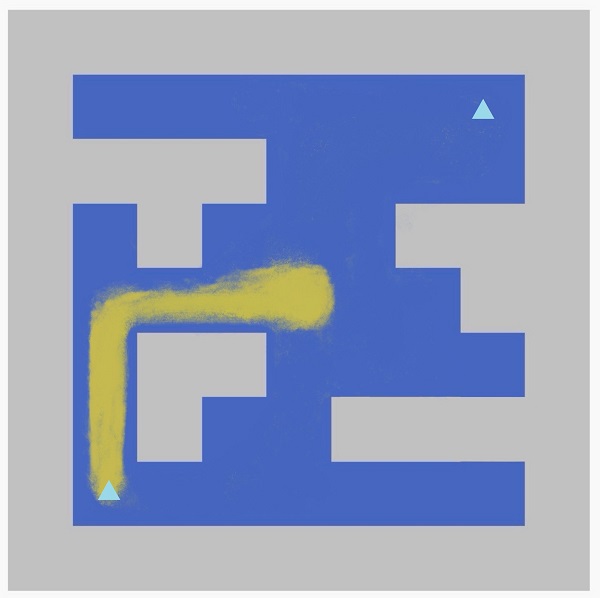}
        \caption{DSAC-T}\label{fig:dsac-t_pointmaze}
    \end{subfigure}
    \inbetweenfiguresspace
    \begin{subfigure}{\figurewidth}
        \mazeoffset\includegraphics[width=0.5\linewidth, clip, trim=0cm 0 0 0]{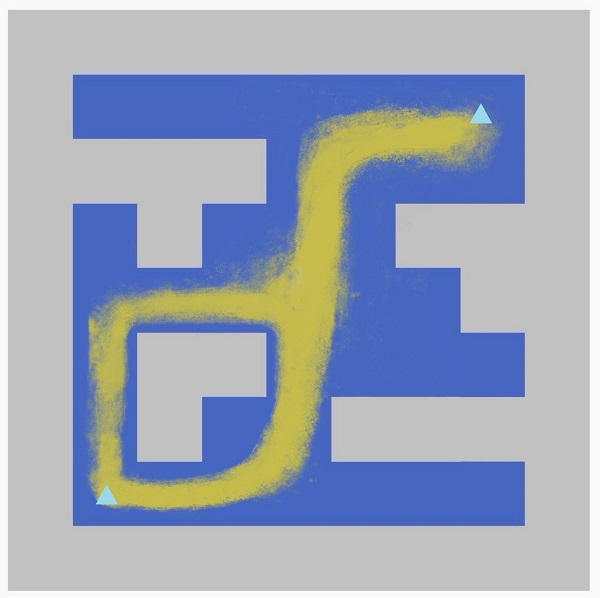}
        \caption{DIPO}\label{fig:dipo_pointmaze}
    \end{subfigure}
    \inbetweenfiguresspace
    \begin{subfigure}{\figurewidth}
        \mazeoffset\includegraphics[width=0.5\linewidth, clip, trim=0 0 0 0]{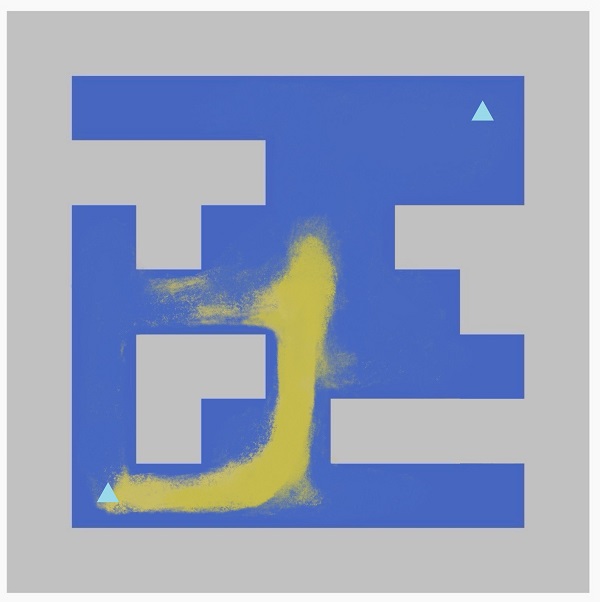}
        \caption{SAC}\label{fig:sac_pointmaze}
    \end{subfigure}
    \begin{subfigure}{\figurewidth}
        \inbetweenrowsspace
        \mazeoffset\mazeoffset\includegraphics[width=0.5\linewidth, clip, trim=0 0 0 0]{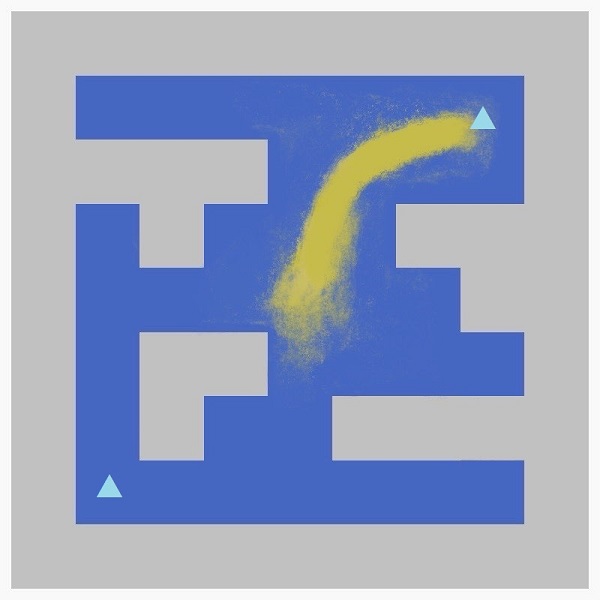}
        \caption*{\mazeoffset\mazeoffset (e) TD3}\label{fig:td3_pointmaze}
    \end{subfigure}
    \inbetweenfiguresspace
    \begin{subfigure}{\figurewidth}
        \mazeoffset\mazeoffset\includegraphics[width=0.5\linewidth, clip, trim=0 0 0 0]{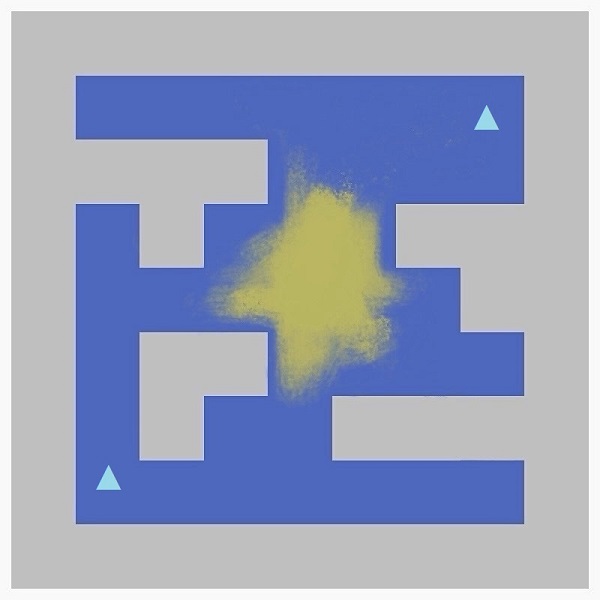}
        \caption*{\mazeoffset\mazeoffset (f) PPO}\label{fig:ppo_pointmaze}
    \end{subfigure}
    \inbetweenfiguresspace
    \begin{subfigure}{\figurewidth}
        \mazeoffset\mazeoffset\includegraphics[width=0.5\linewidth, clip, trim=0 0 0 0]{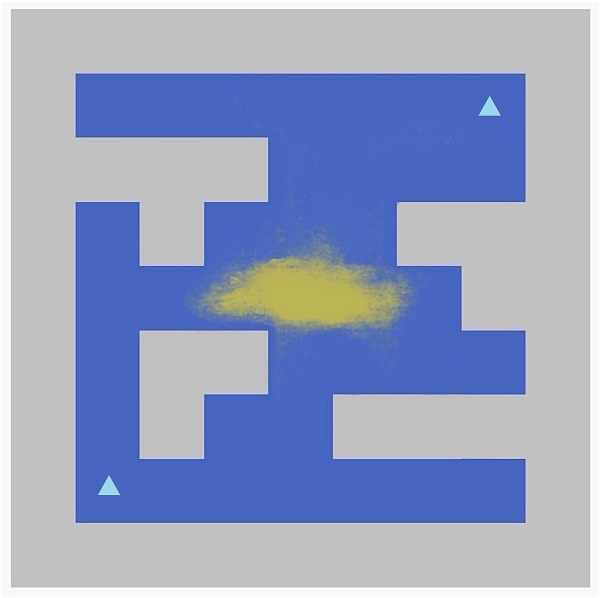}
        \caption*{\mazeoffset\mazeoffset (g) TRPO}\label{fig:trpo_pointmaze}
    \end{subfigure}
    \inbetweenfiguresspace
    \begin{subfigure}{\figurewidth}
        \mazeoffset\mazeoffset\includegraphics[width=0.103\linewidth, clip, trim=0 0.25cm 0 0]{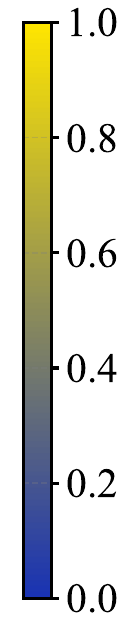}
        \caption*{}\label{fig:colorbar}
    \end{subfigure}
    \caption{Exploration density maps of LSAC and baseline algorithms tested on the \texttt{PointMaze\_Medium-v3} environment. The two goals are located in the upper-right and lower-left corners, as shown by the triangle markers. The starting position is at the center of the maze map.}
    \label{fig:pointmaze_density_maps}
\end{figure}

\section{Conclusion}
In this paper, we introduced LSAC, an off-policy algorithm that leverages LMC based approximate Thompson sampling to learn distributional critic. We observe that distributional critic learning coupled with LMC based exploration can boost performance while mitigating overestimation issue commonly seen in other model-free methods. Future work includes trying more advanced approximate samplers such as underdamped Langevin Monte Carlo \citep{ishfaq2024more}.

\bibliography{references}

\begin{thebibliography}{72}
\providecommand{\natexlab}[1]{#1}
\providecommand{\url}[1]{\texttt{#1}}
\expandafter\ifx\csname urlstyle\endcsname\relax
  \providecommand{\doi}[1]{doi: #1}\else
  \providecommand{\doi}{doi: \begingroup \urlstyle{rm}\Url}\fi

\bibitem[Ball et~al.(2023)Ball, Smith, Kostrikov, and Levine]{ball2023efficient}
Philip~J Ball, Laura Smith, Ilya Kostrikov, and Sergey Levine.
\newblock Efficient online reinforcement learning with offline data.
\newblock In \emph{International Conference on Machine Learning}, pp.\  1577--1594. PMLR, 2023.

\bibitem[Brockman et~al.(2016)Brockman, Cheung, Pettersson, Schneider, Schulman, Tang, and Zaremba]{brockman2016openai}
Greg Brockman, Vicki Cheung, Ludwig Pettersson, Jonas Schneider, John Schulman, Jie Tang, and Wojciech Zaremba.
\newblock Openai gym.
\newblock \emph{arXiv preprint arXiv:1606.01540}, 2016.

\bibitem[Chandra et~al.(2019)Chandra, Jain, Deo, and Cripps]{chandra2019langevin}
Rohitash Chandra, Konark Jain, Ratneel~V Deo, and Sally Cripps.
\newblock Langevin-gradient parallel tempering for {B}ayesian neural learning.
\newblock \emph{Neurocomputing}, 359:\penalty0 315--326, 2019.

\bibitem[Chen et~al.(2024)Chen, Lu, Wang, Su, and Zhu]{chen2024score}
Huayu Chen, Cheng Lu, Zhengyi Wang, Hang Su, and Jun Zhu.
\newblock Score regularized policy optimization through diffusion behavior.
\newblock In \emph{The Twelfth International Conference on Learning Representations}, 2024.

\bibitem[Chen et~al.(2021)Chen, Wang, Zhou, and Ross]{chen2021randomized}
Xinyue Chen, Che Wang, Zijian Zhou, and Keith~W. Ross.
\newblock Randomized ensembled double {Q}-learning: Learning fast without a model.
\newblock In \emph{International Conference on Learning Representations}, 2021.

\bibitem[Ciosek et~al.(2019)Ciosek, Vuong, Loftin, and Hofmann]{ciosek2019better}
Kamil Ciosek, Quan Vuong, Robert Loftin, and Katja Hofmann.
\newblock Better exploration with optimistic actor critic.
\newblock \emph{Advances in Neural Information Processing Systems}, 32, 2019.

\bibitem[Clevert(2015)]{clevert2015fast}
Djork-Arn{\'e} Clevert.
\newblock Fast and accurate deep network learning by exponential linear units (elus).
\newblock \emph{arXiv preprint arXiv:1511.07289}, 2015.

\bibitem[Dauphin et~al.(2014)Dauphin, Pascanu, Gulcehre, Cho, Ganguli, and Bengio]{dauphin2014identifying}
Yann~N Dauphin, Razvan Pascanu, Caglar Gulcehre, Kyunghyun Cho, Surya Ganguli, and Yoshua Bengio.
\newblock Identifying and attacking the saddle point problem in high-dimensional non-convex optimization.
\newblock \emph{Advances in Neural Information Processing Systems}, 27, 2014.

\bibitem[de~Lazcano et~al.(2024)de~Lazcano, Andreas, Tai, Lee, and Terry]{gymnasium_robotics2023github}
Rodrigo de~Lazcano, Kallinteris Andreas, Jun~Jet Tai, Seungjae~Ryan Lee, and Jordan Terry.
\newblock Gymnasium robotics, 2024.
\newblock URL \url{http://github.com/Farama-Foundation/Gymnasium-Robotics}.

\bibitem[Dorka et~al.(2023)Dorka, Welschehold, and Burgard]{dorka2023dynamic}
Nicolai Dorka, Tim Welschehold, and Wolfram Burgard.
\newblock Dynamic update-to-data ratio: Minimizing world model overfitting.
\newblock In \emph{The Eleventh International Conference on Learning Representations}, 2023.

\bibitem[Duan et~al.(2021)Duan, Guan, Li, Ren, Sun, and Cheng]{duan2021distributional}
Jingliang Duan, Yang Guan, Shengbo~Eben Li, Yangang Ren, Qi~Sun, and Bo~Cheng.
\newblock Distributional soft actor-critic: Off-policy reinforcement learning for addressing value estimation errors.
\newblock \emph{IEEE Transactions on Neural Networks and Learning Systems}, 33\penalty0 (11):\penalty0 6584--6598, 2021.

\bibitem[Duan et~al.(2023)Duan, Wang, Xiao, Gao, and Li]{duan2023dsac}
Jingliang Duan, Wenxuan Wang, Liming Xiao, Jiaxin Gao, and Shengbo~Eben Li.
\newblock {DSAC-T}: Distributional soft actor-critic with three refinements.
\newblock \emph{arXiv preprint arXiv:2310.05858}, 2023.

\bibitem[Dwaracherla \& Van~Roy(2020)Dwaracherla and Van~Roy]{dwaracherla2020langevin}
Vikranth Dwaracherla and Benjamin Van~Roy.
\newblock {L}angevin {DQN}.
\newblock \emph{arXiv preprint arXiv:2002.07282}, 2020.

\bibitem[Eysenbach \& Levine(2022)Eysenbach and Levine]{eysenbach2022maximum}
Benjamin Eysenbach and Sergey Levine.
\newblock Maximum entropy {RL} (provably) solves some robust {RL} problems.
\newblock In \emph{International Conference on Learning Representations}, 2022.

\bibitem[Fortunato et~al.(2018)Fortunato, Azar, Piot, Menick, Hessel, Osband, Graves, Mnih, Munos, Hassabis, Pietquin, Blundell, and Legg]{fortunato2017noisy}
Meire Fortunato, Mohammad~Gheshlaghi Azar, Bilal Piot, Jacob Menick, Matteo Hessel, Ian Osband, Alex Graves, Volodymyr Mnih, Remi Munos, Demis Hassabis, Olivier Pietquin, Charles Blundell, and Shane Legg.
\newblock Noisy networks for exploration.
\newblock In \emph{International Conference on Learning Representations}, 2018.

\bibitem[Fu et~al.(2020)Fu, Kumar, Nachum, Tucker, and Levine]{fu2020d4rl}
Justin Fu, Aviral Kumar, Ofir Nachum, George Tucker, and Sergey Levine.
\newblock {D4RL}: Datasets for deep data-driven reinforcement learning.
\newblock \emph{arXiv preprint arXiv:2004.07219}, 2020.

\bibitem[Fujimoto et~al.(2018)Fujimoto, Hoof, and Meger]{fujimoto2018addressing}
Scott Fujimoto, Herke Hoof, and David Meger.
\newblock Addressing function approximation error in actor-critic methods.
\newblock In \emph{International Conference on Machine Learning}, pp.\  1587--1596. PMLR, 2018.

\bibitem[Geyer \& Thompson(1995)Geyer and Thompson]{geyer1995annealing}
Charles~J Geyer and Elizabeth~A Thompson.
\newblock Annealing {M}arkov {C}hain {M}onte {C}arlo with applications to ancestral inference.
\newblock \emph{Journal of the American Statistical Association}, 90\penalty0 (431):\penalty0 909--920, 1995.

\bibitem[Goodfellow et~al.(2014)Goodfellow, Pouget-Abadie, Mirza, Xu, Warde-Farley, Ozair, Courville, and Bengio]{goodfellow2014generative}
Ian Goodfellow, Jean Pouget-Abadie, Mehdi Mirza, Bing Xu, David Warde-Farley, Sherjil Ozair, Aaron Courville, and Yoshua Bengio.
\newblock Generative adversarial nets.
\newblock \emph{Advances in Neural Information Processing Systems}, 27, 2014.

\bibitem[Haarnoja et~al.(2017)Haarnoja, Tang, Abbeel, and Levine]{haarnoja2017reinforcement}
Tuomas Haarnoja, Haoran Tang, Pieter Abbeel, and Sergey Levine.
\newblock Reinforcement learning with deep energy-based policies.
\newblock In \emph{International Conference on Machine Learning}, pp.\  1352--1361. PMLR, 2017.

\bibitem[Haarnoja et~al.(2018{\natexlab{a}})Haarnoja, Zhou, Abbeel, and Levine]{haarnoja2018soft}
Tuomas Haarnoja, Aurick Zhou, Pieter Abbeel, and Sergey Levine.
\newblock Soft actor-critic: Off-policy maximum entropy deep reinforcement learning with a stochastic actor.
\newblock In \emph{International Conference on Machine Learning}, pp.\  1861--1870. PMLR, 2018{\natexlab{a}}.

\bibitem[Haarnoja et~al.(2018{\natexlab{b}})Haarnoja, Zhou, Hartikainen, Tucker, Ha, Tan, Kumar, Zhu, Gupta, Abbeel, et~al.]{haarnoja2018softapplication}
Tuomas Haarnoja, Aurick Zhou, Kristian Hartikainen, George Tucker, Sehoon Ha, Jie Tan, Vikash Kumar, Henry Zhu, Abhishek Gupta, Pieter Abbeel, et~al.
\newblock Soft actor-critic algorithms and applications.
\newblock \emph{arXiv preprint arXiv:1812.05905}, 2018{\natexlab{b}}.

\bibitem[Hafner et~al.(2020)Hafner, Lillicrap, Ba, and Norouzi]{Hafner2020Dream}
Danijar Hafner, Timothy Lillicrap, Jimmy Ba, and Mohammad Norouzi.
\newblock Dream to control: Learning behaviors by latent imagination.
\newblock In \emph{International Conference on Learning Representations}, 2020.

\bibitem[Hansen-Estruch et~al.(2023)Hansen-Estruch, Kostrikov, Janner, Kuba, and Levine]{hansen2023idql}
Philippe Hansen-Estruch, Ilya Kostrikov, Michael Janner, Jakub~Grudzien Kuba, and Sergey Levine.
\newblock {IDQL}: Implicit {Q}-learning as an actor-critic method with diffusion policies.
\newblock \emph{arXiv preprint arXiv:2304.10573}, 2023.

\bibitem[Ho et~al.(2020)Ho, Jain, and Abbeel]{ho2020denoising}
Jonathan Ho, Ajay Jain, and Pieter Abbeel.
\newblock Denoising diffusion probabilistic models.
\newblock \emph{Advances in Neural Information Processing Systems}, 33:\penalty0 6840--6851, 2020.

\bibitem[Holden(2019)]{holden2019mixing}
Lars Holden.
\newblock Mixing of {MCMC} algorithms.
\newblock \emph{Journal of Statistical Computation and Simulation}, 89:\penalty0 1--19, 05 2019.
\newblock \doi{10.1080/00949655.2019.1615064}.

\bibitem[Huang et~al.(2017)Huang, Ley, Vlachou-Konchylaki, and Hu]{huang2017enhanced}
Vincent Huang, Tobias Ley, Martha Vlachou-Konchylaki, and Wenfeng Hu.
\newblock Enhanced experience replay generation for efficient reinforcement learning.
\newblock \emph{arXiv preprint arXiv:1705.08245}, 2017.

\bibitem[Hukushima \& Nemoto(1996)Hukushima and Nemoto]{hukushima1996exchange}
Koji Hukushima and Koji Nemoto.
\newblock Exchange {M}onte {C}arlo method and application to spin glass simulations.
\newblock \emph{Journal of the Physical Society of Japan}, 65\penalty0 (6):\penalty0 1604--1608, 1996.

\bibitem[Ishfaq et~al.(2021)Ishfaq, Cui, Nguyen, Ayoub, Yang, Wang, Precup, and Yang]{ishfaq2021randomized}
Haque Ishfaq, Qiwen Cui, Viet Nguyen, Alex Ayoub, Zhuoran Yang, Zhaoran Wang, Doina Precup, and Lin Yang.
\newblock Randomized exploration in reinforcement learning with general value function approximation.
\newblock In \emph{International Conference on Machine Learning}, pp.\  4607--4616. PMLR, 2021.

\bibitem[Ishfaq et~al.(2024{\natexlab{a}})Ishfaq, Lan, Xu, Mahmood, Precup, Anandkumar, and Azizzadenesheli]{ishfaq2023provable}
Haque Ishfaq, Qingfeng Lan, Pan Xu, A~Rupam Mahmood, Doina Precup, Anima Anandkumar, and Kamyar Azizzadenesheli.
\newblock Provable and practical: Efficient exploration in reinforcement learning via {L}angevin {M}onte {C}arlo.
\newblock In \emph{The Twelfth International Conference on Learning Representations}, 2024{\natexlab{a}}.

\bibitem[Ishfaq et~al.(2024{\natexlab{b}})Ishfaq, Tan, Yang, Lan, Lu, Mahmood, Precup, and Xu]{ishfaq2024more}
Haque Ishfaq, Yixin Tan, Yu~Yang, Qingfeng Lan, Jianfeng Lu, A.~Rupam Mahmood, Doina Precup, and Pan Xu.
\newblock More efficient randomized exploration for reinforcement learning via approximate sampling.
\newblock In \emph{Reinforcement Learning Conference}, 2024{\natexlab{b}}.

\bibitem[Karras et~al.(2022)Karras, Aittala, Aila, and Laine]{karras2022elucidating}
Tero Karras, Miika Aittala, Timo Aila, and Samuli Laine.
\newblock Elucidating the design space of diffusion-based generative models.
\newblock \emph{Advances in Neural Information Processing Systems}, 35:\penalty0 26565--26577, 2022.

\bibitem[Kim et~al.(2022)Kim, Song, and Liang]{kim2022stochastic}
Sehwan Kim, Qifan Song, and Faming Liang.
\newblock Stochastic gradient {L}angevin dynamics with adaptive drifts.
\newblock \emph{Journal of Statistical Computation and Simulation}, 92\penalty0 (2):\penalty0 318--336, 2022.

\bibitem[Kingma \& Ba(2014)Kingma and Ba]{kingma2014adam}
Diederik~P Kingma and Jimmy Ba.
\newblock Adam: A method for stochastic optimization.
\newblock \emph{arXiv preprint arXiv:1412.6980}, 2014.

\bibitem[Kingma et~al.(2019)Kingma, Welling, et~al.]{kingma2019introduction}
Diederik~P Kingma, Max Welling, et~al.
\newblock An introduction to variational autoencoders.
\newblock \emph{Foundations and Trends{\textregistered} in Machine Learning}, 12\penalty0 (4):\penalty0 307--392, 2019.

\bibitem[Lee et~al.(2020)Lee, Nagabandi, Abbeel, and Levine]{lee2020stochastic}
Alex~X Lee, Anusha Nagabandi, Pieter Abbeel, and Sergey Levine.
\newblock Stochastic latent actor-critic: Deep reinforcement learning with a latent variable model.
\newblock \emph{Advances in Neural Information Processing Systems}, 33:\penalty0 741--752, 2020.

\bibitem[Li et~al.(2018)Li, Xu, Taylor, Studer, and Goldstein]{li2018visualizing}
Hao Li, Zheng Xu, Gavin Taylor, Christoph Studer, and Tom Goldstein.
\newblock Visualizing the loss landscape of neural nets.
\newblock \emph{Advances in Neural Information Processing Systems}, 31, 2018.

\bibitem[Li et~al.(2024)Li, Krohn, Chen, Ajay, Agrawal, and Chalvatzaki]{li2024learning}
Zechu Li, Rickmer Krohn, Tao Chen, Anurag Ajay, Pulkit Agrawal, and Georgia Chalvatzaki.
\newblock Learning multimodal behaviors from scratch with diffusion policy gradient.
\newblock In \emph{The Thirty-eighth Annual Conference on Neural Information Processing Systems}, 2024.

\bibitem[Li et~al.(2021)Li, Li, Zhang, Zhang, and Luo]{li2021hyperdqn}
Ziniu Li, Yingru Li, Yushun Zhang, Tong Zhang, and Zhi-Quan Luo.
\newblock Hyper{DQN}: A randomized exploration method for deep reinforcement learning.
\newblock In \emph{International Conference on Learning Representations}, 2021.

\bibitem[Lillicrap et~al.(2015)Lillicrap, Hunt, Pritzel, Heess, Erez, Tassa, Silver, and Wierstra]{lillicrap2015continuous}
Timothy~P Lillicrap, Jonathan~J Hunt, Alexander Pritzel, Nicolas Heess, Tom Erez, Yuval Tassa, David Silver, and Daan Wierstra.
\newblock Continuous control with deep reinforcement learning.
\newblock \emph{arXiv preprint arXiv:1509.02971}, 2015.

\bibitem[Lu et~al.(2022)Lu, Zhou, Bao, Chen, Li, and Zhu]{lu2022dpm}
Cheng Lu, Yuhao Zhou, Fan Bao, Jianfei Chen, Chongxuan Li, and Jun Zhu.
\newblock Dpm-solver: A fast ode solver for diffusion probabilistic model sampling in around 10 steps.
\newblock \emph{Advances in Neural Information Processing Systems}, 35:\penalty0 5775--5787, 2022.

\bibitem[Lu et~al.(2024)Lu, Ball, Teh, and Parker-Holder]{lu2024synthetic}
Cong Lu, Philip Ball, Yee~Whye Teh, and Jack Parker-Holder.
\newblock Synthetic experience replay.
\newblock \emph{Advances in Neural Information Processing Systems}, 36, 2024.

\bibitem[Ma et~al.(2020)Ma, Xia, Zhou, Yang, and Zhao]{ma2020dsac}
Xiaoteng Ma, Li~Xia, Zhengyuan Zhou, Jun Yang, and Qianchuan Zhao.
\newblock Dsac: Distributional soft actor critic for risk-sensitive reinforcement learning.
\newblock \emph{arXiv preprint arXiv:2004.14547}, 2020.

\bibitem[Marinari \& Parisi(1992)Marinari and Parisi]{marinari1992simulated}
Enzo Marinari and Giorgio Parisi.
\newblock Simulated tempering: a new {M}onte {C}arlo scheme.
\newblock \emph{Europhysics letters}, 19\penalty0 (6):\penalty0 451, 1992.

\bibitem[Osband et~al.(2016{\natexlab{a}})Osband, Blundell, Pritzel, and Van~Roy]{osband2016deep}
Ian Osband, Charles Blundell, Alexander Pritzel, and Benjamin Van~Roy.
\newblock Deep exploration via bootstrapped {DQN}.
\newblock \emph{Advances in Neural Information Processing Systems}, 29, 2016{\natexlab{a}}.

\bibitem[Osband et~al.(2016{\natexlab{b}})Osband, Van~Roy, and Wen]{osband2016generalization}
Ian Osband, Benjamin Van~Roy, and Zheng Wen.
\newblock Generalization and exploration via randomized value functions.
\newblock In \emph{International Conference on Machine Learning}, pp.\  2377--2386. PMLR, 2016{\natexlab{b}}.

\bibitem[Osband et~al.(2018)Osband, Aslanides, and Cassirer]{osband2018randomized}
Ian Osband, John Aslanides, and Albin Cassirer.
\newblock Randomized prior functions for deep reinforcement learning.
\newblock \emph{Advances in Neural Information Processing Systems}, 31, 2018.

\bibitem[Patriksson \& van~der Spoel(2008)Patriksson and van~der Spoel]{patriksson2008temperature}
Alexandra Patriksson and David van~der Spoel.
\newblock A temperature predictor for parallel tempering simulations.
\newblock \emph{Physical Chemistry Chemical Physics}, 10\penalty0 (15):\penalty0 2073--2077, 2008.

\bibitem[Plappert et~al.(2018)Plappert, Houthooft, Dhariwal, Sidor, Chen, Chen, Asfour, Abbeel, and Andrychowicz]{plappert2018parameter}
Matthias Plappert, Rein Houthooft, Prafulla Dhariwal, Szymon Sidor, Richard~Y Chen, Xi~Chen, Tamim Asfour, Pieter Abbeel, and Marcin Andrychowicz.
\newblock Parameter space noise for exploration.
\newblock In \emph{International Conference on Learning Representations}, 2018.

\bibitem[Psenka et~al.(2024)Psenka, Escontrela, Abbeel, and Ma]{psenka2024learning}
Michael Psenka, Alejandro Escontrela, Pieter Abbeel, and Yi~Ma.
\newblock Learning a diffusion model policy from rewards via {Q}-score matching.
\newblock In \emph{Forty-first International Conference on Machine Learning}, 2024.

\bibitem[Roberts \& Stramer(2002)Roberts and Stramer]{roberts2002langevin}
Gareth~O Roberts and Osnat Stramer.
\newblock {L}angevin diffusions and metropolis-hastings algorithms.
\newblock \emph{Methodology and Computing in Applied Probability}, 4\penalty0 (4):\penalty0 337--357, 2002.

\bibitem[Roberts \& Tweedie(1996)Roberts and Tweedie]{roberts1996exponential}
Gareth~O Roberts and Richard~L Tweedie.
\newblock Exponential convergence of {L}angevin distributions and their discrete approximations.
\newblock \emph{Bernoulli}, pp.\  341--363, 1996.

\bibitem[Rossky et~al.(1978)Rossky, Doll, and Friedman]{rossky1978brownian}
Peter~J Rossky, Jimmie~D Doll, and Harold~L Friedman.
\newblock Brownian dynamics as smart {M}onte {C}arlo simulation.
\newblock \emph{The Journal of Chemical Physics}, 69\penalty0 (10):\penalty0 4628--4633, 1978.

\bibitem[Roy(2020)]{roy2020convergence}
Vivekananda Roy.
\newblock Convergence diagnostics for {M}arkov {C}hain {M}onte {C}arlo.
\newblock \emph{Annu. Rev. Stat. Appl}, 7:\penalty0 387--412, 2020.

\bibitem[Russo(2019)]{russo2019worst}
Daniel Russo.
\newblock Worst-case regret bounds for exploration via randomized value functions.
\newblock In \emph{Advances in Neural Information Processing Systems}, pp.\  14410--14420, 2019.

\bibitem[Schulman et~al.(2015)Schulman, Levine, Abbeel, Jordan, and Moritz]{schulman2015trust}
John Schulman, Sergey Levine, Pieter Abbeel, Michael Jordan, and Philipp Moritz.
\newblock Trust region policy optimization.
\newblock In \emph{International Conference on Machine Learning}, pp.\  1889--1897, 2015.

\bibitem[Schulman et~al.(2017)Schulman, Wolski, Dhariwal, Radford, and Klimov]{schulman2017proximal}
John Schulman, Filip Wolski, Prafulla Dhariwal, Alec Radford, and Oleg Klimov.
\newblock Proximal policy optimization algorithms.
\newblock \emph{arXiv preprint arXiv:1707.06347}, 2017.

\bibitem[Sohl-Dickstein et~al.(2015)Sohl-Dickstein, Weiss, Maheswaranathan, and Ganguli]{sohl2015deep}
Jascha Sohl-Dickstein, Eric Weiss, Niru Maheswaranathan, and Surya Ganguli.
\newblock Deep unsupervised learning using nonequilibrium thermodynamics.
\newblock In \emph{International Conference on Machine Learning}, pp.\  2256--2265. PMLR, 2015.

\bibitem[Strens(2000)]{strens2000bayesian}
Malcolm Strens.
\newblock A {B}ayesian framework for reinforcement learning.
\newblock In \emph{ICML}, volume 2000, pp.\  943--950, 2000.

\bibitem[Tassa et~al.(2018)Tassa, Doron, Muldal, Erez, Li, Casas, Budden, Abdolmaleki, Merel, Lefrancq, et~al.]{tassa2018deepmind}
Yuval Tassa, Yotam Doron, Alistair Muldal, Tom Erez, Yazhe Li, Diego de~Las Casas, David Budden, Abbas Abdolmaleki, Josh Merel, Andrew Lefrancq, et~al.
\newblock Deepmind control suite.
\newblock \emph{arXiv preprint arXiv:1801.00690}, 2018.

\bibitem[Tishby et~al.(2000)Tishby, Pereira, and Bialek]{tishby2000information}
Naftali Tishby, Fernando~C Pereira, and William Bialek.
\newblock The information bottleneck method.
\newblock \emph{arXiv preprint physics/0004057}, 2000.

\bibitem[Todorov et~al.(2012)Todorov, Erez, and Tassa]{todorov2012mujoco}
Emanuel Todorov, Tom Erez, and Yuval Tassa.
\newblock Mujoco: A physics engine for model-based control.
\newblock In \emph{2012 IEEE/RSJ International Conference on Intelligent Robots and Systems}, pp.\  5026--5033. IEEE, 2012.

\bibitem[Vishnoi(2021)]{vishnoi2021introduction}
Nisheeth~K Vishnoi.
\newblock An introduction to {H}amiltonian {M}onte {C}arlo method for sampling.
\newblock \emph{arXiv preprint arXiv:2108.12107}, 2021.

\bibitem[Wang et~al.(2024)Wang, Wang, Jiang, Zou, Liu, Song, Wang, Xiao, WU, Duan, and Li]{wang2024diffusion}
Yinuo Wang, Likun Wang, Yuxuan Jiang, Wenjun Zou, Tong Liu, Xujie Song, Wenxuan Wang, Liming Xiao, Jiang WU, Jingliang Duan, and Shengbo~Eben Li.
\newblock Diffusion actor-critic with entropy regulator.
\newblock In \emph{The Thirty-eighth Annual Conference on Neural Information Processing Systems}, 2024.

\bibitem[Wang et~al.(2023)Wang, Hunt, and Zhou]{wang2023diffusion}
Zhendong Wang, Jonathan~J Hunt, and Mingyuan Zhou.
\newblock Diffusion policies as an expressive policy class for offline reinforcement learning.
\newblock In \emph{The Eleventh International Conference on Learning Representations}, 2023.

\bibitem[Welling \& Teh(2011)Welling and Teh]{welling2011bayesian}
Max Welling and Yee~W Teh.
\newblock Bayesian learning via stochastic gradient {L}angevin dynamics.
\newblock In \emph{Proceedings of the 28th International Conference on Machine Learning (ICML-11)}, pp.\  681--688. Citeseer, 2011.

\bibitem[Xiong et~al.(2022)Xiong, Shen, Cui, Fazel, and Du]{xiongnear}
Zhihan Xiong, Ruoqi Shen, Qiwen Cui, Maryam Fazel, and Simon~Shaolei Du.
\newblock Near-optimal randomized exploration for tabular {M}arkov decision processes.
\newblock In \emph{Advances in Neural Information Processing Systems}, 2022.

\bibitem[Yang et~al.(2023)Yang, Huang, Lei, Zhong, Yang, Fang, Wen, Zhou, and Lin]{yang2023policy}
Long Yang, Zhixiong Huang, Fenghao Lei, Yucun Zhong, Yiming Yang, Cong Fang, Shiting Wen, Binbin Zhou, and Zhouchen Lin.
\newblock Policy representation via diffusion probability model for reinforcement learning.
\newblock \emph{arXiv preprint arXiv:2305.13122}, 2023.

\bibitem[Yarats et~al.(2020)Yarats, Zhang, Kostrikov, Amos, Pineau, and Fergus]{yarats2020improving}
Denis Yarats, Amy Zhang, Ilya Kostrikov, Brandon Amos, Joelle Pineau, and Rob Fergus.
\newblock Improving sample efficiency in model-free reinforcement learning from images, 2020.
\newblock URL \url{https://openreview.net/forum?id=HklE01BYDB}.

\bibitem[Yarats et~al.(2022)Yarats, Fergus, Lazaric, and Pinto]{yarats2022mastering}
Denis Yarats, Rob Fergus, Alessandro Lazaric, and Lerrel Pinto.
\newblock Mastering visual continuous control: Improved data-augmented reinforcement learning.
\newblock In \emph{International Conference on Learning Representations}, 2022.

\bibitem[Zanette et~al.(2020)Zanette, Brandfonbrener, Brunskill, Pirotta, and Lazaric]{zanette2019frequentist}
Andrea Zanette, David Brandfonbrener, Emma Brunskill, Matteo Pirotta, and Alessandro Lazaric.
\newblock Frequentist regret bounds for randomized least-squares value iteration.
\newblock In \emph{International Conference on Artificial Intelligence and Statistics}, pp.\  1954--1964. PMLR, 2020.

\bibitem[Ziebart(2010)]{ziebart2010modeling}
Brian~D Ziebart.
\newblock \emph{Modeling purposeful adaptive behavior with the principle of maximum causal entropy}.
\newblock Carnegie Mellon University, 2010.

\end{thebibliography}
\bibliographystyle{iclr2025_conference}

\newpage
\appendix
\section{Related Work}

\paragraph{Posterior sampling.} Our research is closely aligned with approaches that utilize posterior sampling, specifically Thompson sampling, within the reinforcement learning (RL) framework \citep{strens2000bayesian}. Notably, \citet{osband2016generalization}, \citet{russo2019worst}, and \citet{xiongnear} introduced randomized least-squares value iteration (RLSVI), which incorporates frequentist regret analysis in the context of tabular MDPs. RLSVI strategically adds carefully calibrated random noise to the value function to promote exploration. Building on this, \citet{zanette2019frequentist} and \citet{ishfaq2021randomized} extended RLSVI to linear MDP settings. Although RLSVI achieves favorable regret bounds in both tabular and linear scenarios, its reliance on predefined and fixed features during training limits its applicability to deep RL environments \citep{li2021hyperdqn}.

To address this limitation, \citet{osband2016deep, osband2018randomized} proposed training an ensemble of randomly initialized neural networks, treating them as approximate posterior samples of Q functions. However, this ensemble approach incurs significant computational overhead. Alternatively, some studies have explored directly injecting noise into network parameters \citep{fortunato2017noisy, plappert2018parameter}. For instance, Noisy-Net \citep{fortunato2017noisy} learns noisy parameters through gradient descent, while \citet{plappert2018parameter} introduced constant Gaussian noise to the network parameters. Nonetheless, Noisy-Net does not guarantee an accurate approximation of the posterior distribution \citep{fortunato2017noisy}.

 \citet{dwaracherla2020langevin, ishfaq2023provable,ishfaq2024more} propose using Langevin Monte Carlo for approximate Thompson sampling which is the most related work to ours. Furthermore, \citet{ciosek2019better} explored bootstrapped DQN-inspired actor-critic algorithms but were unable to manage scenarios involving multimodal Q posterior distributions.

\paragraph{Upsampling in RL training.} Prior RL studies that augment existing datasets typically employ Generative Adversarial Networks (GANs) \citep{goodfellow2014generative} or Variational Auto-Encoders (VAEs) \citep{kingma2019introduction}. For example, \citet{huang2017enhanced} utilized GANs to generate synthetic data for pre-training RL agents, thereby accelerating training in production environments. Similarly, \citet{lee2020stochastic} applied sequential latent variable models like VAEs to perform amortized variational inference in partially observable Markov Decision Processes (POMDPs). However, as highlighted by \citet{lu2024synthetic}, these methods often face limitations in achieving rapid training in online proprioceptive settings and scalability in data synthesis.

\paragraph{Online reinforcement learning with diffusion.}
Recently, there has been growing interest in using diffusion model to represent policies in online reinforcement learning due to its inherent ability in learning complex and multimodal distributions. One of the earliest works, that employ diffusion policies for online RL is  DIPO \citep{yang2023policy}. DIPO uses the critic to update the sampled action from the replay buffer using action gradient before fitting the actor using the updated actions from the replay buffer. \citet{psenka2024learning} argues that optimizing the likelihood of the entire chain of denoised actions can be computationally inefficient and instead proposes Q-Score Matching (QSM) that iteratively aligns the gradient of the diffusion actor (i.e. score) with the action gradient of the critic. \citet{li2024learning} proposes DDiffPG --- an actor-critic algorithm that learns multimodal policies parameterized as diffusion models from scratch. To discover different modes in the policy, DDiffPG uses novelty-based intrinsic motivation along with off-the-shelf unsupervised hierarchical clustering methods. More recently, \citet{wang2024diffusion} proposes DACER that uses the reverse process of the diffusion model as a policy function. To perform adaptive adjustment of the exploration level of the diffusion policy, DACER estimates the entropy of the diffusion policy using Gaussian mixture model. We emphasize that while these works utilize diffusion models to parameterize policies, we use diffusion model to create synthetic data to enhance critic learning.

\section{Theoretical Insights}\label{appendix:theoretical-insight}

Without considering the gradient scalar $\omega$, the objective function of the critic update from \Eqref{eq:critic-obj} can be written as
\begin{align}\label{eq:critic-obj_without_gradientscalar}
    L_\mathcal{Z}(\psi) & = \mathbb E_{(s,a)\sim B} D_\mathrm{KL} ( \mathcal T^{\pi_{\bar{\phi}}} \mathcal{Z}_{\bar{\psi}} (s,a) \|\mathcal{Z}_{\psi} (s,a)).
\end{align}

Using \Cref{Prop:kl-loss}, it can be further shown that the objective function in \Eqref{eq:critic-obj_without_gradientscalar} is equivalent to the following:
\begin{align}\label{eq:equivalent_critic_loss}
    L_\mathcal{Z}(\psi) & = -\EE_{{(s,a,r,s') \sim B, a' \sim \pi_{\bar{\phi}},} \atop {Z(s',a') \sim \mathcal{Z}_{\bar{\psi}}(\cdot \given s',a')}} \big[\log \mathbb{P}(\mathcal T^{\pi_{\bar{\phi}}} Z (s,a) \given \mathcal{Z}_{\psi} (\cdot \given s,a))\big] + c,
\end{align}
where $c$ is a term independent of $\psi$.

Since $\mathcal{Z}_\psi$ is assumed to be a Gaussian model, it can be expressed as $\mathcal{Z}_\psi(\cdot \given s,a) = \mathcal{N}(Q_\psi(s,a),\sigma_\psi(s,a)^2)$, where $Q_\psi(s,a)$ and $\sigma_\psi(s,a)$ are the outputs of the value network. Then ignoring the $\psi$ independent term $c$, \Eqref{eq:equivalent_critic_loss} can be written as
\begin{align}\label{eq:derive_loss}
    \begin{split}
    L_\mathcal{Z}(\psi) &= -\EE_{{(s,a,r,s') \sim B, a' \sim \pi_{\bar{\phi}},} \atop {Z(s',a') \sim \mathcal{Z}_{\bar{\psi}}(\cdot \given s',a')}} \log \Bigg(\frac{\exp(- \frac{(\mathcal{T}^{\pi_{\bar{\phi}}} Z (s,a) - Q_\psi(s,a))^2}{2\sigma_\psi(s,a)^2})}{\sqrt{2\pi}\sigma_\psi(s,a)}\Bigg)\\
    &= \EE_{{(s,a,r,s') \sim B, a' \sim \pi_{\bar{\phi}},} \atop {Z(s',a') \sim \mathcal{Z}_{\bar{\psi}}(\cdot \given s',a')}} \Bigg[ \frac{(\mathcal{T}^{\pi_{\bar{\phi}}} Z (s,a) - Q_\psi(s,a))^2}{2\sigma_\psi(s,a)^2} + \log (\sqrt{2\pi}\sigma_\psi(s,a))\Bigg]\\
    &= \EE_{{(s,a,r,s') \sim B, a' \sim \pi_{\bar{\phi}},} \atop {Z(s',a') \sim \mathcal{Z}_{\bar{\psi}}(\cdot \given s',a')}} \Bigg[ \frac{(y_Z - Q_\psi(s,a))^2}{2\sigma_\psi(s,a)^2} + \log (\sqrt{2\pi}\sigma_\psi(s,a))\Bigg]
    \end{split}
\end{align}

where we used the definition $y_Z = r + \gamma(Z_{\bar{\psi}} (s',a') - \alpha \log \pi_{\bar{\phi}} (a'|s'))$.

Now, let's assume the prior for parameters $\psi$ is a Gaussian distribution with mean zero and variance $\sigma^2$.Then, by Bayes rule, we have
\begin{align}\label{eq:log_prob}
    \begin{split}
        -\log p(\psi \given B) &= - \log p(B\given \psi) - \log p(\psi) + \log p(B)\\
        &= \frac{1}{2} \EE_{(s,a,r,s') \sim B}\big[(y_Z - Q_\psi(s,a))^2\big] + \frac{\lambda}{2}\|\psi\|^2 + C,
    \end{split}
\end{align}
where $C$ is constant and $\lambda = 1/\sigma^2$. For simplicity of the analysis, let us assume that the variance $\sigma_\psi^2$ is a constant. Then, \Eqref{eq:derive_loss} can be written as 
\begin{align}\label{eq:const}
    L_\mathcal{Z}(\psi) &=\EE_{{(s,a,r,s') \sim B, a' \sim \pi_{\bar{\phi}},} \atop {Z(s',a') \sim \mathcal{Z}_{\bar{\psi}}(\cdot \given s',a')}} \Bigg[ \frac{(y_Z - Q_\psi(s,a))^2}{C_1} + C_2\Bigg],
\end{align}
where $C_1$ and $C_2$ are constants.

Combining \Eqref{eq:log_prob} and \Eqref{eq:const}, we have that $L_\mathcal{Z}(\psi) \propto -\log p(\psi \given B)$ and thus consequently we have:
\begin{align}
     p(\psi \given B) = \frac{1}{Z} \exp(-L_\mathcal{Z}(\psi)),
\end{align}
where $Z$ is the normalizing constant.

\begin{proposition}\label{Prop:kl-loss}
    The objective function in \Eqref{eq:critic-obj_without_gradientscalar} for learning distributional critic is equivalent to the following:
\begin{align*}
    L_\mathcal{Z}(\psi) & = -\EE_{{(s,a,r,s') \sim B, a' \sim \pi_{\bar{\phi}},} \atop {Z(s',a') \sim \mathcal{Z}_{\bar{\psi}}(\cdot \given s',a')}} \big[\log \mathbb{P}(\mathcal T^{\pi_{\bar{\phi}}} Z (s,a) \given \mathcal{Z}_{\psi} (\cdot \given s,a))\big]
\end{align*}
\end{proposition}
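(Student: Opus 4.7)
The plan is to decompose the KL divergence into its entropy and cross-entropy parts and absorb every $\psi$-independent quantity into the additive constant $c$. Concretely, for any pair of distributions $P, Q$ on $\mathbb{R}$ one has
\begin{equation*}
D_{\mathrm{KL}}(P \| Q) \;=\; -H(P) \;-\; \EE_{Y \sim P}\bigl[\log \mathbb{P}(Y \given Q)\bigr].
\end{equation*}
Applying this pointwise with $P = \mathcal{T}^{\pi_{\bar{\phi}}} \mathcal{Z}_{\bar{\psi}}(s,a)$ and $Q = \mathcal{Z}_{\psi}(\cdot\given s,a)$, then taking the outer expectation over $(s,a)\sim B$, the entropy contribution $\EE_{(s,a)\sim B}\bigl[-H(\mathcal{T}^{\pi_{\bar{\phi}}} \mathcal{Z}_{\bar{\psi}}(s,a))\bigr]$ depends only on $\bar{\psi}$, $\bar{\phi}$ and the MDP dynamics, hence it is $\psi$-independent and becomes the constant $c$.

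What remains is to rewrite the cross-entropy term as an expectation over the explicit generative procedure defining the distributional Bellman operator. From \Eqref{eq:z_pi}, a draw $Y \sim \mathcal{T}^{\pi_{\bar{\phi}}} \mathcal{Z}_{\bar{\psi}}(s,a)$ is produced by sampling $(r, s')$ according to the reward and transition kernels, then $a' \sim \pi_{\bar{\phi}}(\cdot \given s')$, then $Z(s',a') \sim \mathcal{Z}_{\bar{\psi}}(\cdot \given s',a')$, and finally setting $Y = r + \gamma\bigl(Z(s',a') - \alpha \log \pi_{\bar{\phi}}(a'\given s')\bigr) = \mathcal{T}^{\pi_{\bar{\phi}}} Z(s,a)$. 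Pushing the outer $(s,a)\sim B$ expectation together with $(r,s')$ drawn from the buffer (viewing $B$ as a buffer of full transitions $(s,a,r,s')$) and using the tower rule yields precisely the iterated expectation in the claimed expression.

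The only nontrivial step is the swap from an expectation under the push-forward law $\mathcal{T}^{\pi_{\bar{\phi}}} \mathcal{Z}_{\bar{\psi}}(s,a)$ to the iterated expectation over $(r, s', a', Z(s',a'))$; this is an instance of the law of the unconscious statistician applied to the measurable map $(r, s', a', z) \mapsto r + \gamma(z - \alpha \log \pi_{\bar{\phi}}(a'\given s'))$, valid under the absolute-continuity conditions that are already implicit in writing densities for $\mathcal{Z}_\psi$ and $\mathcal{Z}_{\bar{\psi}}$. I expect no further obstacle: once this identification is made, the entire argument is a two-line calculation and the proposition follows.
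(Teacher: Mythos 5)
Your proposal is correct and follows essentially the same route as the paper's own proof: decompose the KL divergence into the (negative) entropy of the target distribution $\mathcal{T}^{\pi_{\bar{\phi}}}\mathcal{Z}_{\bar{\psi}}(s,a)$, which is $\psi$-independent and absorbed into the additive constant $c$, plus the cross-entropy term, and then rewrite the expectation under the push-forward target law as the iterated expectation over $(r,s')\sim B$, $a'\sim\pi_{\bar{\phi}}$, $Z(s',a')\sim\mathcal{Z}_{\bar{\psi}}(\cdot\given s',a')$. Your explicit appeal to the law of the unconscious statistician makes the measure-change step slightly more careful than the paper's chain of equalities, but the argument is the same.
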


\begin{proof} The proof is adapted from \citet{ma2020dsac}. From \Eqref{eq:critic-obj_without_gradientscalar}, the loss function for distributional critic update is given by
\begin{align*}
    \begin{split}
        L_\mathcal{Z}(\psi) & = \mathbb E_{(s,a)\sim B} D_\mathrm{KL} ( \mathcal T^{\pi_{\bar{\phi}}} \mathcal{Z}_{\bar{\psi}} (s,a) \|\mathcal{Z}_{\psi} (s,a))\\
        &=  \mathbb E_{(s,a)\sim B} \Bigg[ \sum_{\mathcal T^{\pi_{\bar{\phi}}} Z(s,a)} \mathbb{P}(\mathcal T^{\pi_{\bar{\phi}}} Z(s,a) \given \mathcal T^{\pi_{\bar{\phi}}} \mathcal{Z}_{\bar{\psi}}(\cdot \given s,a)) \log \frac{\mathbb{P}(\mathcal T^{\pi_{\bar{\phi}}} Z(s,a) \given \mathcal T^{\pi_{\bar{\phi}}} \mathcal{Z}_{\bar{\psi}}(\cdot \given s,a))}{\mathbb{P}(\mathcal T^{\pi_{\bar{\phi}}} Z(s,a) \given  \mathcal{Z}_{\psi}(\cdot \given s,a))}\Bigg]\\
        &= -\mathbb E_{(s,a)\sim B} \Big[ \sum_{\mathcal T^{\pi_{\bar{\phi}}} Z(s,a)} \mathbb{P}(\mathcal T^{\pi_{\bar{\phi}}} Z(s,a) \given \mathcal T^{\pi_{\bar{\phi}}} \mathcal{Z}_{\bar{\psi}}(\cdot \given s,a)) \log \mathbb{P}(\mathcal T^{\pi_{\bar{\phi}}} Z(s,a) \given  \mathcal{Z}_{\psi}(\cdot \given s,a))\Big] + c\\
        &= -\mathbb E_{(s,a)\sim B} \Big[ \EE_{T^{\pi_{\bar{\phi}}} Z(s,a) \sim \mathcal T^{\pi_{\bar{\phi}}} \mathcal{Z}_{\bar{\psi}}(\cdot \given s,a)} \log \mathbb{P}(\mathcal T^{\pi_{\bar{\phi}}} Z(s,a) \given  \mathcal{Z}_{\psi}(\cdot \given s,a))\Big] + c\\
        &=-\mathbb E_{(s,a)\sim B} \Big[ \EE_{{(r,s') \sim B, a' \sim \pi_{\bar{\phi}},} \atop {Z(s',a') \sim \mathcal{Z}_{\bar{\psi}}(\cdot \given s',a')}} \log \mathbb{P}(\mathcal T^{\pi_{\bar{\phi}}} Z(s,a) \given  \mathcal{Z}_{\psi}(\cdot \given s,a))\Big] + c\\
        &= -\EE_{{(s,a,r,s') \sim B, a' \sim \pi_{\bar{\phi}},} \atop {Z(s',a') \sim \mathcal{Z}_{\bar{\psi}}(\cdot \given s',a')}} \big[\log \mathbb{P}(\mathcal T^{\pi_{\bar{\phi}}} Z(s,a) \given  \mathcal{Z}_{\psi}(\cdot \given s,a))\big] + c
    \end{split}
\end{align*}
where $c$ is a term independent of $\psi$. This completes the proof.
    
\end{proof}

\section{DMC Experiment Results}\label{appendix:dmc_result}

For DMC \citep{tassa2018deepmind}, we consider 12 hard exploration tasks with both dense and sparse rewards. We refer the readers to \Cref{table:DMC_env_list} for the list of these 12 tasks and their corresponding properties.
From \Cref{table:results_dmc} and \Cref{fig:training_curves_dmc}, we see that LSAC outperforms both model-free (DSAC-T \citep{duan2023dsac}, DIPO \citep{yang2023policy}, TD3 \citep{fujimoto2018addressing}, PPO \citep{schulman2017proximal}, SAC \citep{haarnoja2018soft}, TRPO \citep{schulman2015trust}, DrQ-v2 \citep{yarats2022mastering}) and model-based (Dreamer \citep{Hafner2020Dream}) in 9 out of 12 tasks. 

\begin{figure}[h]
    \def\inbetweenfiguresspace{\hspace{-3pt}} %
    \def\inbetweentitlespace{\vspace{-15pt}} %
    \def\inbetweenrowsspace{\vspace{5pt}} %
    \centering
    \begin{subfigure}{\figurewidth}
        \includegraphics[width=\linewidth, clip, trim=0 0 0 0]{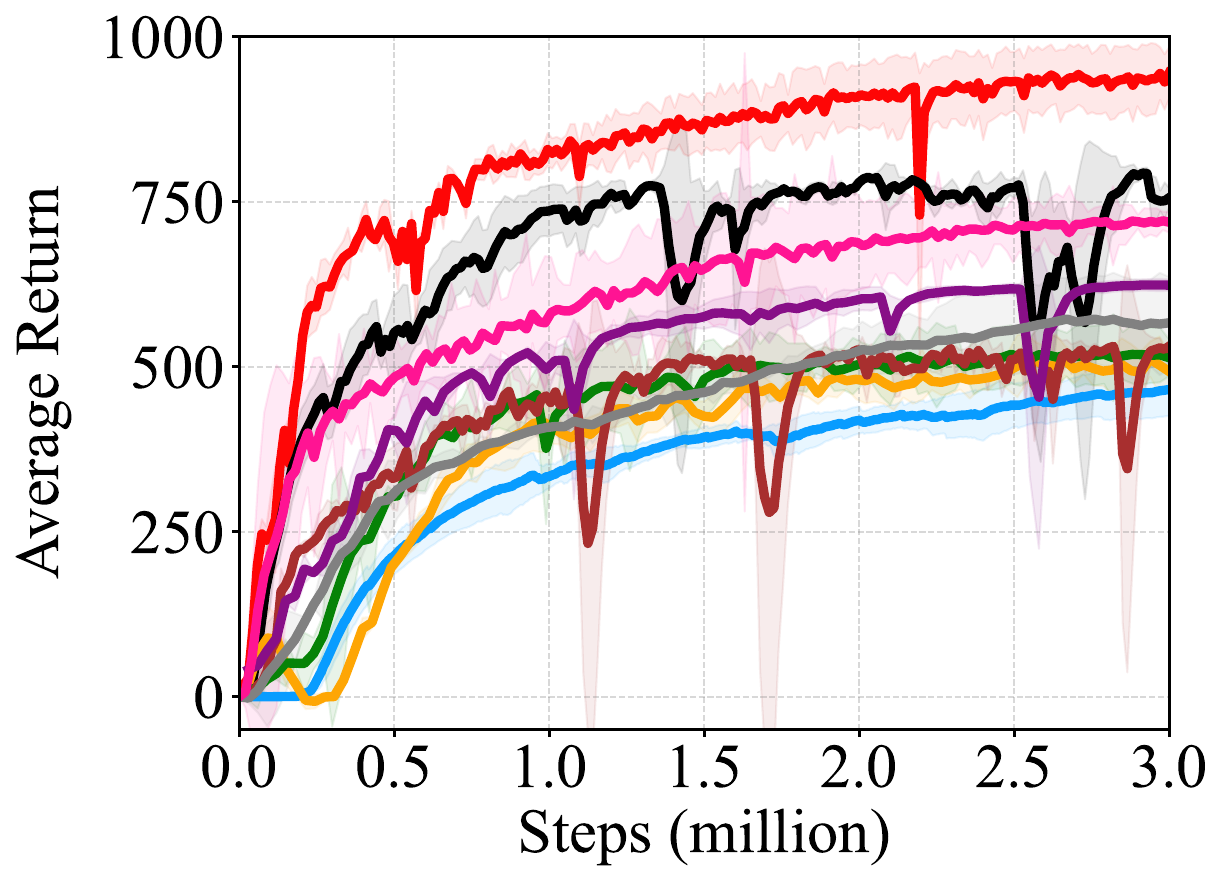}
        \inbetweentitlespace
        \caption*{\hspace{15pt} (a) Cheetah Run}\label{fig:cheetah_run_plot}
        \inbetweenrowsspace
    \end{subfigure}
    \inbetweenfiguresspace
    \begin{subfigure}{\figurewidth}
        \includegraphics[width=\linewidth, clip, trim=0.3cm 0 0 0]{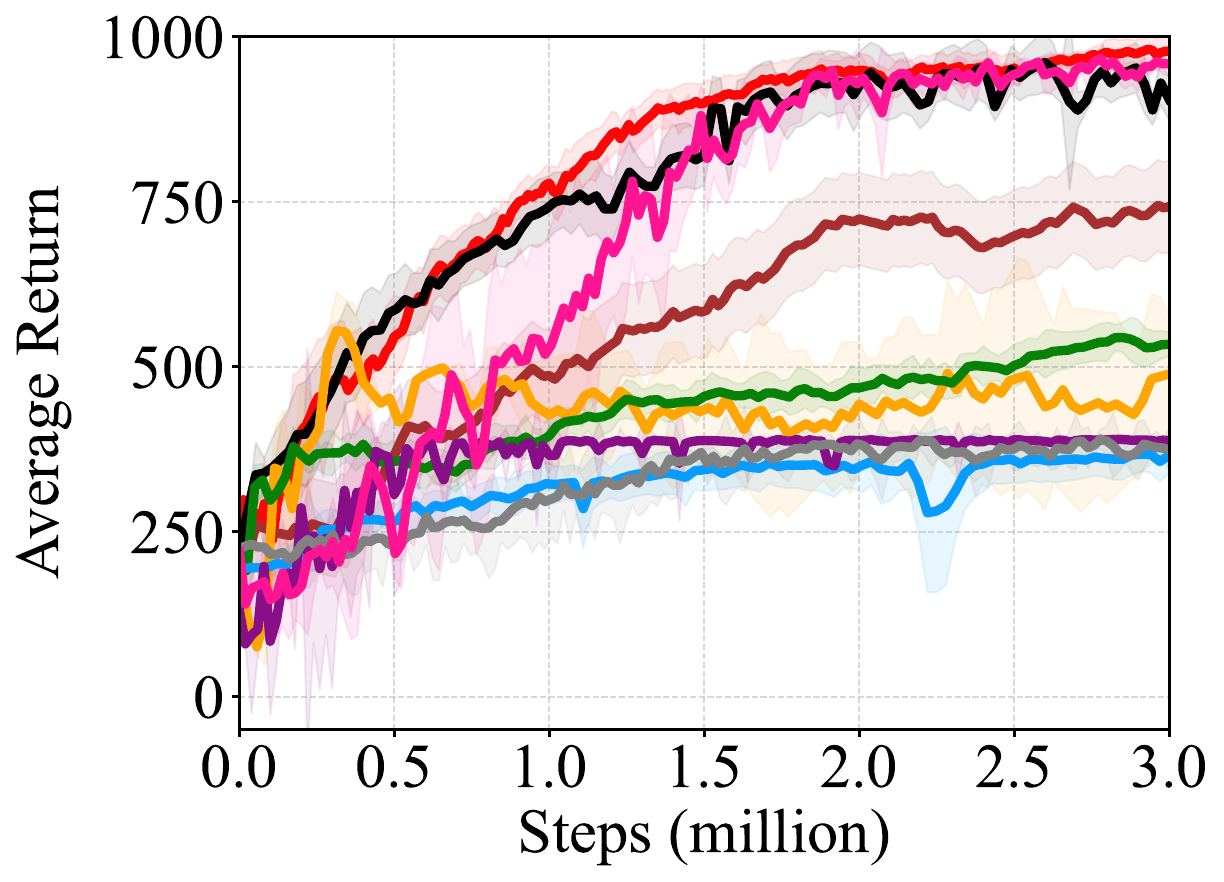}
        \inbetweentitlespace
        \caption*{\hspace{12pt} (b) Finger Turn Easy}\label{fig:finger_turn_easy_plot}
        \inbetweenrowsspace
    \end{subfigure}
    \inbetweenfiguresspace
    \begin{subfigure}{\figurewidth}
        \includegraphics[width=\linewidth, clip, trim=0.3cm 0 6 0]{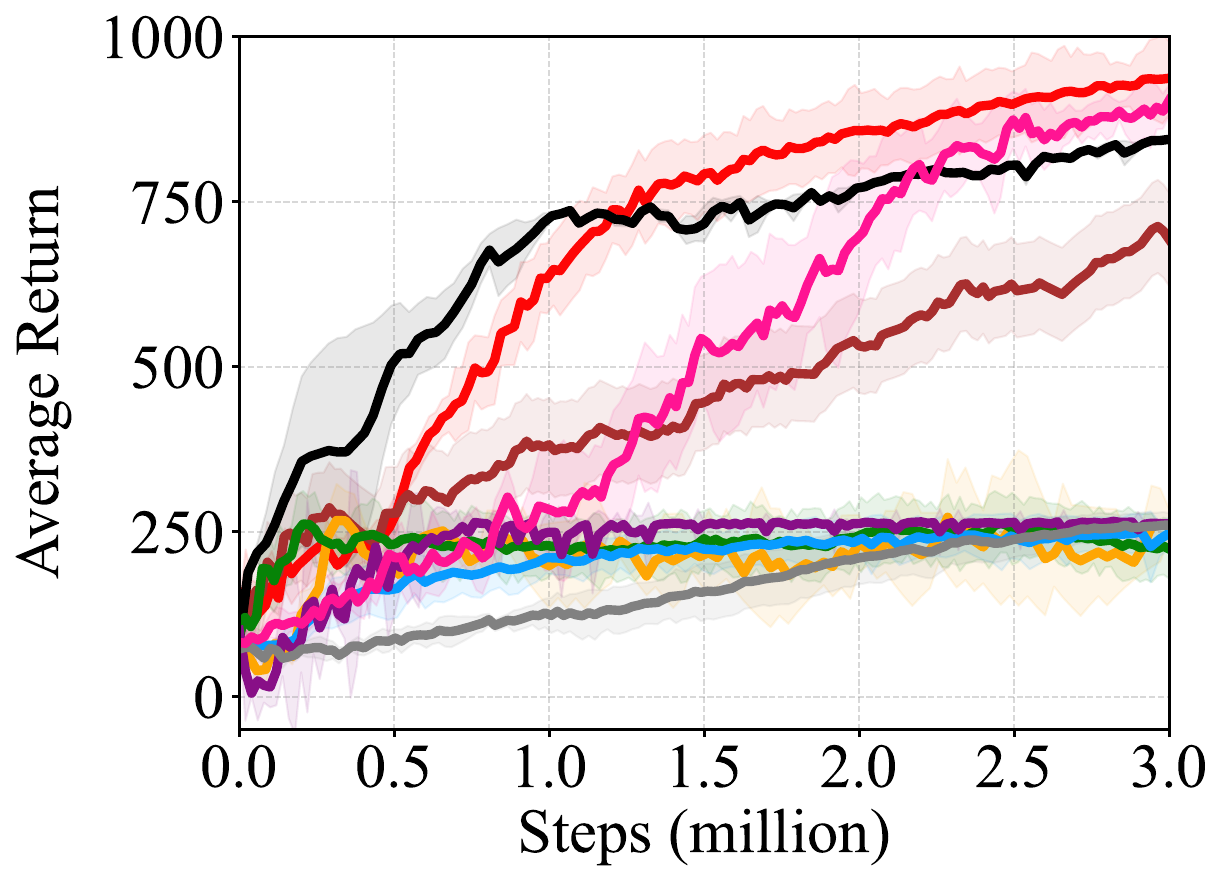}
        \inbetweentitlespace
        \caption*{\hspace{17pt} (c) Finger Turn Hard}\label{fig:finger_turn_hard_plot}
        \inbetweenrowsspace
    \end{subfigure}
    \begin{subfigure}{\figurewidth}
        \includegraphics[width=\linewidth, clip, trim=0 0 0 0]{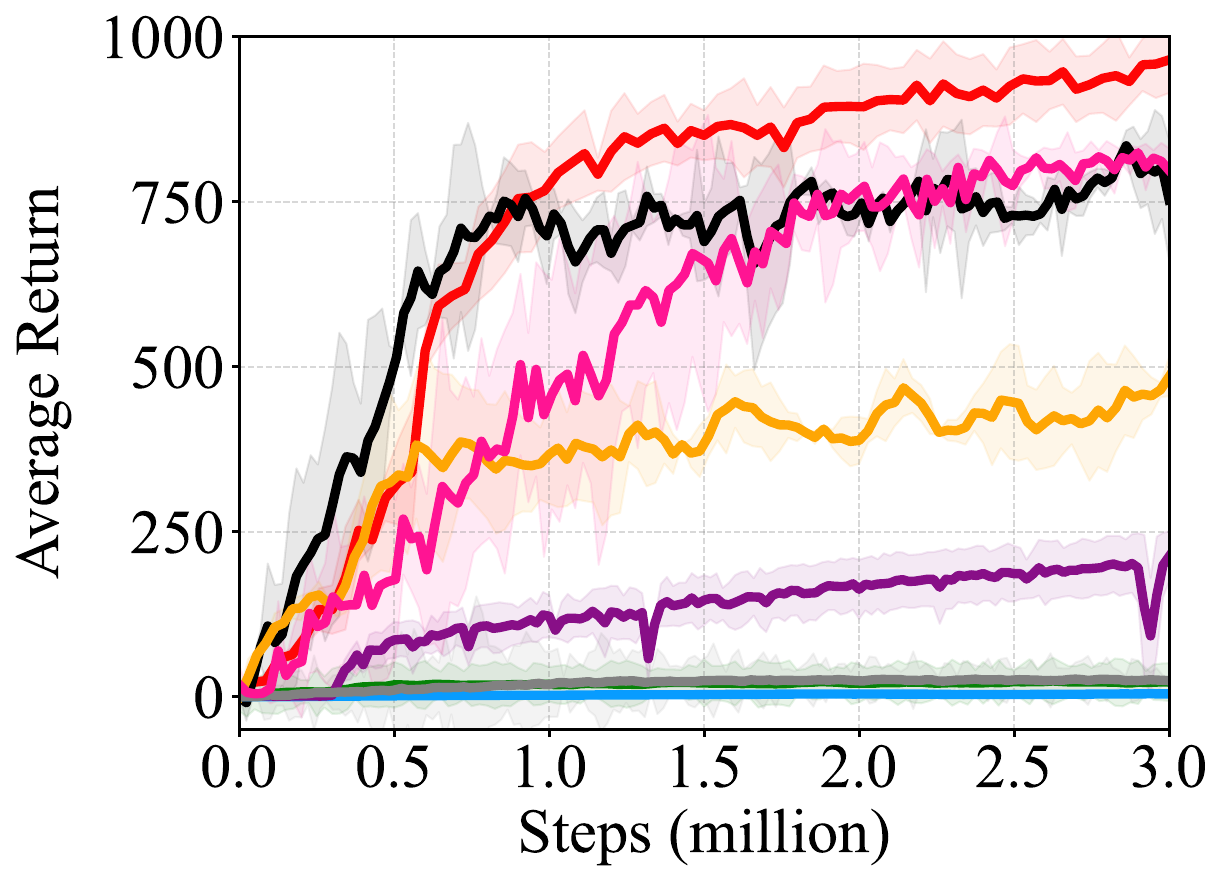}
        \inbetweentitlespace
        \caption*{\hspace{15pt} (d) Reacher Hard}\label{fig:reacher_hard_plot}
    \end{subfigure}
    \inbetweenfiguresspace
    \begin{subfigure}{\figurewidth}
        \includegraphics[width=\linewidth, clip, trim=0.3cm 0 0 0]{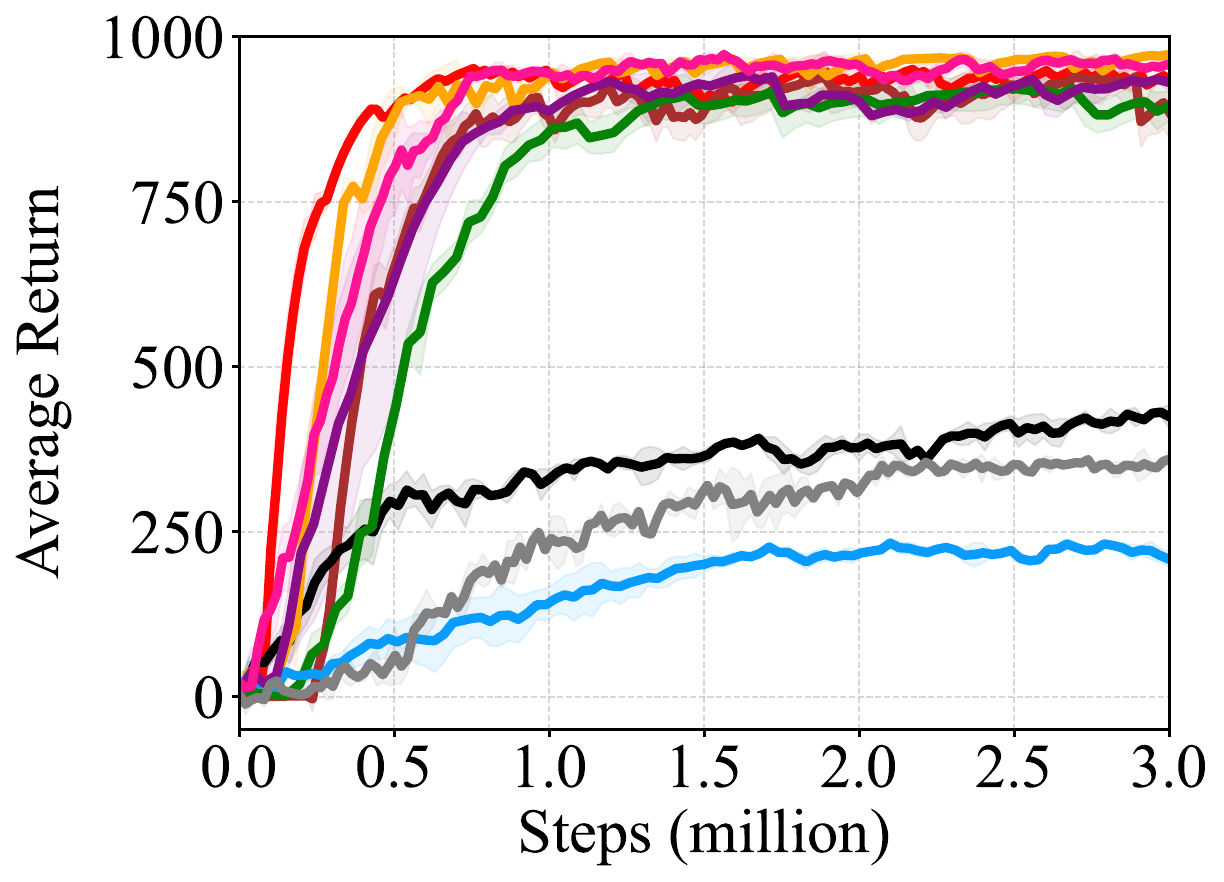}
        \inbetweentitlespace
        \caption*{\hspace{10pt} (e) Reacher Easy}\label{fig:reacher_easy_plot}
    \end{subfigure}
    \inbetweenfiguresspace
    \begin{subfigure}{\figurewidth}
        \includegraphics[width=\linewidth, clip, trim=0.3cm 0 0 0]{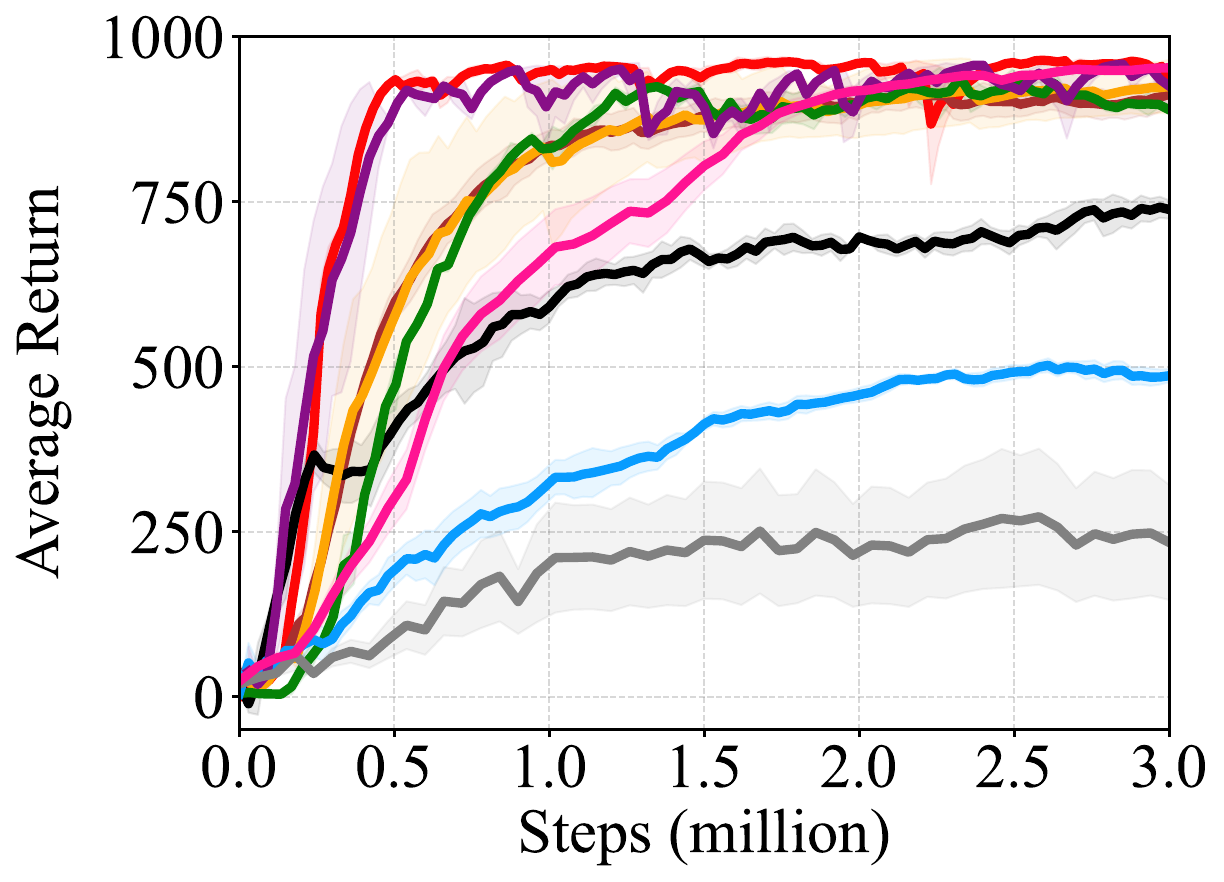}
        \inbetweentitlespace
        \caption*{\hspace{15pt} (f) Walker Walk}\label{fig:walker_walk_plot}
    \end{subfigure}
    \begin{subfigure}{\figurewidth}
        \inbetweenrowsspace
        \includegraphics[width=\linewidth, clip, trim=0.3cm 0 0 0]{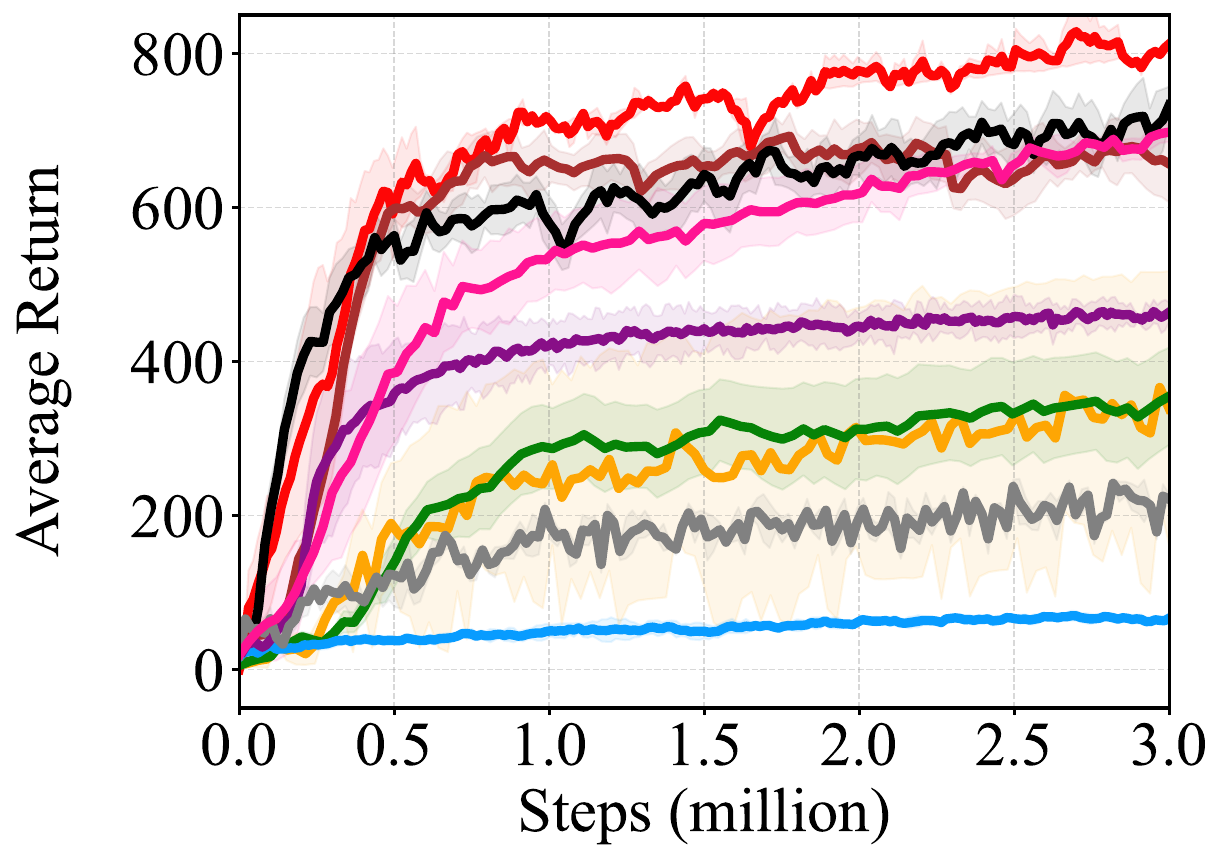}
        \inbetweentitlespace
        \caption*{\hspace{10pt} (g) Walker Run}\label{fig:walker_run_plot}
    \end{subfigure}
    \inbetweenfiguresspace
    \begin{subfigure}{\figurewidth}
        \includegraphics[width=\linewidth, clip, trim=0.3cm 0 0 0]{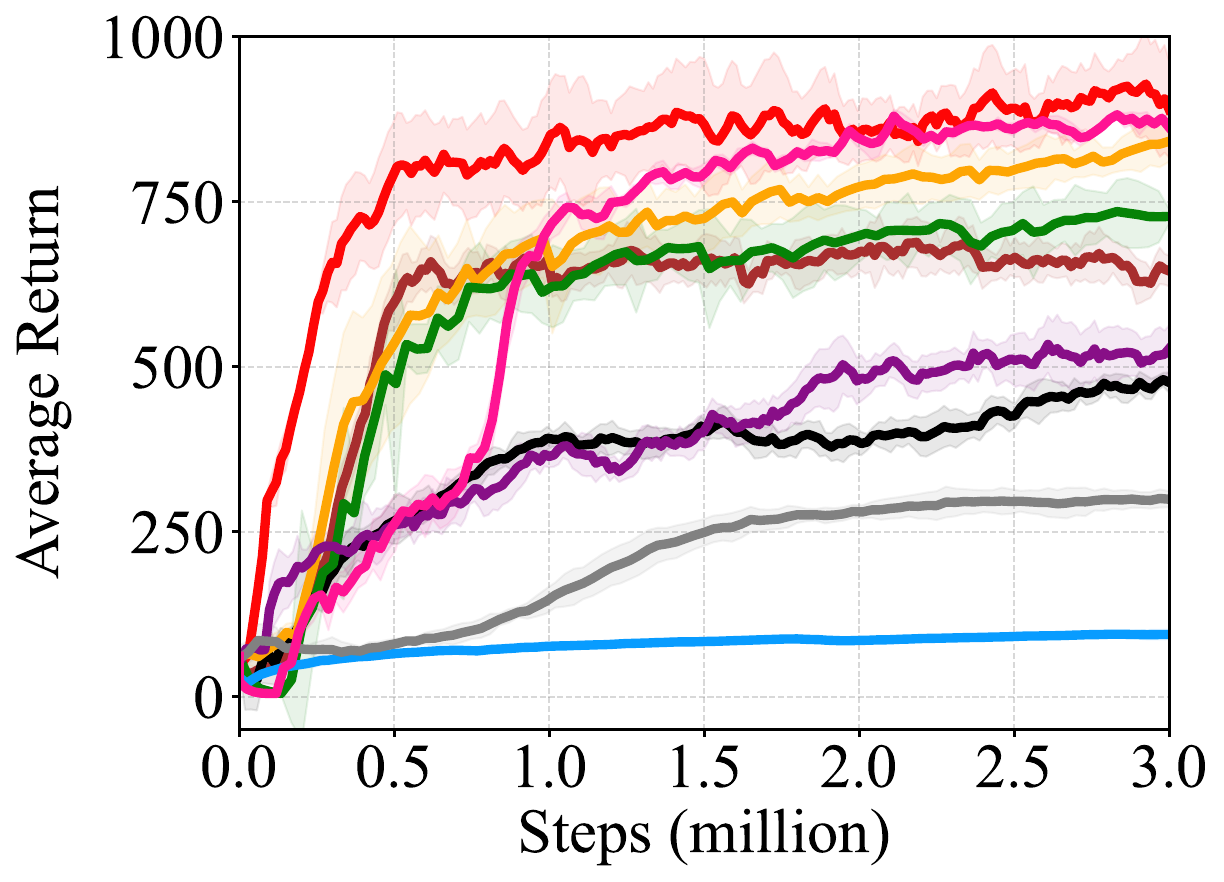}
        \inbetweentitlespace
        \caption*{\hspace{15pt} (h) Fish Swim}\label{fig:fish_swim_plot}
    \end{subfigure}
    \inbetweenfiguresspace
    \begin{subfigure}{\figurewidth}
        \includegraphics[width=\linewidth, clip, trim=0.3cm 0 0 0]{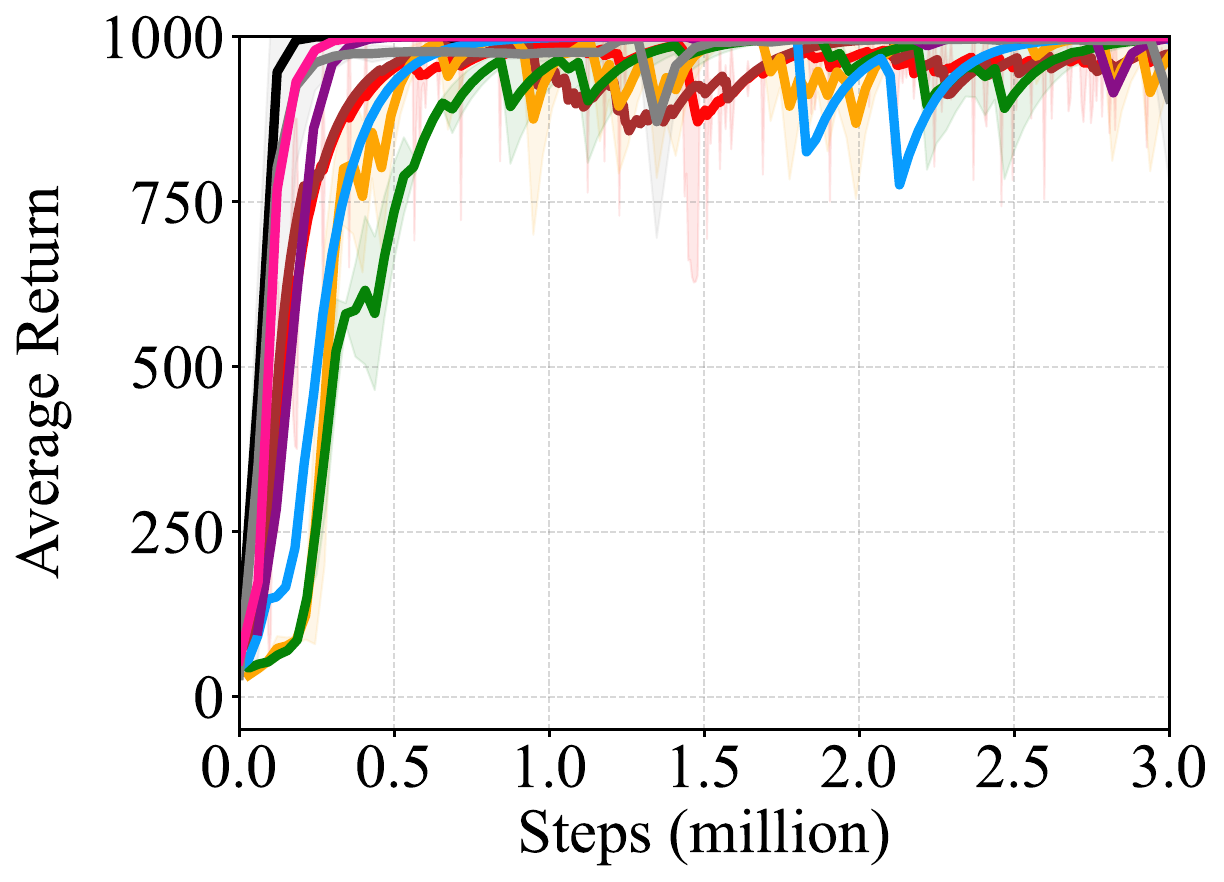}
        \inbetweentitlespace
        \caption*{\hspace{15pt} (i) Cartpole Balance Sparse}\label{fig:cartpole_balance_sparse_plot}
    \end{subfigure}

    \begin{subfigure}{\figurewidth}
        \inbetweenrowsspace
        \includegraphics[width=\linewidth, clip, trim=0.3cm 0 0 0]{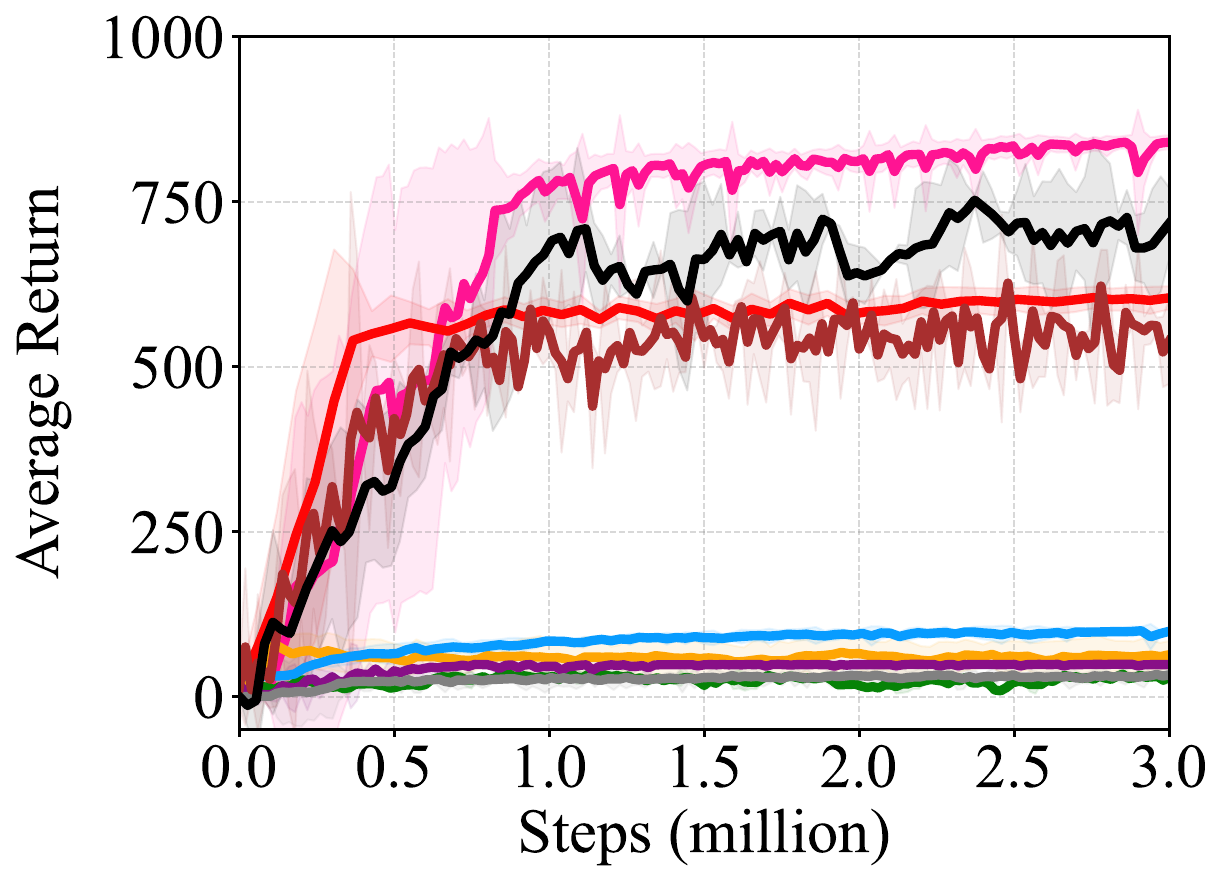}
        \inbetweentitlespace
        \caption*{\hspace{10pt} (j) Cartpole Swingup Sparse}\label{fig:cartpole_swingup_sparse_plot}
    \end{subfigure}
    \inbetweenfiguresspace
    \begin{subfigure}{\figurewidth}
        \includegraphics[width=\linewidth, clip, trim=0.3cm 0 0 0]{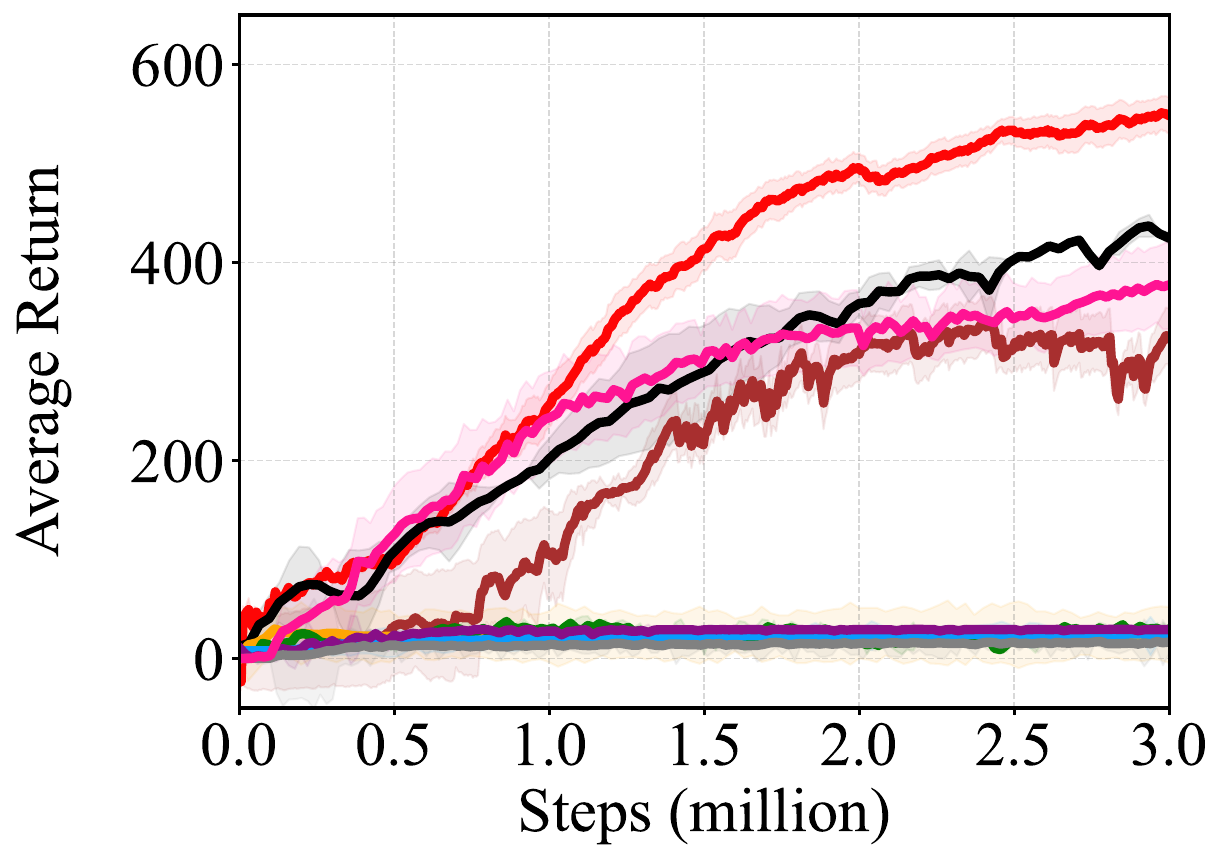}
        \inbetweentitlespace
        \caption*{\hspace{15pt} (k) Hopper Hop}\label{fig:hopper_hop_plot}
    \end{subfigure}
    \inbetweenfiguresspace
    \begin{subfigure}{\figurewidth}
        \includegraphics[width=\linewidth, clip, trim=0.3cm 0 0 0]{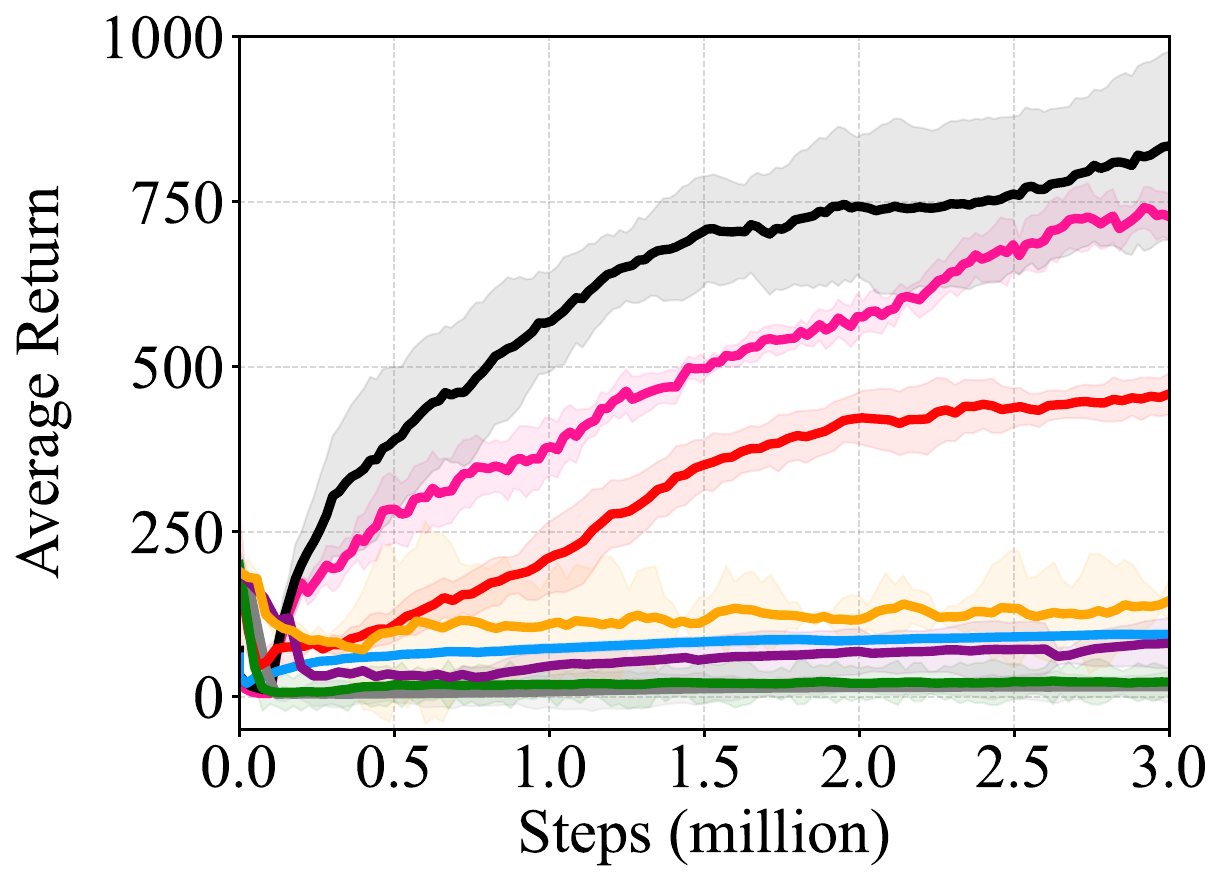}
        \inbetweentitlespace
        \caption*{\hspace{15pt} (l) Quadruped Run}\label{fig:quadruped_run_plot}
    \end{subfigure}
    \begin{subfigure}{2\figurewidth}
        \inbetweenrowsspace
        \includegraphics[width=\linewidth]{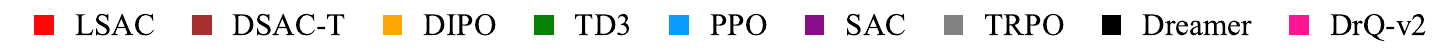}
    \end{subfigure}
    \caption{Training curves for 12 DMC \citep{tassa2018deepmind} continuous control tasks. Results are gathered throughout \texttt{3e6} steps of online interaction with the environments, averaged over a window size of 20 and across 10 seeds. Solid lines represent the median performance, and the shaded regions correspond to 90\% confidence interval.}
    \label{fig:training_curves_dmc}
\end{figure}

\begin{table}[th]
\centering
\resizebox{\linewidth}{!}{
\begin{tabular}{cccccccccc}
\toprule
${}_{\text{Environments}}\mkern-6mu\setminus\mkern-6mu{}^{\text{Methods}}$            & LSAC (ours) & DSAC-T & DIPO & SAC & TD3 & PPO & TRPO & Dreamer & DrQ-v2 \\ \midrule
Cheetah Run 	& \bf{967 $\pm$ 98} & 540 $\pm$ 173 & 521 $\pm$ 112 & 693 $\pm$ 191 & 549 $\pm$ 138 & 492 $\pm$ 76 & 603 $\pm$ 48 & 792$\pm$ 168 & 747 $\pm$ 172 \\  
Cartpole Bal. Sp.	& \bf{1000 $\pm$ 11} & \bf{1000 $\pm$ 24} & 1000 $\pm$ 37 & 100 $\pm$ 8 & 991 $\pm$ 12 & 997 $\pm$ 6 & 996 $\pm$ 9 & 1000 $\pm$ 5 & 1000 $\pm$ 11\\
Cartpole Swi. Sp. & 610 $\pm$ 41 & 573 $\pm$ 102 & 112 $\pm$ 24 & 78 $\pm$ 23 & 65 $\pm$ 32 & 164 $\pm$ 18 & 42 $\pm$ 15 & 751 $\pm$ 142 & {\bf 783 $\pm$ 52}\\
Reacher Easy & {\bf 941 $\pm$ 23} & 928 $\pm$ 46 & 931 $\pm$ 39 & 912 $\pm$ 41 & 905 $\pm$ 20 & 249 $\pm$ 11 & 331 $\pm$ 30 & 481 $\pm$ 42 & 938 $\pm$ 34 \\
Reacher Hard & {\bf 981 $\pm$ 142} & 11 $\pm$ 4 & 501 $\pm$ 182 & 243 $\pm$ 91 & 38 $\pm$ 14 & 9 $\pm$ 3 & 8 $\pm$ 1 & 852 $\pm$ 152 & 861 $\pm$ 109 \\
Fish Swim & \bf{892 $\pm$ 122} & 716 $\pm$ 59 & 816 $\pm$ 65 & 531 $\pm$ 92 & 748 $\pm$ 101 & 117 $\pm$ 12 & 283 $\pm$ 44 & 509 $\pm$ 38 & 870 $\pm$ 74 \\
Hopper Hop & {\bf 557 $\pm$ 71} & 344 $\pm$ 89 & 27 $\pm$ 2 & 10 $\pm$ 4 & 13 $\pm$ 8 & 7 $\pm$ 3 & 10 $\pm$ 2 & 407 $\pm$ 31 & 392 $\pm$ 113\\
Finger Turn Easy & {\bf 978 $\pm$ 64} & 751 $\pm$ 139 & 548 $\pm$ 107 & 388 $\pm$ 41 & 525 $\pm$ 73 & 373 $\pm$ 62 & 379 $\pm$ 28 & 965 $\pm$ 83 & 972 $\pm$ 79 \\
Finger Turn Hard & \bf{949 $\pm$ 34} & 743 $\pm$ 42 & 271 $\pm$ 161 & 251 $\pm$ 27 & 252 $\pm$ 21 & 250 $\pm$ 11 & 253 $\pm$ 34 & 790 $\pm$ 21 & 832 $\pm$ 79 \\
Walker Walk & 964 $\pm$ 20 & 913 $\pm$ 28 & 929 $\pm$ 37 & \bf {968 $\pm$ 47} & 917 $\pm$ 12 & 532 $\pm$ 9 & 261 $\pm$ 187 & 752 $\pm$ 41 & 965 $\pm$ 34\\
Walker Run & {\bf 869 $\pm$ 67} & 731 $\pm$ 93 & 391 $\pm$ 227 & 465 $\pm$ 71 & 388 $\pm$ 86 & 89 $\pm$ 21 & 221 $\pm$ 33 & 726 $\pm$ 85 & 717 $\pm$ 39 \\
Quadruped Run & 473 $\pm$ 41 & 16 $\pm$ 3 & 157 $\pm$ 94 & 72 $\pm$ 25 & 21 $\pm$ 19 & 113 $\pm$ 27 & 11 $\pm$ 5 & 788 $\pm$ 139 & {\bf 753 $\pm$ 58}\\
\bottomrule
\end{tabular}
}
\caption{Maximum average return of LSAC and baselines across 10 seeds over \texttt{3e6} training steps on selected DMC \citep{tassa2018deepmind} tasks, which consist of complex control tasks that feature complex dynamics, sparse rewards, and hard explorations. Maximum value and corresponding 90\% confidence interval for each task is shown in bold.}
\label{table:results_dmc}
\end{table}

\section{Goal-Reaching Maze Experiments}\label{Section:goal-reaching-experiment-appendix}

In this section, we describe the \texttt{PointMaze\_Medium-v3} and the \texttt{AntMaze-v4} environments in detail.

\paragraph{\texttt{PointMaze\_Medium-v3} Environment.}

In \texttt{PointMaze\_Medium-v3}, the agent is tasked with manipulating a ball to reach some unknown goal position in the maze, with each observation as a dictionary, consisting of an array of the ball's kinetic information, an \texttt{achieved\_goal} key representing the current state of the green ball, and a \texttt{desired\_goal} key representing the final goal to be achieved. The action is a force vector applied to the ball. The initial state of the ball is at the center of the maze, and we define the goal positions to be at the upper-right and lower-left corners. The reward function is defined to be the negative Euclidean distance between the desired goal and the visited state.

\paragraph{\texttt{AntMaze-v4} Environment.}
Similar to \texttt{PointMaze\_Medium-v3}, \texttt{AntMaze-v4} is also a navigation task, in which the agent is tasked with controlling a complex 8 degree-of-freedom (DOF) quadruped robot. The objective is to reach one of the two goal positions where the red balls are located. Each goal can be accessed through two routes. The agent can bypass the obstacle on the right by either going up or down, and similarly for the two obstacles on the left. The episode length is set to $700$. A sparse 0-1 reward is applied upon reaching the goal.

\begin{figure}[h]
    \def\inbetweenfiguresspace{\hspace{-55pt}} %
    \def\mazeoffset{\hspace{34pt}}
    \def\inbetweentitlespace{\vspace{-15pt}} %
    \def\inbetweenrowsspace{\vspace{10pt}} %
    \centering
    \begin{subfigure}{\figurewidth}
        \mazeoffset\includegraphics[width=0.5\linewidth, clip, trim=0 0 0 0]{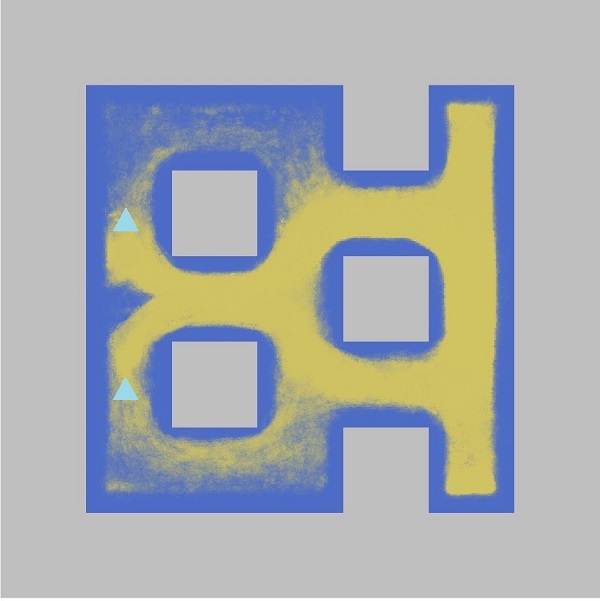}
        \caption{LSAC (ours)}\label{fig:lsac_antmaze}
    \end{subfigure}
    \inbetweenfiguresspace
    \begin{subfigure}{\figurewidth}
        \mazeoffset\includegraphics[width=0.5\linewidth, clip, trim=0cm 0 0 0]{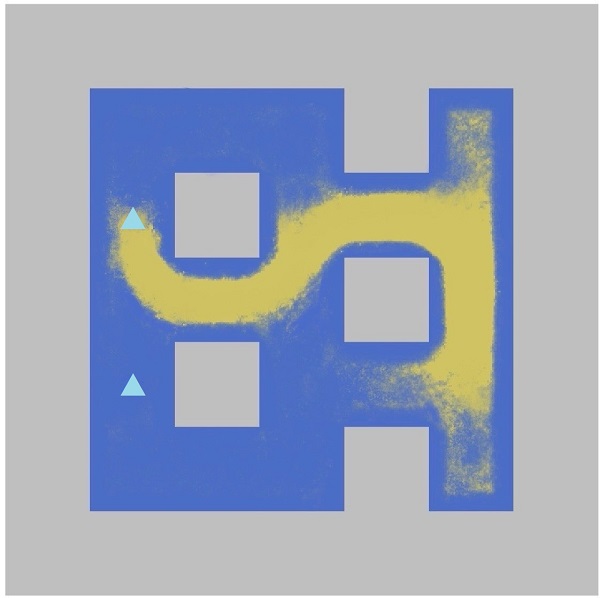}
        \caption{DSAC-T}\label{fig:dsac-t_antmaze}
    \end{subfigure}
    \inbetweenfiguresspace
    \begin{subfigure}{\figurewidth}
        \mazeoffset\includegraphics[width=0.5\linewidth, clip, trim=0cm 0 0 0]{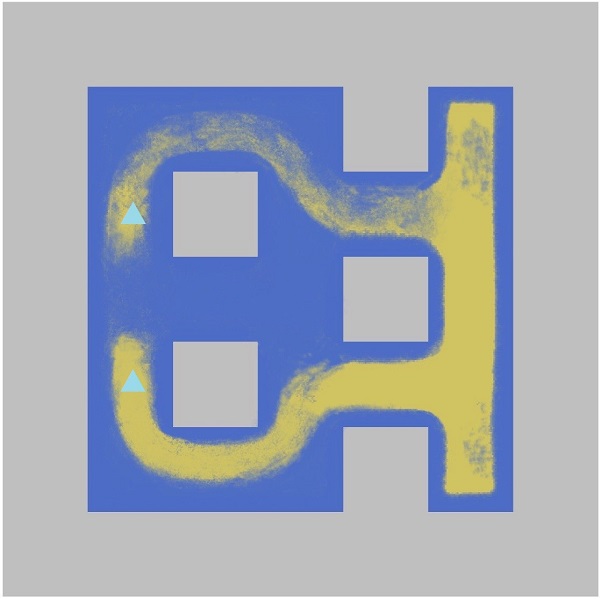}
        \caption{DIPO}\label{fig:dipo_antmaze}
    \end{subfigure}
    \inbetweenfiguresspace
    \begin{subfigure}{\figurewidth}
        \mazeoffset\includegraphics[width=0.5\linewidth, clip, trim=0 0 0 0]{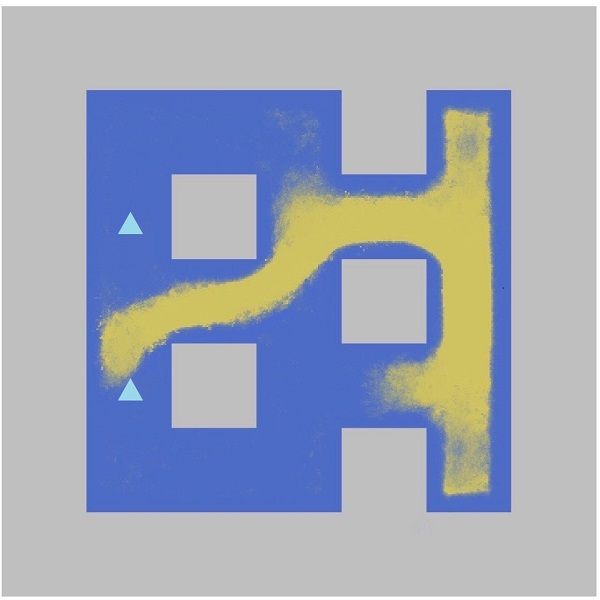}
        \caption{SAC}\label{fig:sac_antmaze}
    \end{subfigure}
    \begin{subfigure}{\figurewidth}
        \inbetweenrowsspace
        \mazeoffset\mazeoffset\includegraphics[width=0.5\linewidth, clip, trim=0 0 0 0]{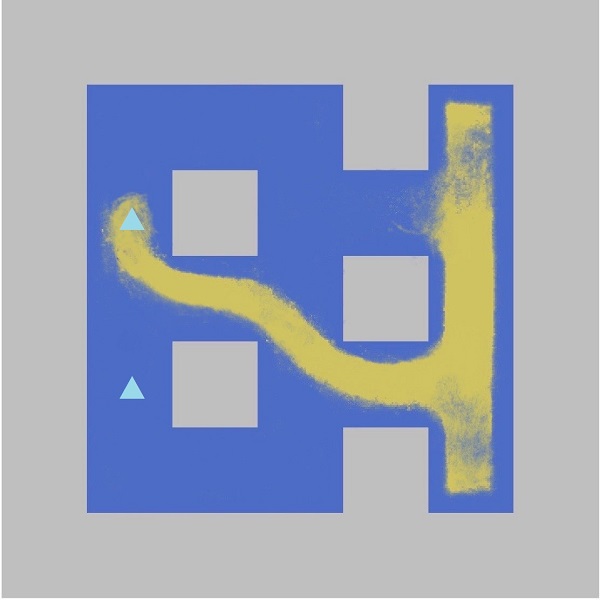}
        \caption*{\mazeoffset\mazeoffset (e) TD3}\label{fig:td3_antmaze}
    \end{subfigure}
    \inbetweenfiguresspace
    \begin{subfigure}{\figurewidth}
        \mazeoffset\mazeoffset\includegraphics[width=0.5\linewidth, clip, trim=0 0 0 0]{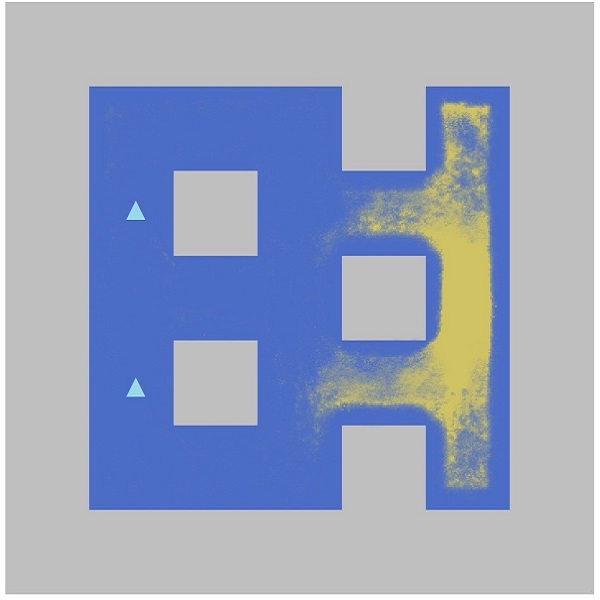}
        \caption*{\mazeoffset\mazeoffset (f) PPO}\label{fig:ppo_antmaze}
    \end{subfigure}
    \inbetweenfiguresspace
    \begin{subfigure}{\figurewidth}
        \mazeoffset\mazeoffset\includegraphics[width=0.5\linewidth, clip, trim=0 0 0 0]{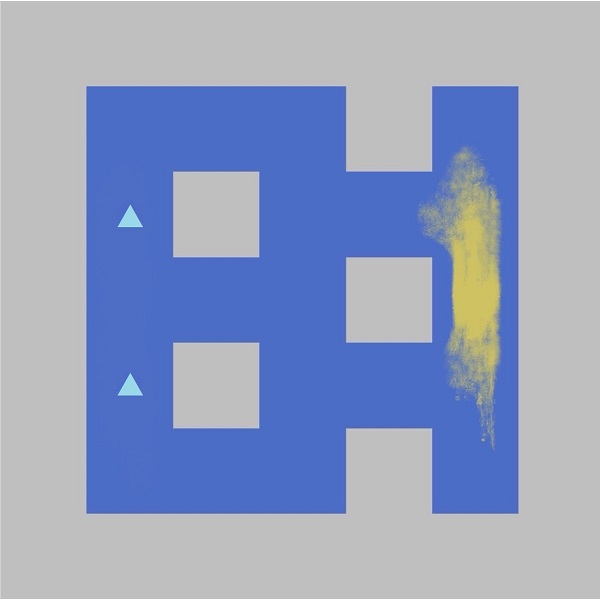}
        \caption*{\mazeoffset\mazeoffset (g) TRPO}\label{fig:trpo_antmaze}
    \end{subfigure}
    \inbetweenfiguresspace
    \begin{subfigure}{\figurewidth}
        \mazeoffset\mazeoffset\includegraphics[width=0.103\linewidth, clip, trim=0 0.25cm 0 0]{extra_plot/maze_colorbar.pdf}
        \caption*{}\label{fig:colorbar2}
    \end{subfigure}
    \caption{Exploration density maps of LSAC and baseline algorithms tested on the \texttt{AntMaze-v4} environment. The two goals are depicted by the triangle markers which are located on the left side of the maze. The starting position is at the center of the right side of the maze.}
    \label{fig:antmaze_density_maps}
\end{figure}

\begin{figure}[H]
    \def\inbetweenfiguresspace{\hspace{-3pt}} %
    \def\inbetweentitlespace{\vspace{-15pt}} %
    \def\inbetweenrowsspace{\vspace{5pt}} %
    \centering  
    \captionsetup[subfigure]{labelformat=empty}
    \begin{subfigure}{2\figurewidth}
        \inbetweenrowsspace
        \includegraphics[width=\linewidth]{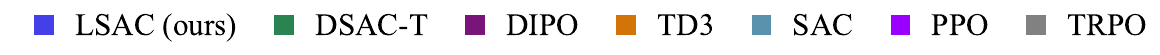}
    \end{subfigure}
    \begin{subfigure}{\figurewidth}
        \hspace{-100pt}\includegraphics[width=2.5\linewidth]{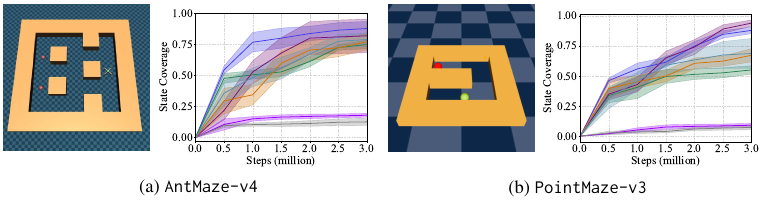}
    \inbetweentitlespace
    \label{fig:maze_state_coverage}
    \end{subfigure}
    \caption{State coverage of LSAC and baselines in the maze environments. For state coverage, we measure the binary coverage of each cell. The y-axis indicates the percentage of cells that have been visited.}
    \label{fig:state_coverage}
\end{figure}

\section{Implementation Details for the Experiments}

In this section, we provide further details on the environments and the algorithm implementation.

\subsection{Environments}

\paragraph{MuJoCo and DMC.} Our experimental evaluations are mainly based on the MuJoCo  environments \citep{brockman2016openai} and the DeepMind Control Suite (DMC) \citep{tassa2018deepmind}. For MuJoCo, we pick the six most challenging vector-input-based control tasks (HalfCheetah, Ant, Humanoid, Walker2d, Hopper, and Swimmer). On the other hand, DMC offers environments of various difficulty levels, including complex multi-joint bodies and high degree of freedom settings. Hence, we select in particular sparse and hard environments to test the exploration ability of LSAC and baselines. No modifications were made to the state, action, or reward spaces. The action space $\mathcal A$ in both continuous control benchmarks is by default the box $[-a_l,a_h]^d$, where $d := \dim \mathcal A$ is the dimension of the action space, and $a_l, a_h$ are the low and high action limit, respectively. DMC environment observations, different from MuJoCo's vectorized states input, are stacks of three consecutive RGB images, each of size $84\times 84$, stacked along the channel dimension to enable inference of dynamic information like velocity and acceleration. Thus, DMC places more emphasis than MuJoCo on pixel-based proprioceptive tasks, which better tests an agent's learning in visual continuous control together with more challenging exploration tasks.

\begin{figure}[H]
    \def\inbetweenfiguresspace{\hspace{-3pt}} %
    \def\inbetweentitlespace{\vspace{-15pt}} %
    \def\inbetweenrowsspace{\vspace{5pt}} %
    \begin{subfigure}{\figurewidth}
        \includegraphics[width=3\linewidth]{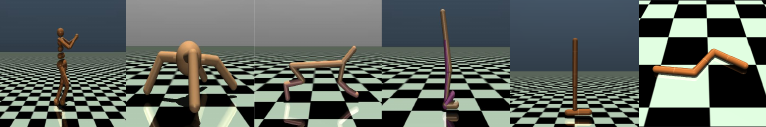}
        \inbetweentitlespace
        \inbetweentitlespace
        \caption*{}
    \end{subfigure}
    \caption{OpenAI MuJoCo \citep{brockman2016openai} benchmarks. From left to right: Humanoid, Ant, HalfCheetah, Walker2d, Hopper, and Swimmer.
    }
    \label{fig:mujoco_envs}
\end{figure}

\begin{figure}[H]
    \def\inbetweenfiguresspace{\hspace{-3pt}} %
    \def\inbetweentitlespace{\vspace{-15pt}} %
    \def\inbetweenrowsspace{\vspace{5pt}} %
    \begin{subfigure}{\figurewidth}
        \includegraphics[width=3\linewidth]{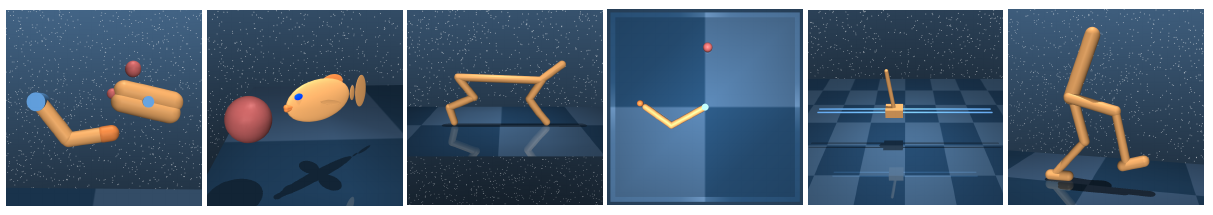}
        \inbetweentitlespace
        \inbetweentitlespace
        \caption*{}
    \end{subfigure}
    \caption{DeepMind Control Suite \citep{tassa2018deepmind} domains. From left to right: Finger, Fish, Cheetah, Reacher, Cartpole, and Walker. 
    }
    \label{fig:dmc_envs}
\end{figure}

For DMC, we consider 12 hard exploration tasks with both dense and sparse rewards, in which many other off-policy model-free RL algorithms often struggle. We refer the reader to Table~\ref{table:DMC_env_list} for detailed description of the DMC environments we use in our experiments. Figure \ref{fig:dmc_envs} shows some examples of the DMC environments.

\begin{table}[ht]
\centering
\setlength{\tabcolsep}{5.5pt}
\begin{center}
\begin{small}
\begin{tabular}{lcccc}
\toprule
\bf{Task} & \bf{Traits} & \bf{dim($\mathcal{S}$)} & \bf{dim($\mathcal{A}$)} \\
\midrule
Cheetah Run & run, dense & 18 & 6  \\  
Finger Turn Easy & turn, sparse & 6 & 2 \\
Finger Turn Hard & turn, sparse & 6 & 2 \\
Reacher Hard & reach, dense & 4 & 2 \\
Reacher Easy & reach, dense & 4 & 2 \\
Walker Walk & walk, dense & 18 & 6 \\
Walker Run & run, dense & 18 & 6 \\
Fish Swim &  swim, dense & 26 & 5 \\
Cartpole Balance Sparse & balance, sparse & 4 & 1 \\
Cartpole Swingup Sparse & swing, sparse & 4 & 1 \\
Hopper Hop & move, dense & 14 & 4 \\
Quadruped Run & run, dense & 56 & 12 \\
\bottomrule
\end{tabular}
\end{small}
\end{center}
\caption{A detailed description of each task used in the DeepMind Control Suite (DMC) \citep{tassa2018deepmind} experiment.}
\label{table:DMC_env_list}
\vskip -0.1in
\end{table}

\subsection{Baseline methods}

For the evaluation of the baseline algorithms on MuJoCo, we consider DSAC-T \citep{duan2023dsac}, DIPO \citep{yang2023policy}, SAC \citep{haarnoja2018soft}, TD3 \citep{fujimoto2018addressing}, PPO \citep{schulman2017proximal}, TRPO \citep{schulman2015trust} and REDQ \citep{chen2021randomized}. Additionally, for the DMC benchmark we add two additional strong  baselines for visual learning, DrQ-v2 \citep{yarats2022mastering} and Dreamer \citep{Hafner2020Dream}. In particular, Dreamer is a leading model-based algorithm for visual continuous control setting, which tends to achieve better sample complexity in the expense of a greater computational burden from learning a separate state dynamics model. The highest reported performance on the continuous tasks is achieved by DSAC-T \citep{duan2023dsac}, an improved version of DSAC \citep{duan2021distributional} that uses expected value substituting, twin value distribution learning, and variance-based critic gradient adjusting. Note that since vanilla SAC is a rather weak baseline, we consider augmenting it with the convolutional encoder from SAC-AE \citep{yarats2020improving}, similar to what has been done in \citep{yarats2022mastering}. We use publicly available codebases for reproducing the baseline results.\footnote{ DSAC-T: \url{https://github.com/Jingliang-Duan/DSAC-v2}; DIPO: \url{https://github.com/BellmanTimeHut/DIPO}; SAC: \url{https://github.com/haarnoja/sac}; TD3: \url{https://github.com/sfujim/TD3}; PPO: \url{https://github.com/nikhilbarhate99/PPO-PyTorch}; TRPO: \url{https://github.com/ikostrikov/pytorch-trpo}; REDQ: \url{https://github.com/thu-ml/tianshou}; QSM: \url{https://github.com/Alescontrela/score_matching_rl}; Dreamer: \url{https://github.com/danijar/dreamer}; DrQ-V2: \url{https://github.com/facebookresearch/drqv2}.
}

The hyperparameters shared across all methods for the MuJoCo tasks are listed in Table \ref{table:model_hyperparams}. We adopted the best hyperparameters reported by the authors of the respective baseline methods, as all of them utilize MuJoCo for continuous control evaluation and conducted extensive hyperparameter sweeps. For implementation simplicity, we maintain these key hyperparameters fixed when testing on the DMC benchmark.

\begin{table}[ht]
\centering
\setlength{\tabcolsep}{1.5pt}
\begin{center}
\begin{small}
\begin{tabular}{lccccccc}
\toprule
\bf{Hyperparameter} & \bf{\algname (ours)} & \bf{DSAC-T} & \bf{DIPO} & \bf{SAC} & \bf{TD3} & \bf{PPO} \\
\midrule
Num. hidden layers & 3 & 3 & 3 & 3 & 3 & 3\\  
Num. hidden nodes & 256 & 256 & 256 & 256 & 256 & 256 \\
Activation & GeLU & GeLU & Mish & ReLU & ReLU & Tanh \\
Batch size $|B|$ & 256 & 256 & 256 & 256 & 256 & 256 \\
Replay buffer size $|\mathcal D|$ & \texttt{1e6} & \texttt{1e6} & \texttt{1e6} & \texttt{1e6} & \texttt{1e6} & \texttt{1e6} \\
Diffusion buffer size $|\mathcal D'|$ & \texttt{1e6} & N/A & \texttt{1e6} & N/A & N/A & N/A \\
Discount for reward $\gamma$		& 0.99 & 0.99 & 0.99 & 0.99 & 0.99 & 0.99\\
Target smoothing factor $\tau$ 	& 0.005 & 0.005 & 0.005 & 0.005 & 0.005 & 0.005 \\
Optimizer & aSGLD & Adam & Adam & Adam & Adam & Adam \\
Adaptive bias $(\alpha_1, \alpha_2)$ & $(0.9, 0.999)$ & $(0.9, 0.999)$ & $(0.9, 0.99)$ & $(0.9, 0.99)$ & $(0.9, 0.99)$ & $(0.9, 0.99)$ \\
Inverse temperature $\beta_Q$ & \texttt{1e-8} & N/A & N/A & N/A & N/A & N/A \\
Actor learning rate $\beta_\pi$ 	& \texttt{3e-4} & \texttt{3e-4} & \texttt{3e-4} & \texttt{3e-4} & \texttt{3e-4} & \texttt{7e-4} \\
Number of critics $n$ & 10 & 1 & 1 & 1 & 1 & N/A \\
Critic learning rate $\eta_Q$ 	& \texttt{1e-3} $\rightarrow$ \texttt{1e-4} & \texttt{3e-4} & \texttt{3e-4} & \texttt{3e-4} & \texttt{3e-4} & \texttt{7e-4} \\
Actor Critic grad norm & 0.7 & N/A & 2 & N/A & N/A & 0.5 \\
Replay memory size & \texttt{1e6} & \texttt{1e6} & \texttt{1e6} & \texttt{1e6} & \texttt{1e6} & \texttt{1e6} \\
Entropy coefficient $\alpha$ & 0.2 & 0.2 & N/A & 0.2 & N/A & 0.01 \\
Expected entropy $\overline{\mathcal H}$ & $-\dim \mathcal A$ & $-\dim \mathcal A$ & N/A & N/A & N/A & N/A \\
Diffusion training frequency & \texttt{1e4} & N/A & 1 & N/A & N/A & N/A \\
\bottomrule
\end{tabular}
\end{small}
\end{center}
\caption{Common hyperparameters used across all 6 MuJoCo and 12 DMC tasks for LSAC and baselines.}
\label{table:model_hyperparams}
\vskip -0.1in
\end{table}

\subsubsection{Additional baselines for the DMC experiment}

The two additional baselines we use for the DMC experiments are Dreamer \citep{Hafner2020Dream} and DrQ-v2 \citep{yarats2022mastering}. They use a few different network configurations and additional model components. Dreamer \citep{Hafner2020Dream} uses a pair of convolutional encoder and decoder networks, with remaining functions implemented as three dense layers of size $300$ with ELU activations \citep{clevert2015fast}. Action outputs are passed through a $\tanh$ mean layer, scaled by a factor of $5$ and applied with softplus transformation. The world model, value model, and action models are all trained on batches of 50 sequences of length 50, using the Adam \citep{kingma2014adam} optimizer with learning rates \texttt{6e-4}, \texttt{8e-5}, and \texttt{8e-5}, respectively. Gradient norms over 100 are scaled down. We note that Dreamer uses an imagination horizon of $H=15$, which is exclusive to itself, while the value targets $V_\lambda$ are updated with discount factor $\gamma = 0.99$ and $\lambda = 0.95$. The first five episodes are used as warm-up period, where the random actions are sampled with $\mathcal N(0, 0.3)$ exploration noise. The latent dynamics model is trained on an information bottleneck objective \citep{tishby2000information}: $\max I(s_{1:T}; (o_{1:T}, r_{1:T})\given a_{1:T}) - \beta I(s_{1:T}, i_{1:T}\given a_{1:T})$, where $\beta$ is a scalar temperature and $i_t$ are dataset indices such that $p(o_t\given i_t) = \delta(o_t - \bar{o}_t)$. On the other hand, DrQ-v2 \citep{yarats2022mastering} uses DDPG \citep{lillicrap2015continuous} as a backbone and augment it with $n$-step returns for estimating the TD error. Image encoders $f_\xi$ are used to embed augmented image observations into a low-dimensional latent vector by a CNN. Exploration noise is scheduled according to $\sigma(t) = \sigma_\mathrm{init} + (1-\min(t/T, 1)) (\sigma_\mathrm{final} - \sigma_\mathrm{init})$ at different states of training. A bigger batch size of 512 and a smaller learning rate of \texttt{1e-4} is used. Evaluation frequency is set to be once every 10 episodes, same for each method.

\subsection{More on Diffusion $Q$ Action Gradient}

The diffusion generator $\mathcal M$ in Algorithm \ref{algo:algo1} plays a crucial role in synthesizing a batch $B_{M_i}$ of full environment transitions for each critic $Q_{\psi_i}$. These synthetic transitions consist of concatenated states $s_M$, actions $a_M$, episodic rewards $r_M$, next observations $s_M'$,
and binary terminal masks $d_M$. We set the mixing ratio of $B_{D_i} \cup B_{M_i}$ to be $0.5$ following the design choice of \cite{ball2023efficient}, where $|B_{D_i}| = |B_{M_i}| = 128$. Thus, the overall batch size stays the same as baseline methods at $|B| = 256$. We train $\mathcal M$ on normalized transition tuples from the collected online trajectories within the replay buffer $\mathcal D$, ensuring that each entry has a mean of zero and a standard deviation of one, except for the ``done" signals which remain unnormalized and are thresholded to either 0 or 1 based on a cutoff of 0.5.

To speed up online RL training, we update $\mathcal M$ using data from $\mathcal D$ every \texttt{1e4} online interaction steps and generate \texttt{1e6} transitions right after each update of $\mathcal M$, which are then stored in the diffusion buffer. The diffusion buffer $\mathcal D'$ differs from the online replay buffer $\mathcal D$ in that the log probability of Max-Ent actions are not collected and that its action samples are normalized, such that $a_M\in [-1, 1]^d$ to suit for more effective $Q$ action gradient regularization. Our implementation of $\mathcal M$ uses the default training hyperparameters in \cite{lu2024synthetic}.

Although \cite{lu2024synthetic} has observed the fidelity of these synthetic samples by comparing their high-level statistics to those of the on-policy data, a potential limitation arises from the generated data becoming stale during the \texttt{1e4} online interaction steps, due to the fact that $\mathcal M$ remains unchanged and the uncertainty in critic function updates. Hence, we apply $Q$ action gradient on normalized synthetic action samples by gradient ascent along the $Q$ gradient field. We make use of the Adam optimizer \citep{kingma2014adam} with Polyak coefficients $(\alpha_1, \alpha_2) = (0.9, 0.99)$ and a learning rate of \texttt{3e-4}. After $Q$ action gradient update on $a_M \sim \mathcal D'$, we replace these state-action pairs in the diffusion buffer to update our belief about the current high-value regions in the action space and mitigate the risk of data staleness.

In Figure~\ref{fig:diff_density_maps}, we show distribution heatmaps of sampled actions $a$ in the online replay buffer $\mathcal D$, synthesized action samples $a_M$ in the diffusion buffer $\mathcal D'$, as well as $Q$ gradient-optimized actions $a_M \sim \nabla_a Q$. We observe that one update of $\mathcal M$ every \texttt{1e4} steps is adequate to match the high-valued region in the sample distributions. We also plot the best average return corresponding to each choice of updating frequency in three MuJoCo environments in Figure~\ref{fig:ablat_diff_freq_plots}, which supports our choice of update frequency of $\mathcal M$.

\begin{figure}[H]
    \def\inbetweenfiguresspace{\hspace{-3pt}} %
    \def\inbetweentitlespace{\vspace{-15pt}} %
    \def\inbetweenrowsspace{\vspace{5pt}} %
    \centering
    \captionsetup[subfigure]{labelformat=empty}
    \begin{subfigure}{\figurewidth}
        \includegraphics[width=\linewidth]{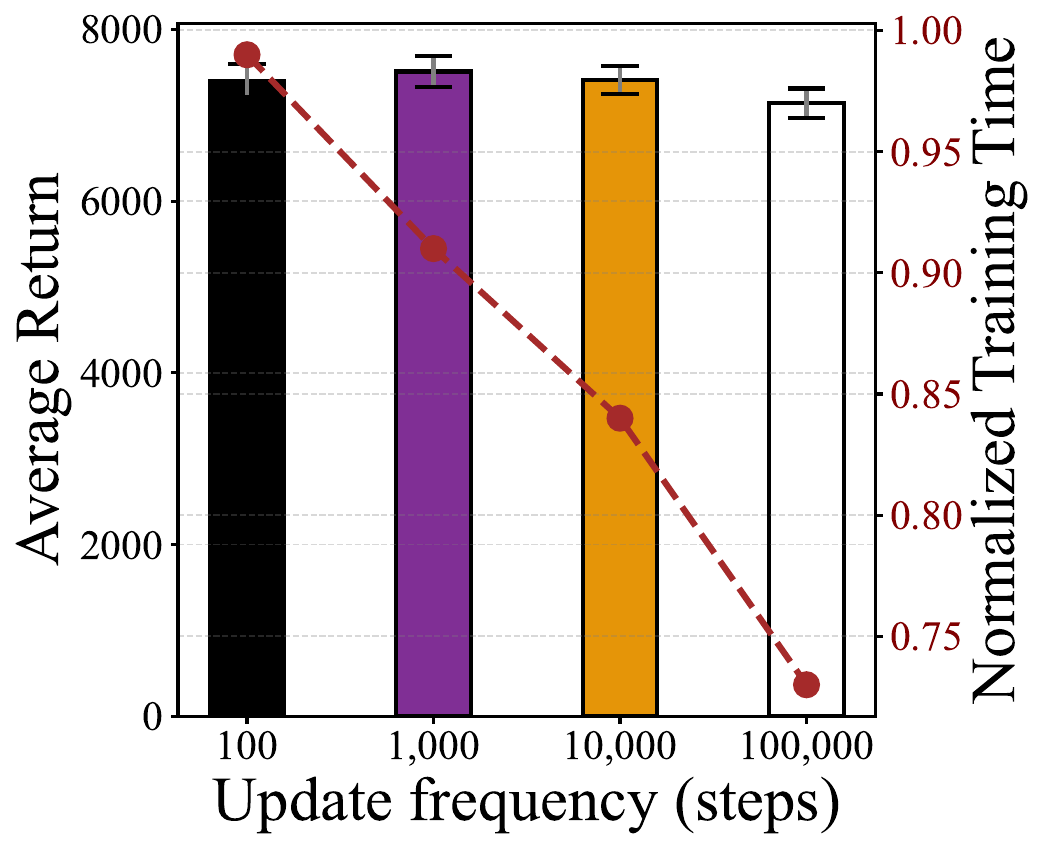}
        \inbetweentitlespace
        \caption{(a) Ant-v3}\label{fig:ant_diff_freq}
        \inbetweenrowsspace
    \end{subfigure}
    \inbetweenfiguresspace
    \begin{subfigure}{\figurewidth}
        \includegraphics[width=\linewidth]{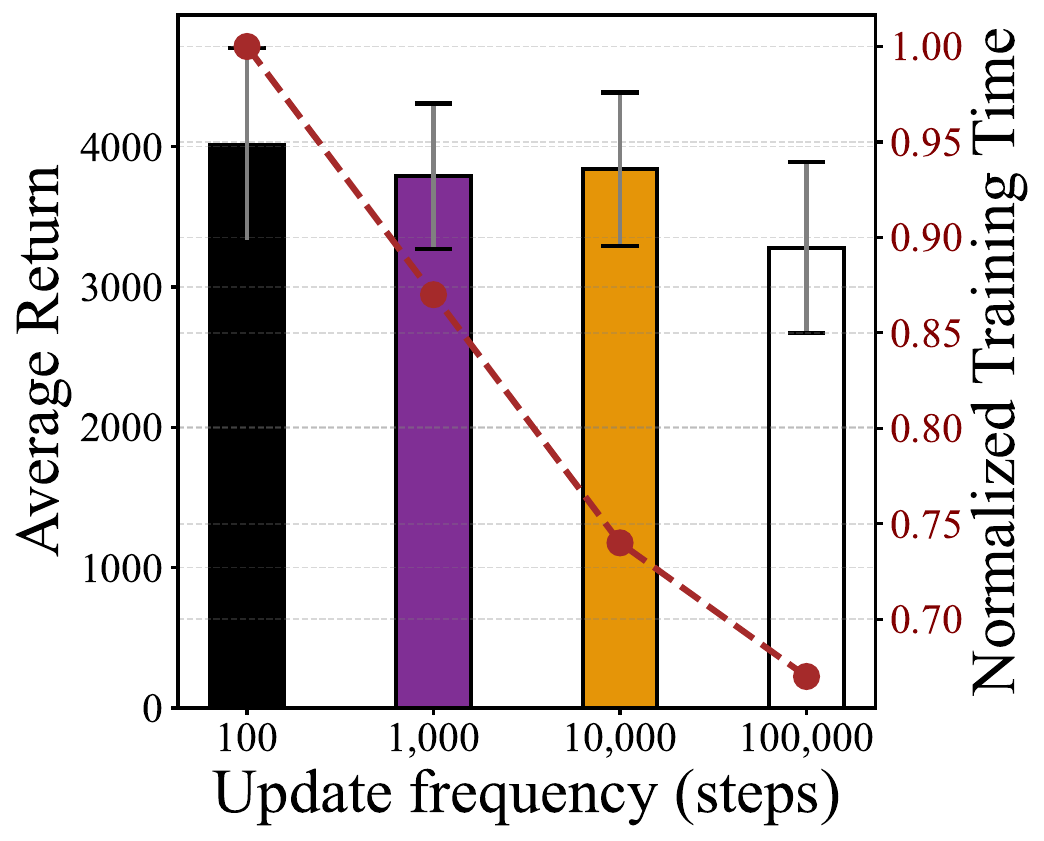}
        \inbetweentitlespace
        \caption{\hspace{10pt} (b) Hopper-v3}\label{fig:hopper_diff_freq}
        \inbetweenrowsspace
    \end{subfigure}
    \inbetweenfiguresspace
    \begin{subfigure}{\figurewidth}
        \includegraphics[width=\linewidth]{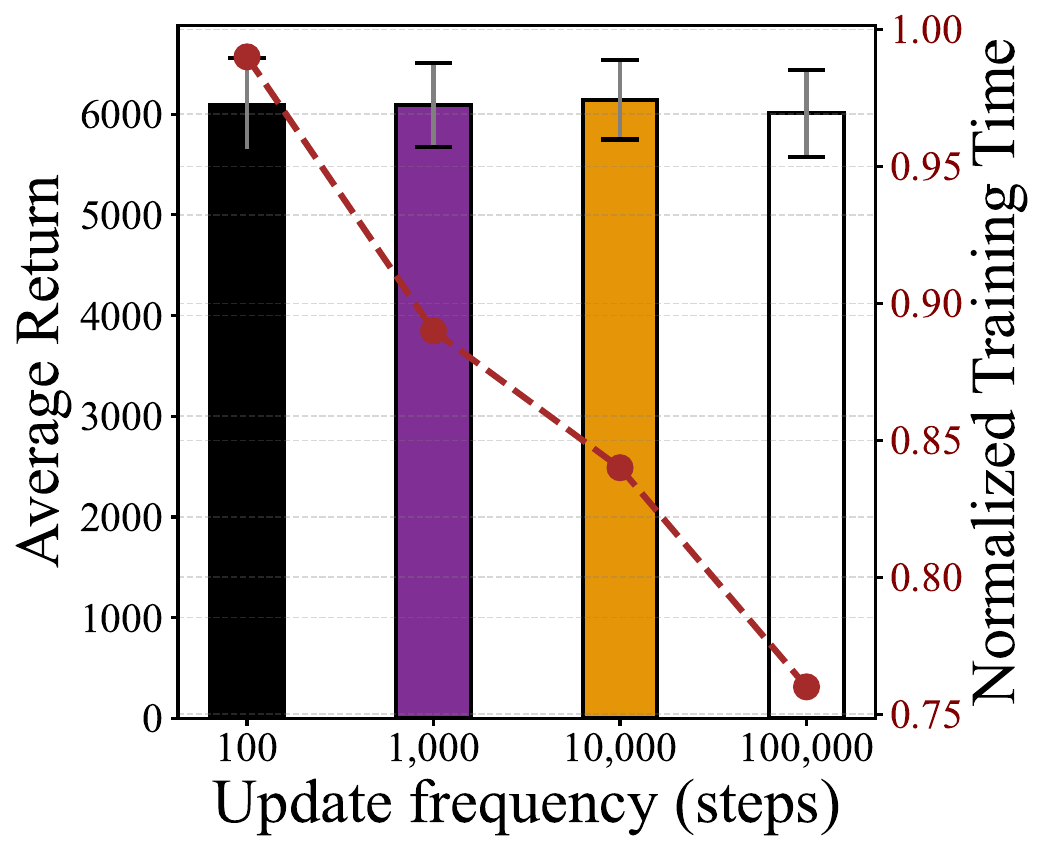}
        \inbetweentitlespace
        \caption{(c) Walker2d-v3}\label{fig:walker2d_diff_freq}
        \inbetweenrowsspace
    \end{subfigure}
    \caption{Sensitivity analysis on the performance and training time of LSAC for different choices of update frequencies of diffusion generator $\mathcal M$, in three MuJoCo environments. Results are averaged over 10 seeds. The performance difference between each update frequency of $\mathcal M$ is not significant on Ant-v3 and Walker2d-v3. However, LSAC scores less with the least update frequency (one in \texttt{1e5} steps) in Hopper-v3. The purple dots represent normalized training time.}
    \label{fig:ablat_diff_freq_plots}
\end{figure}

\begin{figure}[H]
    \def\inbetweenfiguresspace{\hspace{-3pt}} %
    \def\inbetweentitlespace{\vspace{-15pt}} %

    \def\inbetweenbarplotspace{\hspace{3.3pt}}
    \def\inbetweenrowsspace{\vspace{-1pt}} %
    \centering
    \begin{subfigure}{.7\figurewidth}
        \includegraphics[width=\linewidth]{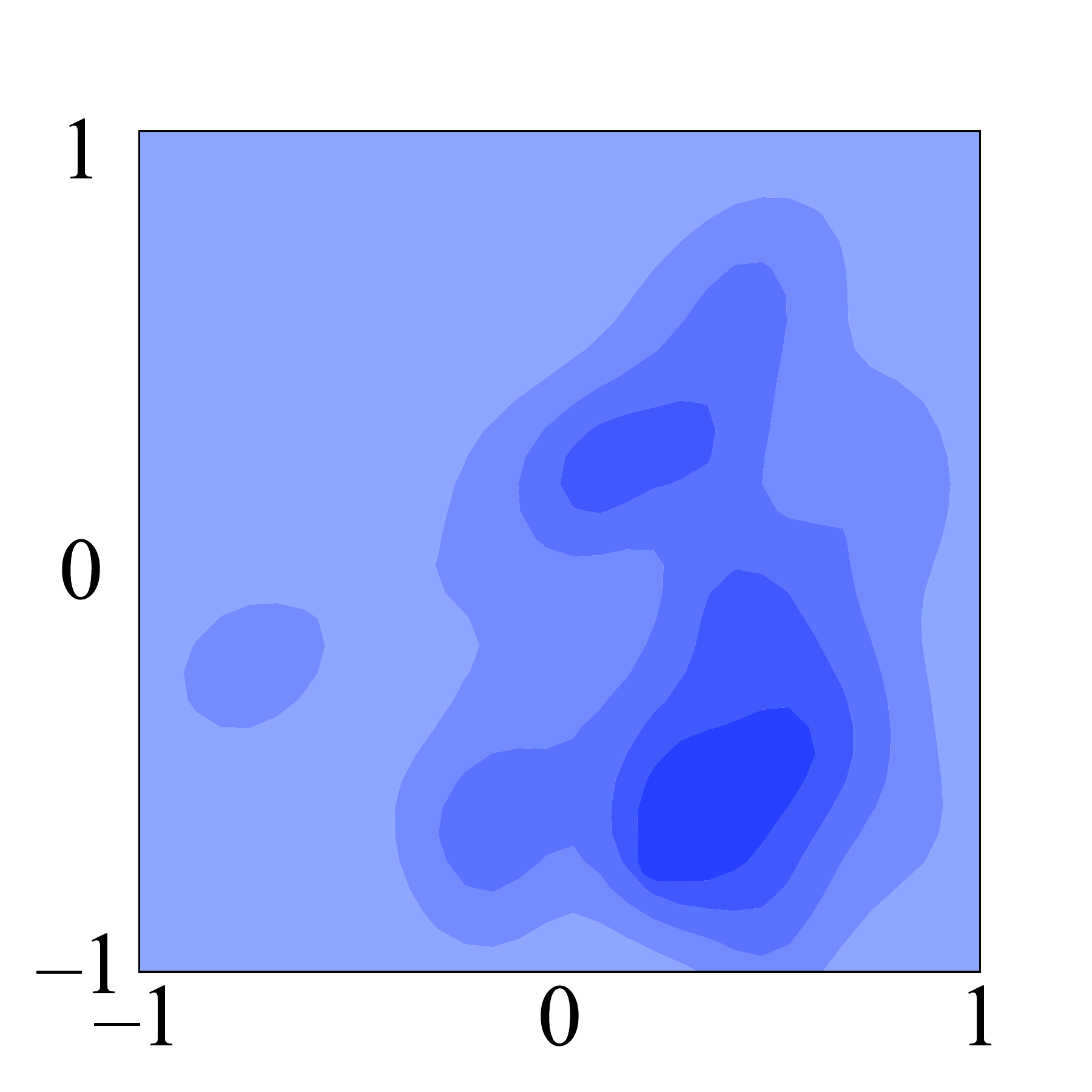}
        \inbetweentitlespace
        \caption*{$a_M\sim \nabla_a Q$, \texttt{1e5} steps}
        \label{fig:1e5_a_M_tilde}
        \inbetweenrowsspace
    \end{subfigure}
    \inbetweenfiguresspace
    \begin{subfigure}{0.7\figurewidth}
        \includegraphics[width=\linewidth]{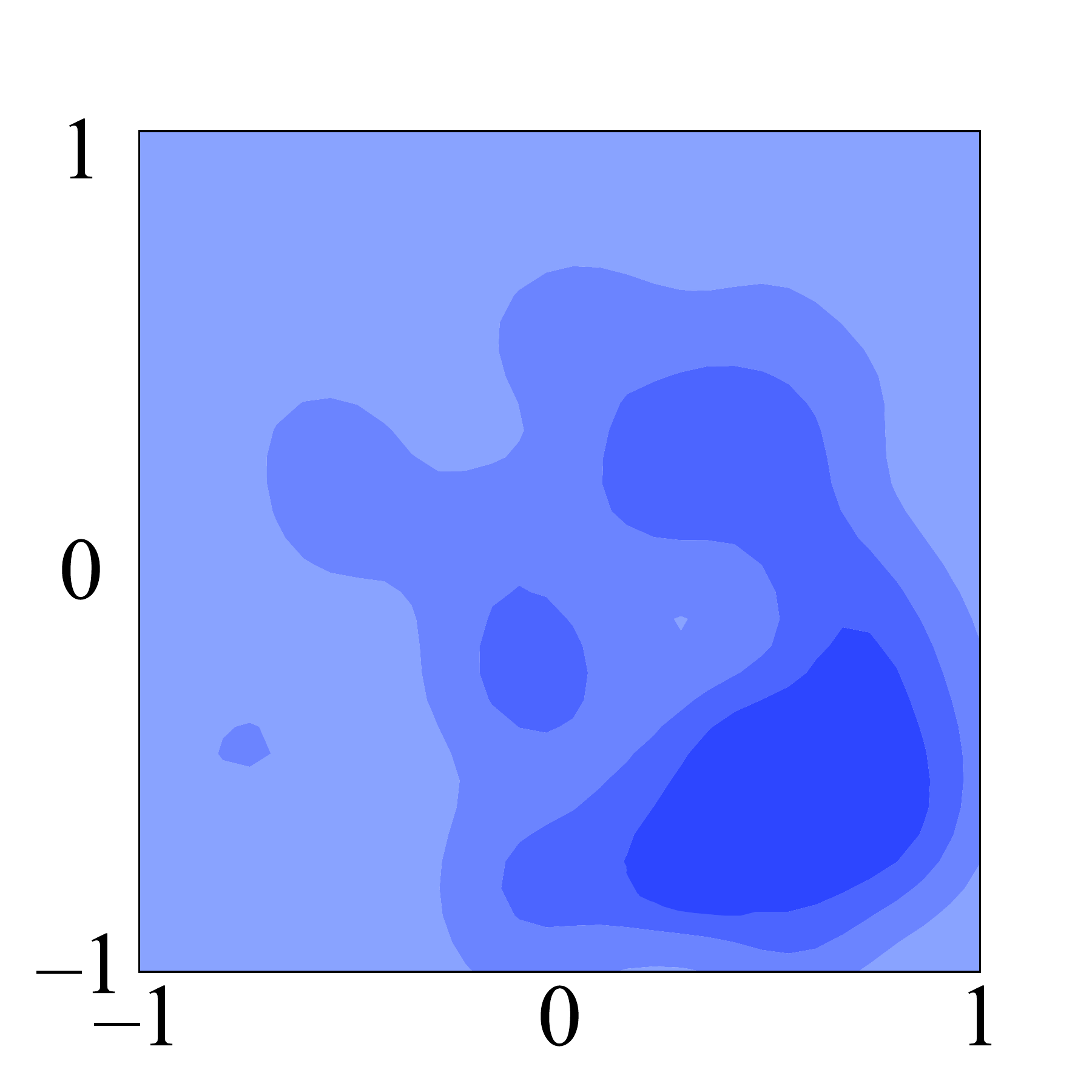}
        \inbetweentitlespace
        \caption*{$a_M\sim \nabla_a Q$, \texttt{1e4} steps}
        \label{fig:1e4_a_M_tilde}
        \inbetweenrowsspace
    \end{subfigure}
    \inbetweenfiguresspace
    \begin{subfigure}{0.7\figurewidth}
        \includegraphics[width=\linewidth]{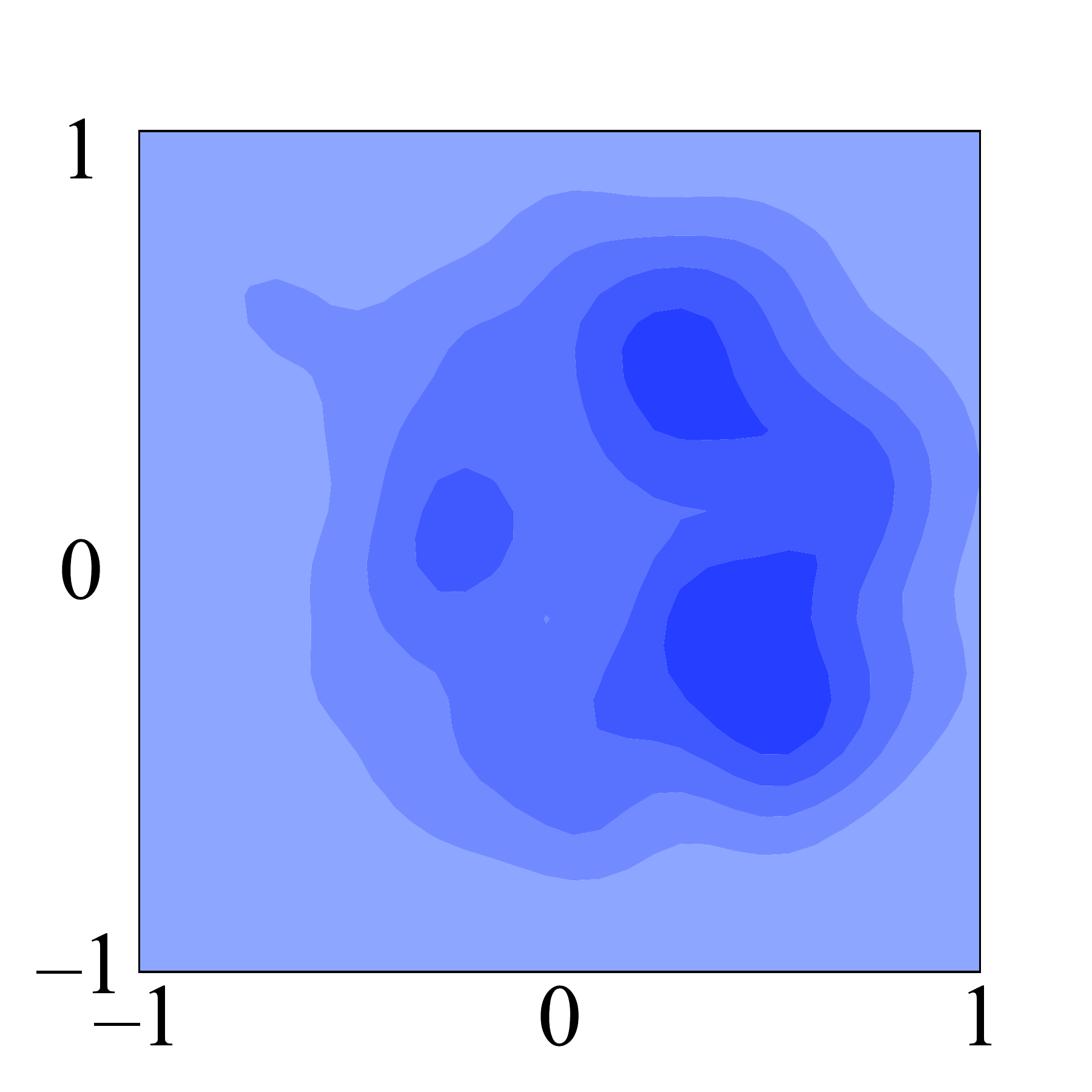}
        \inbetweentitlespace
        \caption*{$a_M\sim \nabla_a Q$, \texttt{1e3} steps}
        \label{fig:1e3_a_M_tilde}
        \inbetweenrowsspace
    \end{subfigure}
    \inbetweenfiguresspace
    \begin{subfigure}{0.7\figurewidth}
        \includegraphics[width=\linewidth]{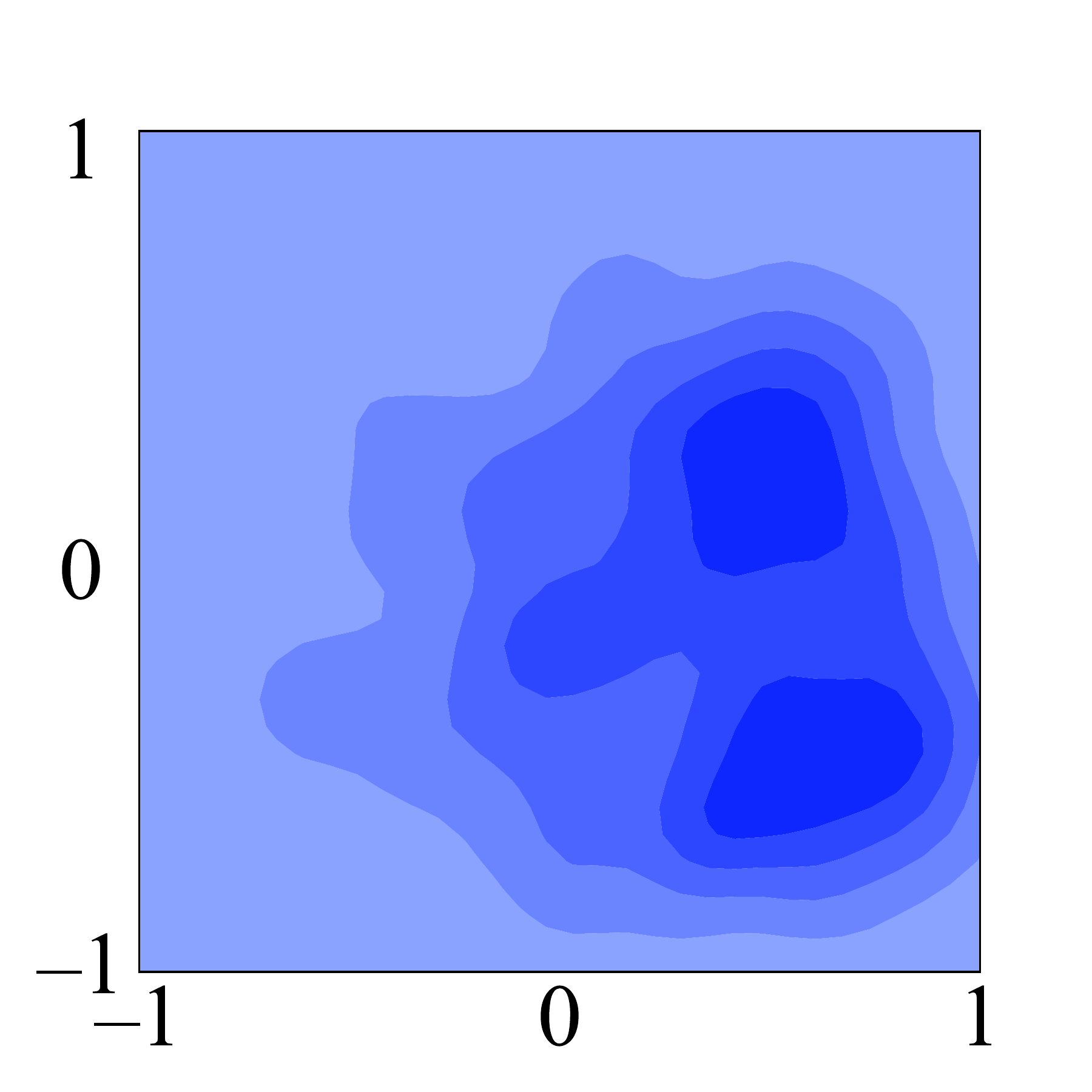}
        \inbetweentitlespace
        \caption*{$a_M\sim \nabla_a Q$, \texttt{1e2} steps}
        \label{fig:1e2_a_M_tilde}
        \inbetweenrowsspace
    \end{subfigure}
    \begin{subfigure}{0.7\figurewidth}
        \includegraphics[width=\linewidth, clip, trim=0 0 0 0]{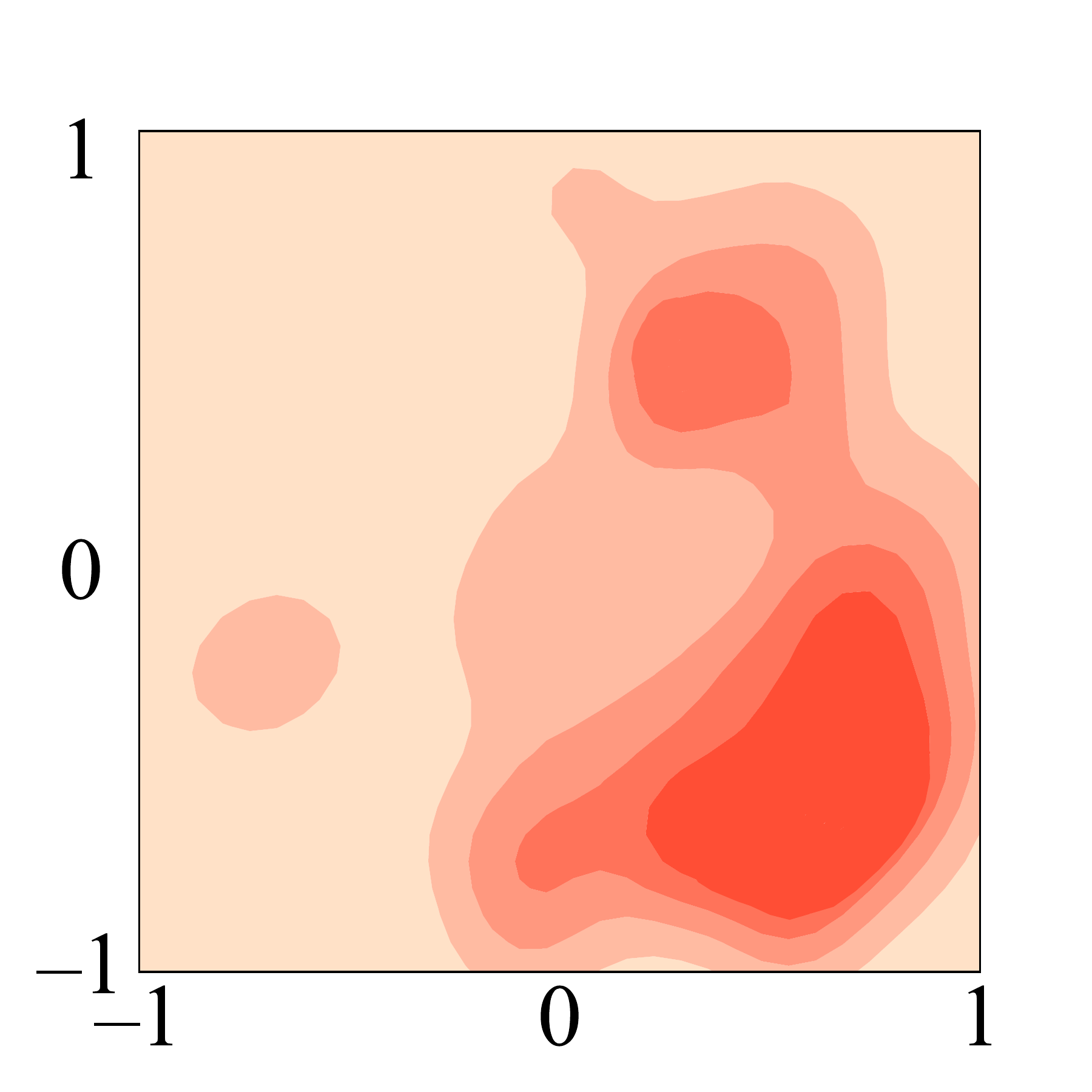}
        \inbetweentitlespace
        \caption*{$a \sim \mathcal D$, \texttt{1e5} steps}
        \label{fig:1e5_a}
        \inbetweenrowsspace
    \end{subfigure}
    \inbetweenfiguresspace
    \begin{subfigure}{0.7\figurewidth}
        \includegraphics[width=\linewidth, clip, trim=0 0 0 0]{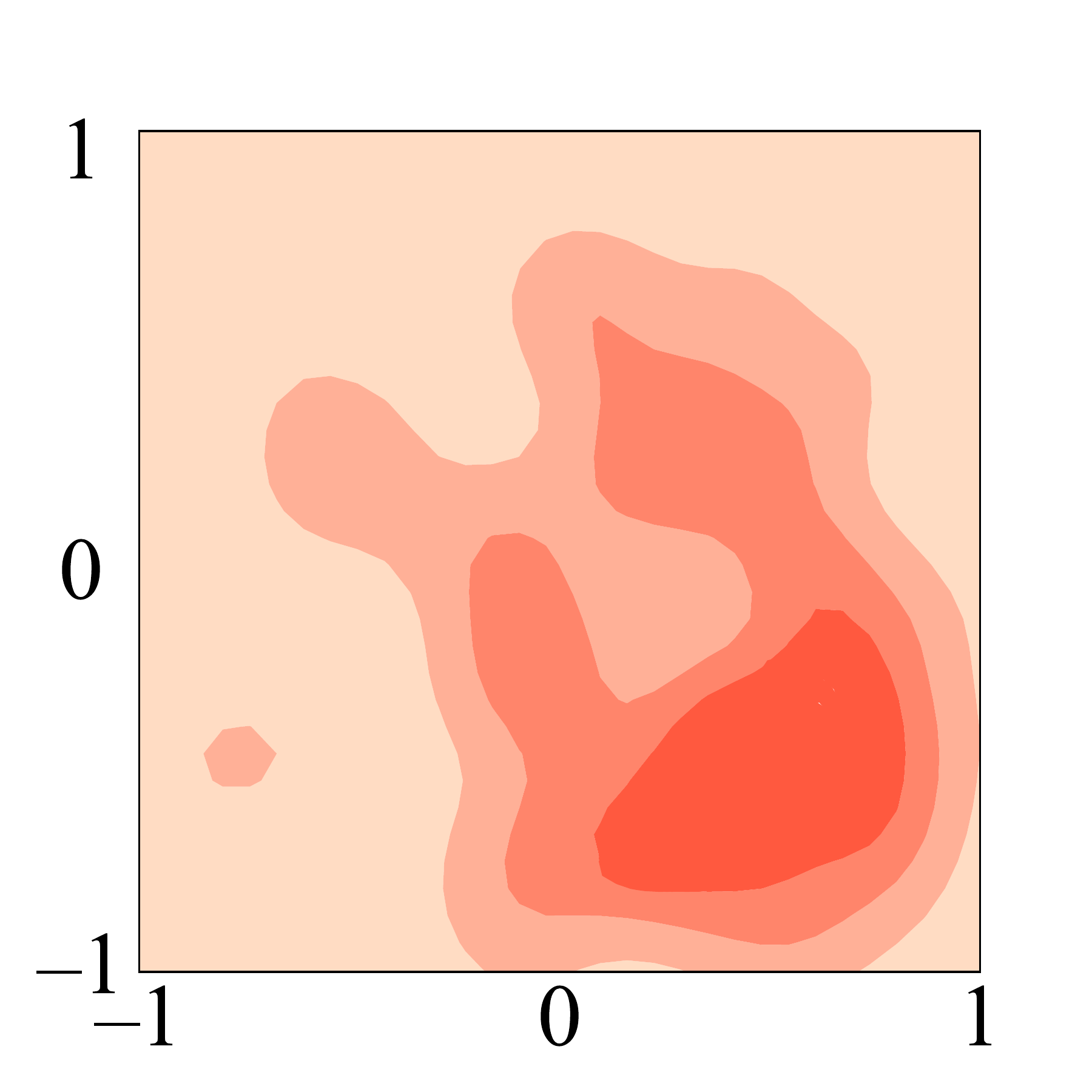}
        \inbetweentitlespace
        \caption*{$a\sim \mathcal D$, \texttt{1e4} steps}
        \label{fig:1e4_a}
        \inbetweenrowsspace
    \end{subfigure}
    \inbetweenfiguresspace
    \begin{subfigure}{0.7\figurewidth}
        \includegraphics[width=\linewidth]{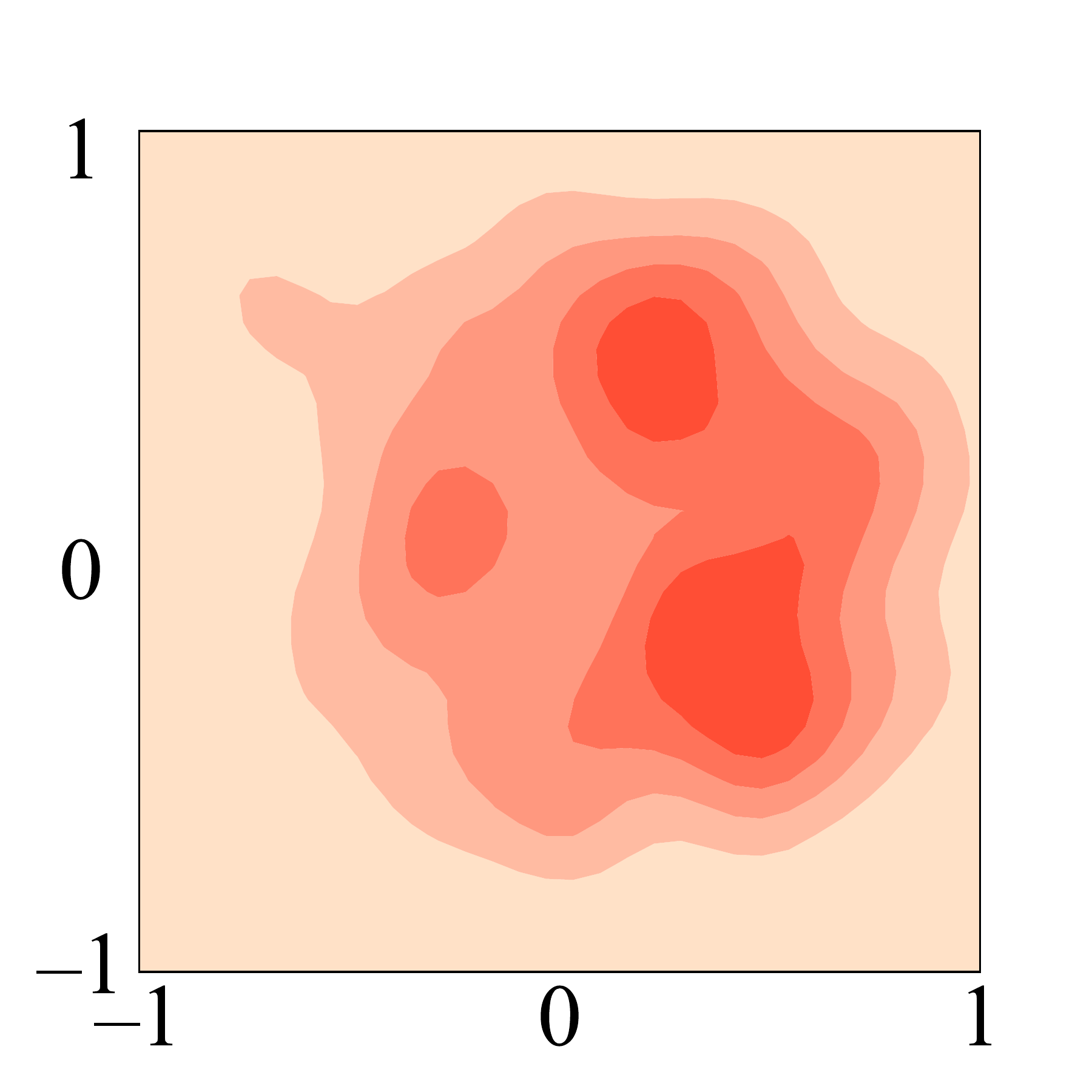}
        \inbetweentitlespace
        \caption*{$a\sim \mathcal D$, \texttt{1e3} steps}
        \label{fig:1e3_a.pdf}
        \inbetweenrowsspace
    \end{subfigure}
    \inbetweenfiguresspace
    \begin{subfigure}{0.7\figurewidth}
        \includegraphics[width=\linewidth]{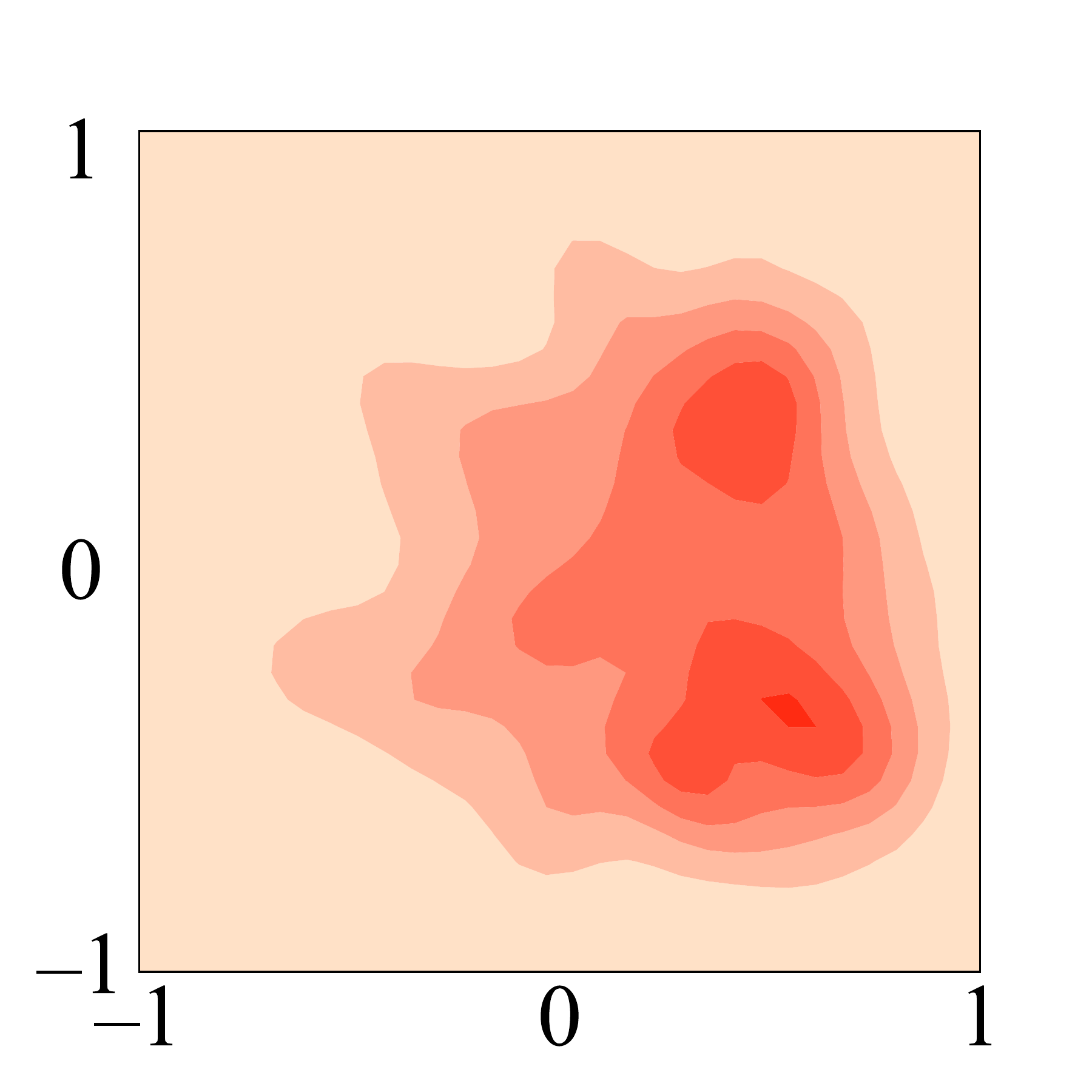}
        \inbetweentitlespace
        \caption*{$a\sim \mathcal D$, \texttt{1e2} steps}
        \label{fig:1e2_a.pdf}
        \inbetweenrowsspace
    \end{subfigure}
    \begin{subfigure}{0.7\figurewidth}
        \inbetweenrowsspace
        \includegraphics[width=\linewidth]{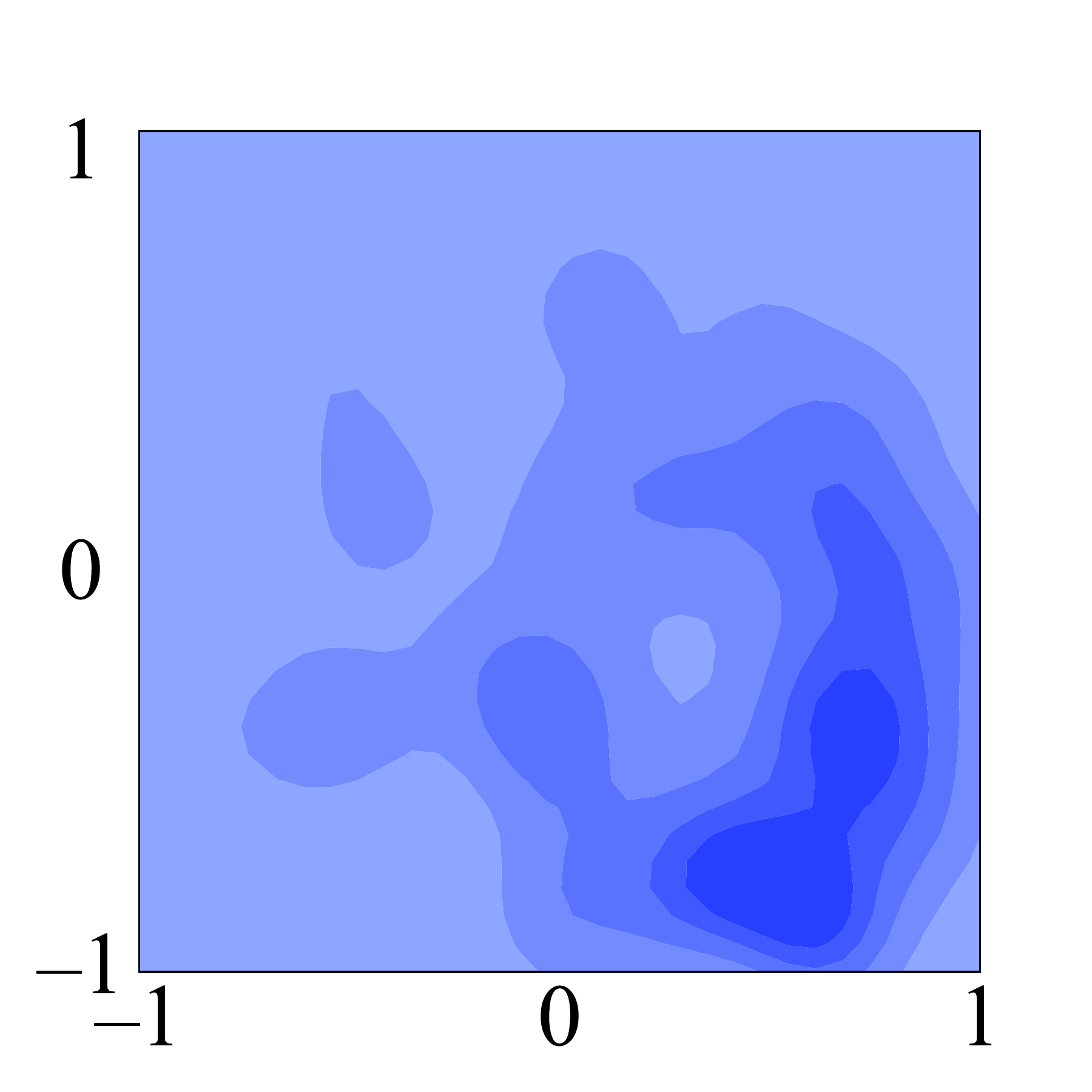}
        \inbetweentitlespace
        \caption*{$a_M \sim \mathcal D'$, \texttt{1e5} steps}%
        \inbetweenrowsspace
    \end{subfigure}
    \inbetweenfiguresspace
    \begin{subfigure}{0.7\figurewidth}
        \includegraphics[width=\linewidth]{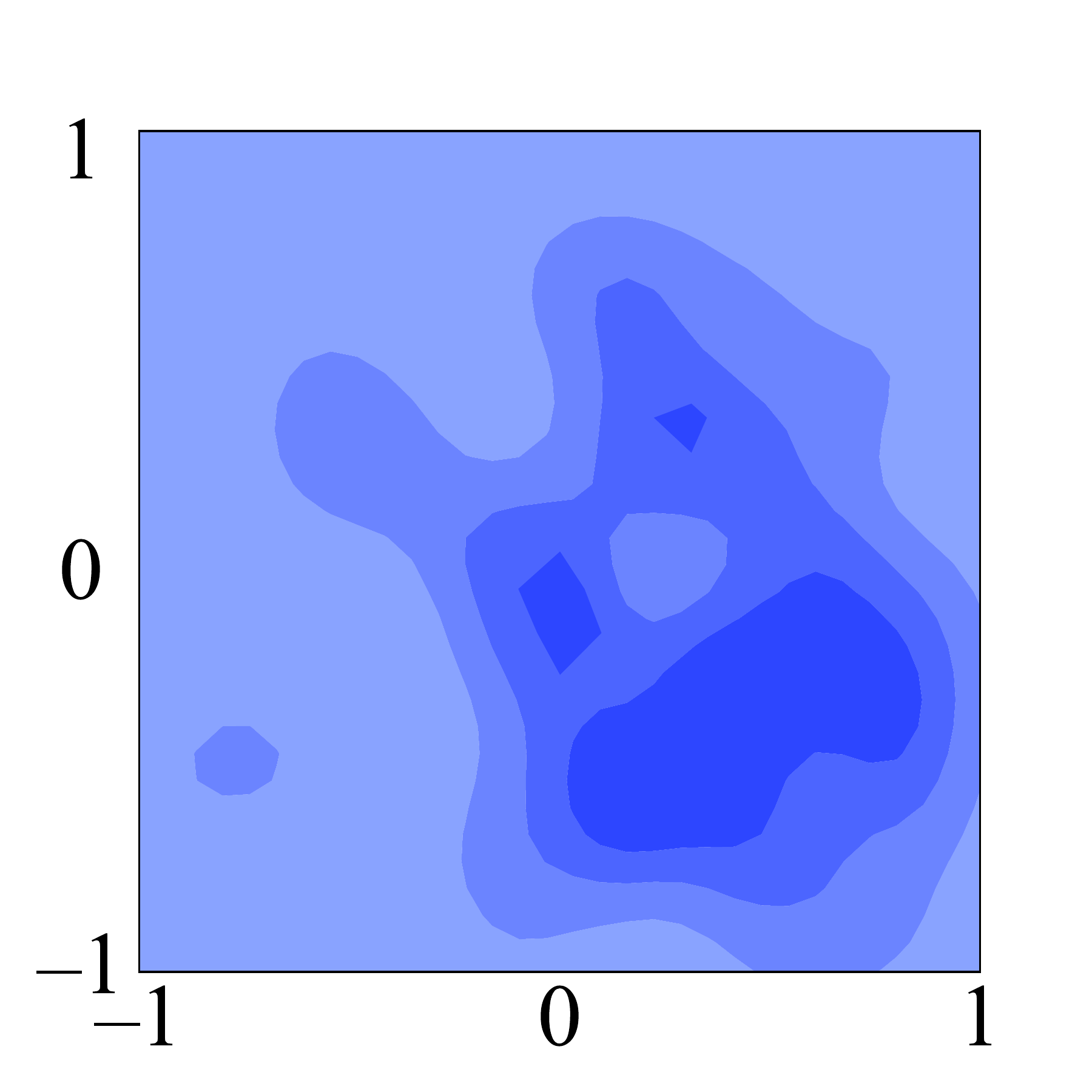}
        \inbetweentitlespace
        \caption*{$a_M\sim \mathcal D'$, \texttt{1e4} steps}\label{fig:1e4_a_M}
        \inbetweenrowsspace
    \end{subfigure}
    \inbetweenfiguresspace
    \begin{subfigure}{0.7\figurewidth}
        \includegraphics[width=\linewidth]{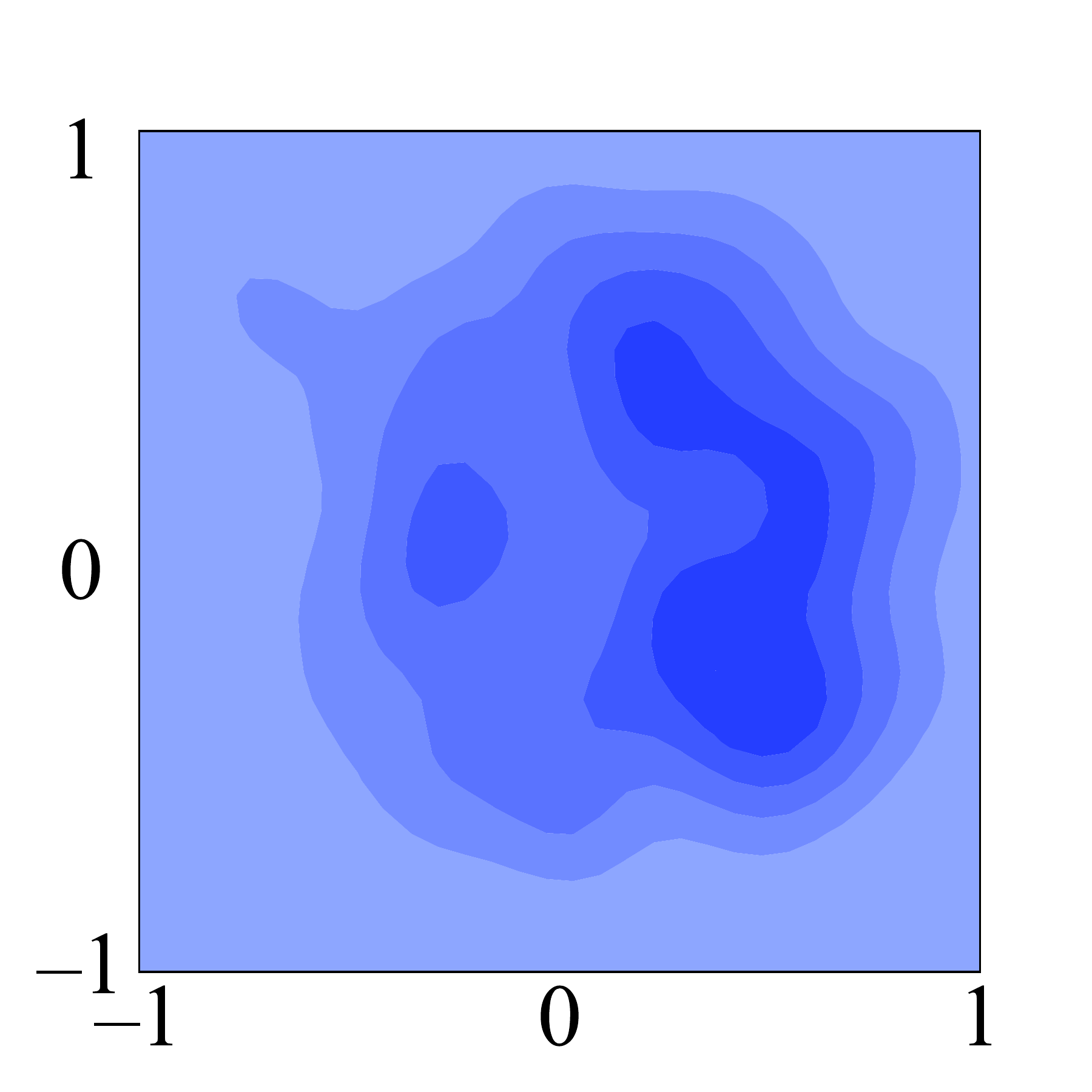}
        \inbetweentitlespace
        \caption*{$a_M\sim \mathcal D'$, \texttt{1e3} steps}\label{fig:1e3_a_M}
    \end{subfigure}
    \inbetweenfiguresspace
    \begin{subfigure}{0.7\figurewidth}
        \includegraphics[width=\linewidth]{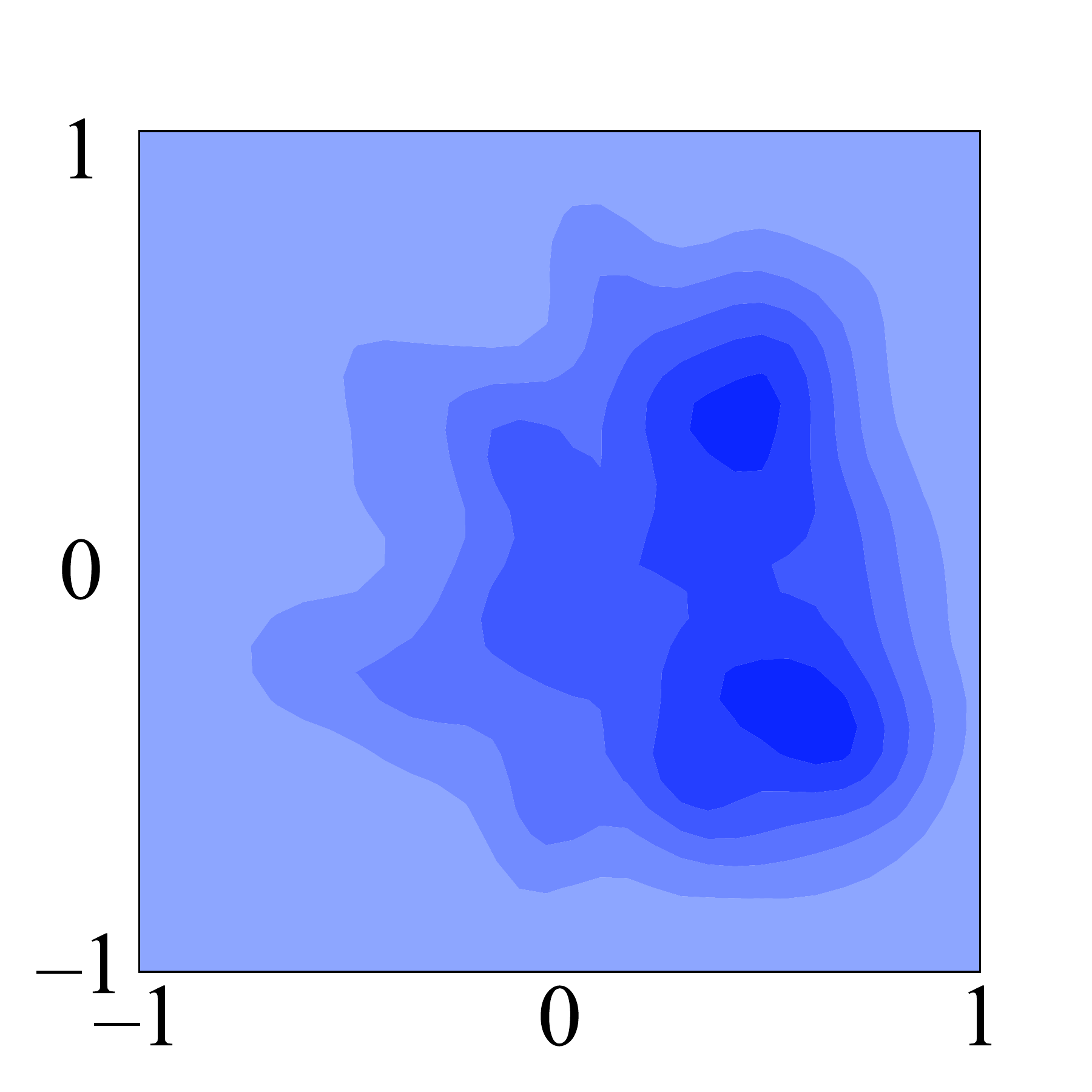}
        \inbetweentitlespace
        \caption*{$a_M\sim \mathcal D'$, \texttt{1e2} steps}\label{fig:1e2_a_M}
        \inbetweenrowsspace
    \end{subfigure}
    \caption{Comparison of $Q$ gradient optimized sample distribution $d_{\{a_M \sim \nabla_a Q\}}$ (first row), replay action sample distribution $d_{\{a \sim \mathcal D \}}$ (second row), and diffusion buffer sample distribution $d_{\{a_M \sim \mathcal D'\}}$ (third row). The number of steps in the caption means the update frequency of $\mathcal M$. Darker regions denote higher density. We randomly sample a batch of \texttt{1e3} transitions from all three distributions at \texttt{2e5} time steps on the Ant-v3 environment. The 2D kernel density estimate of these distributions are taken from action space dimensions 2 and 3. We see that updating $\mathcal M$ every \texttt{1e4} steps is suitable for replicating the on-policy buffer action distributions with an optimal total training time. More frequent updates of $\mathcal M$ during training (frequency $\leq \texttt{1e3}$ steps) leads to higher training time cost as shown in \Cref{fig:ablat_diff_freq_plots}. $Q$ gradient optimization further refines the synthesized actions $a_M \sim \mathcal D'$ to better align with $d_{\{a \sim \mathcal D \}}$.}
    \label{fig:diff_density_maps}
\end{figure}

\newpage

\section{Computational efficiency of LSAC compared to baselines}

Ensuring computational efficiency is critical if we want to make application of RL in the real world problems practical. As  LSAC introduces parallel distributed critic learning with diffusion synthesized state-action samples for diverse critic learning, it naturally raises a question of whether the processing speed (wall-clock runtime) and the memory efficiency (number of learnable parameters in each model) become a bottleneck for training LSAC. To answer this question, we compare LSAC to other baselines representative of two other RL training paradigms namely single-critic learning with policy entropy and diffusion policy inference. Among our baselines, we use DSAC-T \citep{duan2023dsac}, SAC \citep{haarnoja2018soft}, DIPO \citep{yang2023policy}, QSM \citep{psenka2024learning}, and REDQ \citep{chen2021randomized}. Our experiment indicates that, besides demonstrating superior sample efficiency and outperforming the baselines in most environments in Figure \ref{fig:training_curves} and Figure \ref{fig:training_curves_dmc}, LSAC also achieves comparable or better computational efficiency than that of the baselines, as shown in Figure~\ref{fig:runtime_mujoco_curves_barplots}. To facilitate fair wall-clock time comparison, all algorithms are trained on the same hardware (i.e a single NVIDIA Quadro RTX 8000 GPU machine).

From Table~\ref{table:process_time_table} and Figure~\ref{fig:runtime_mujoco_curves_barplots}, we see that the processing speed of LSAC is somewhat slower than that of DSAC-T, primarily due to the time spent on parallel critic learning and diffusion upsampling, which is amortized over every \texttt{1e4} steps. However, LSAC training is significantly faster than  ensemble based method -- REDQ. As for diffusion-based policy methods like DIPO and QSM, while they offer the benefit of multimodal action distributions, they do so at the cost of expensive diffusion policy inference steps during online trajectory rollouts, which makes them considerably slower during training. A clear comparison of total time taken by these algorithms can be found in Table~\ref{table:process_time_table} and Figure~\ref{fig:runtime_mujoco_curves_barplots}.

\begin{table}[th]
\centering
\resizebox{\linewidth}{!}{
\begin{tabular}{ccccccccc}
\toprule
${}_{\text{Environment}}\mkern-6mu\setminus\mkern-6mu{}^{\text{Algorithm}}$ & LSAC (ours) & DSAC-T & DIPO & SAC & QSM & REDQ \\ \midrule
Ant-v3 & 1128 (1095) & 876 (857) & 2119 (911) & 751 (722) & 1898 (735) & 1623 (1589)\\
HalfCheetah-v3 & 1157 (1112) & 858 (799) & 2253 (946) & 781 (737) & 1912 (779) & 1740 (1694) \\
Walker2d-v3 & 1141 (1101) & 901 (879) & 2191 (966) & 761 (721) & 1850 (753) & 1828 (1625)\\
Humanoid-v3 & 1209 (1162) & 904 (857) & 2410 (911) & 768 (719) & 1937 (732) & 1794 (1633) \\
Hopper-v3 & 1124 (1078) & 884 (792) & 2391 (898) & 772 (721) & 1893 (778) & 1879 (1417) \\
Swimmer-v3 & 1197 (1049) & 910 (891) & 2201 (913) & 764 (703) & 1829 (716) & 1803 (1544)\\
\bottomrule
\end{tabular}
}
\caption{Process times in \texttt{ms} per each actor-critic update loop in the MuJoCo environments. Process times per $Q$ functions update are shown in the parenthesis.}
\label{table:process_time_table}
\end{table}

\begin{table}[th]
\centering
\resizebox{0.8\linewidth}{!}{
\begin{tabular}{ccccccccc}
\toprule
${}_{\text{Environment}}\mkern-6mu\setminus\mkern-6mu{}^{\text{Algorithm}}$ & LSAC (ours) & DSAC-T & DIPO & SAC & QSM & REDQ \\ \midrule
Ant-v3 & 2.992M & 301K & 5.109M & 168K & 4.829M & 1.066M \\
HalfCheetah-v3 & 2.591M & 269K & 4.307M & 177K & 3.516M & 961K \\
Walker2d-v3 & 2.447M & 255K & 4.092M & 146K & 3.978M & 795K \\
Humanoid-v3 & 4.807M & 472K & 7.041M & 166K & 6.125M & 1.840M \\
Hopper-v3 & 2.944M & 243K & 4.121M & 146K & 3.772M & 796K\\
Swimmer-v3 & 2.874M & 239K & 4.908M & 135K & 4.063M & 771K \\
\bottomrule
\end{tabular}
}
\caption{Number of learnable parameters in each method. LSAC has less parameters compared to diffusion policy-based model-free algorithms (DIPO, QSM) but more than those of double or ensemble-based actor-critic methods (DSAC-T, SAC, REDQ). LSAC uses the same network structure and ensemble size across tested environments. The difference in the number of parameters is due to different observation and action dimensions in each environment, which affect the layer sizes of model networks.}
\label{table:num_param_table}
\end{table}

\begin{figure}[h]
    \def\inbetweenfiguresspace{\hspace{-3pt}} %
    \def\inbetweentitlespace{\vspace{-15pt}} %

    \def\inbetweenbarplotspace{\hspace{3.3pt}}
    \def\inbetweenrowsspace{\vspace{5pt}} %
    \centering
    \begin{subfigure}{\figurewidth}
        \includegraphics[width=\linewidth]{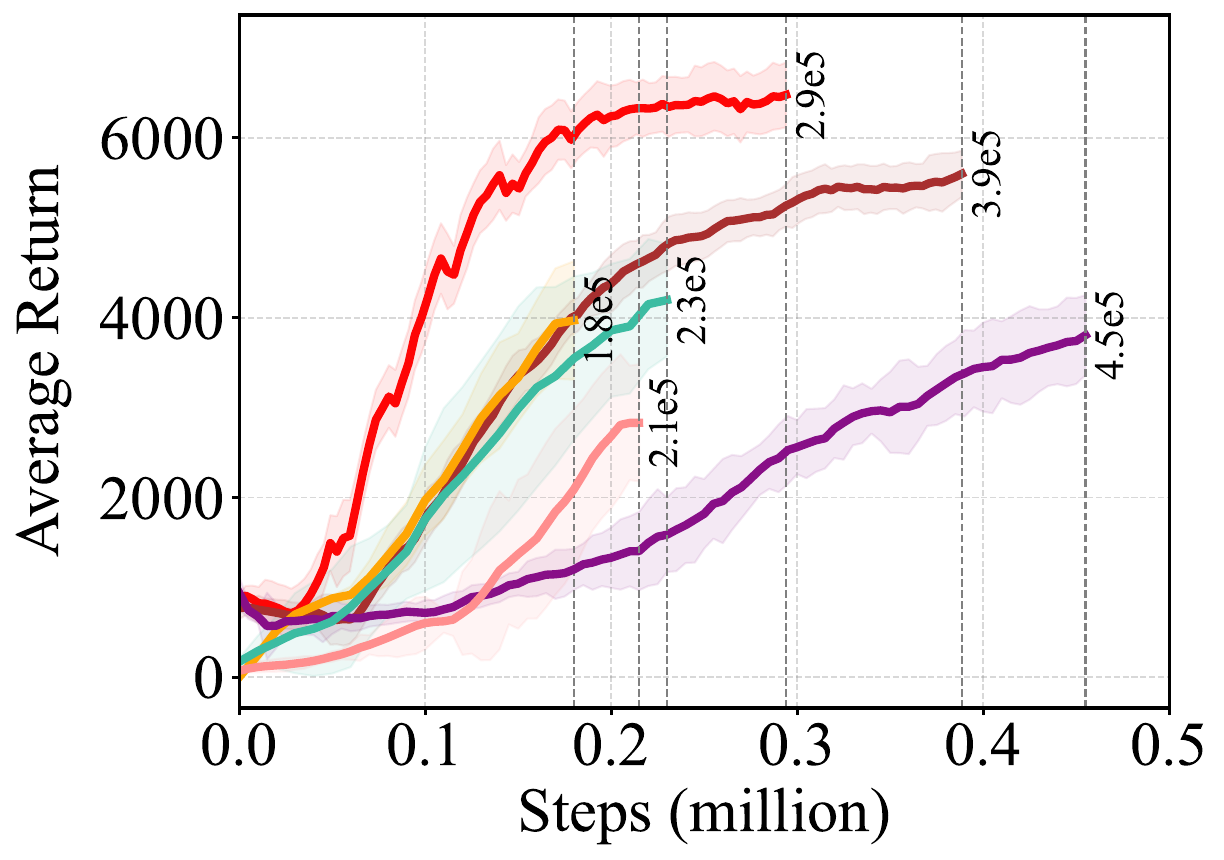}
        \inbetweentitlespace
        \caption*{\hspace{15pt} (a) AR at $10h$, Ant}\label{fig:ant_wallclock_curve}
        \inbetweenrowsspace
    \end{subfigure}
    \inbetweenfiguresspace
    \begin{subfigure}{\figurewidth}
        \includegraphics[width=\linewidth]{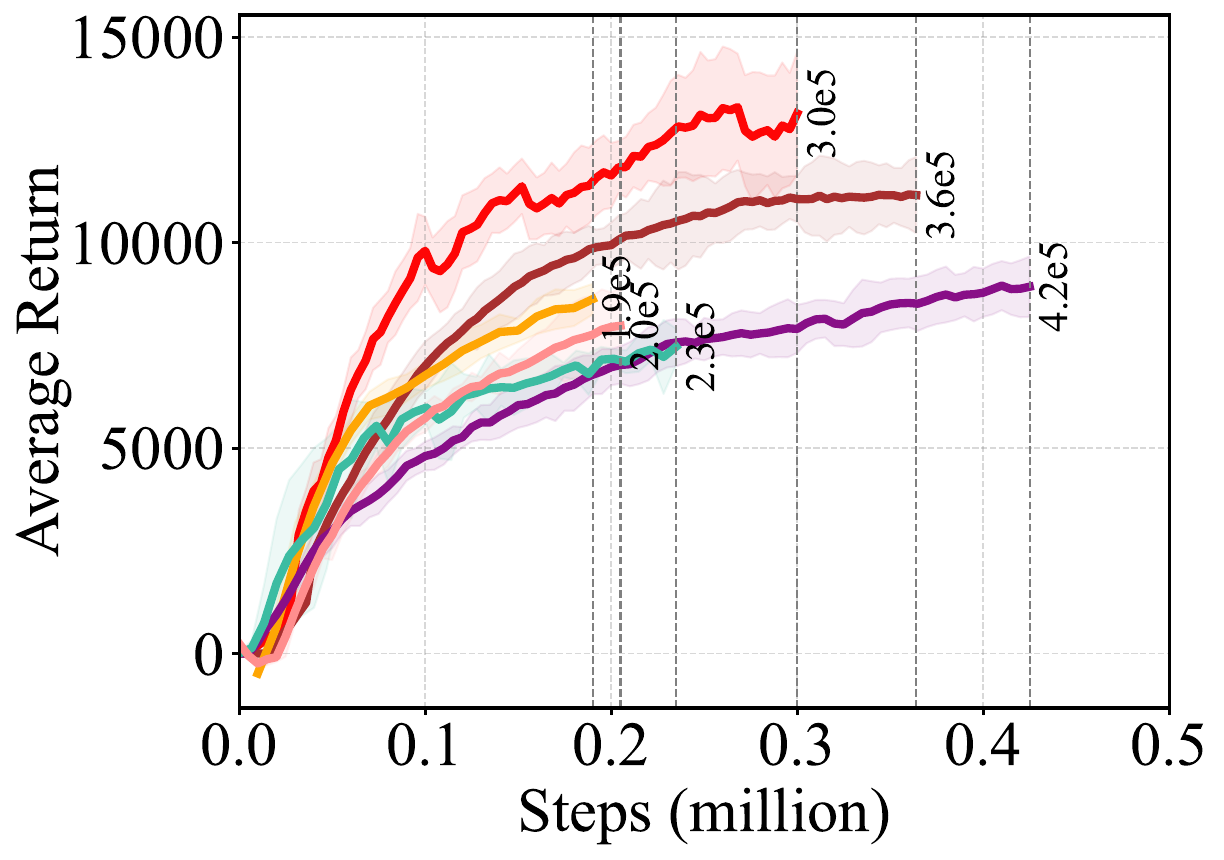}
        \inbetweentitlespace
        \caption*{\hspace{12pt} (b) AR at $10h$, Cheetah}\label{fig:cheetah_wallclock_curve}
        \inbetweenrowsspace
    \end{subfigure}
    \inbetweenfiguresspace
    \begin{subfigure}{\figurewidth}
        \includegraphics[width=\linewidth]{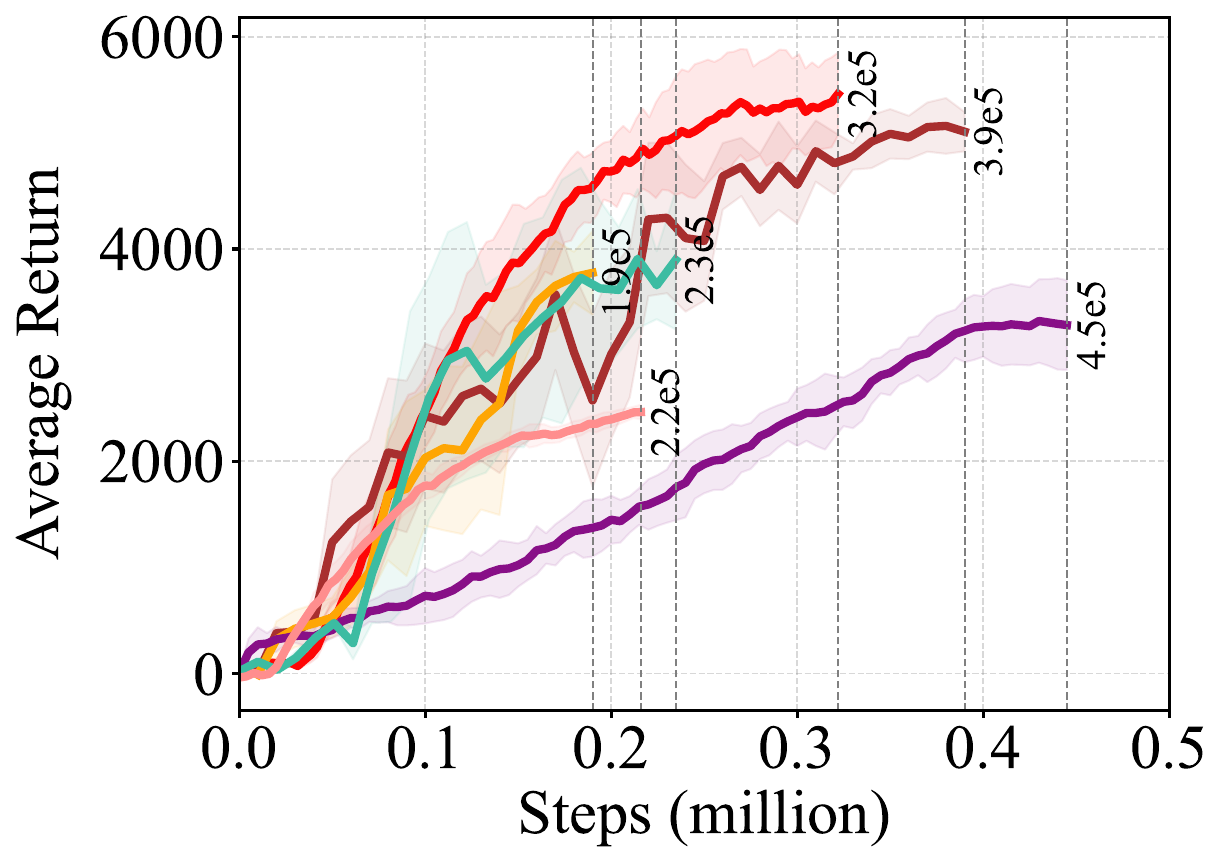}
        \inbetweentitlespace
        \caption*{\hspace{17pt} (c) AR at $10h$, Walker}\label{fig:walker2d_wallclock_curve}
        \inbetweenrowsspace
    \end{subfigure}
    \begin{subfigure}{0.935\figurewidth}
        \includegraphics[width=\linewidth, clip, trim=0 0 0 0]{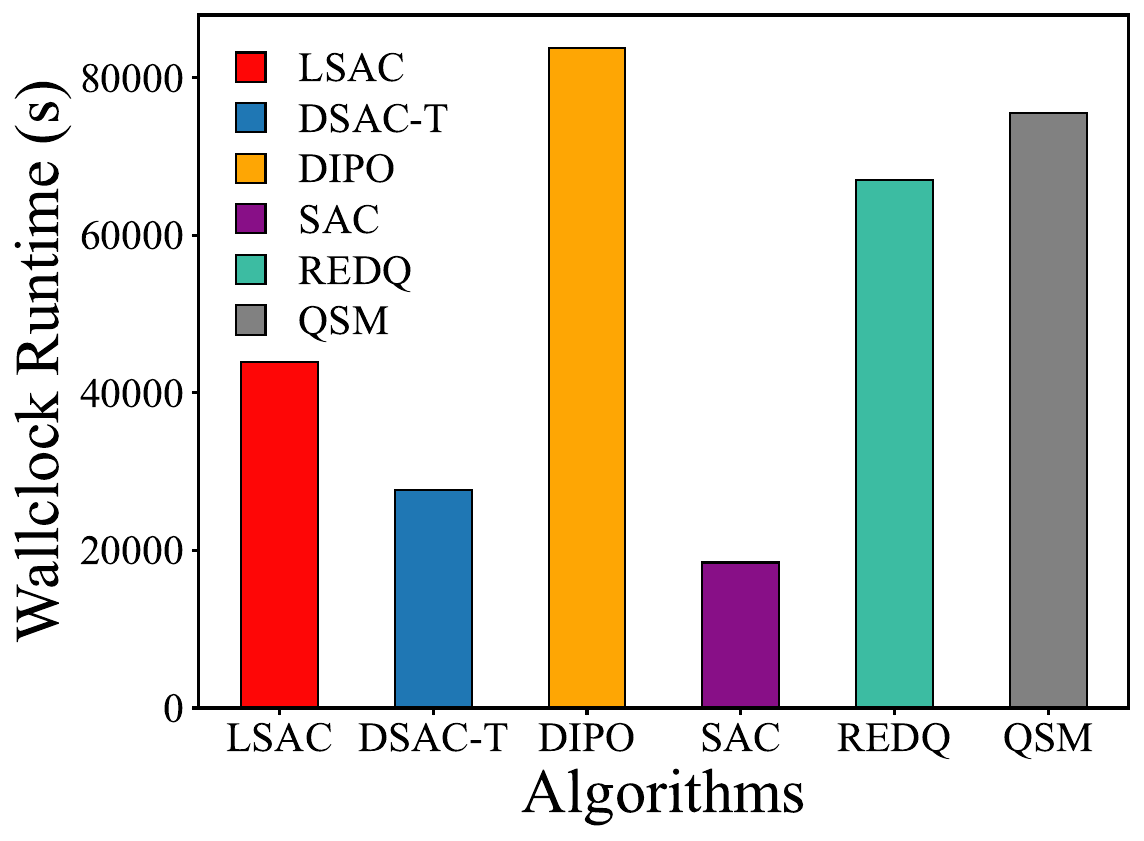}
        \inbetweentitlespace
        \caption*{\hspace{10pt} (d) Runtime at \texttt{2e5} steps, Ant}\label{fig:ant_wallclock_barplot}
    \end{subfigure}
    \hspace{3.3pt}
    \begin{subfigure}{0.935\figurewidth}
        \includegraphics[width=\linewidth]{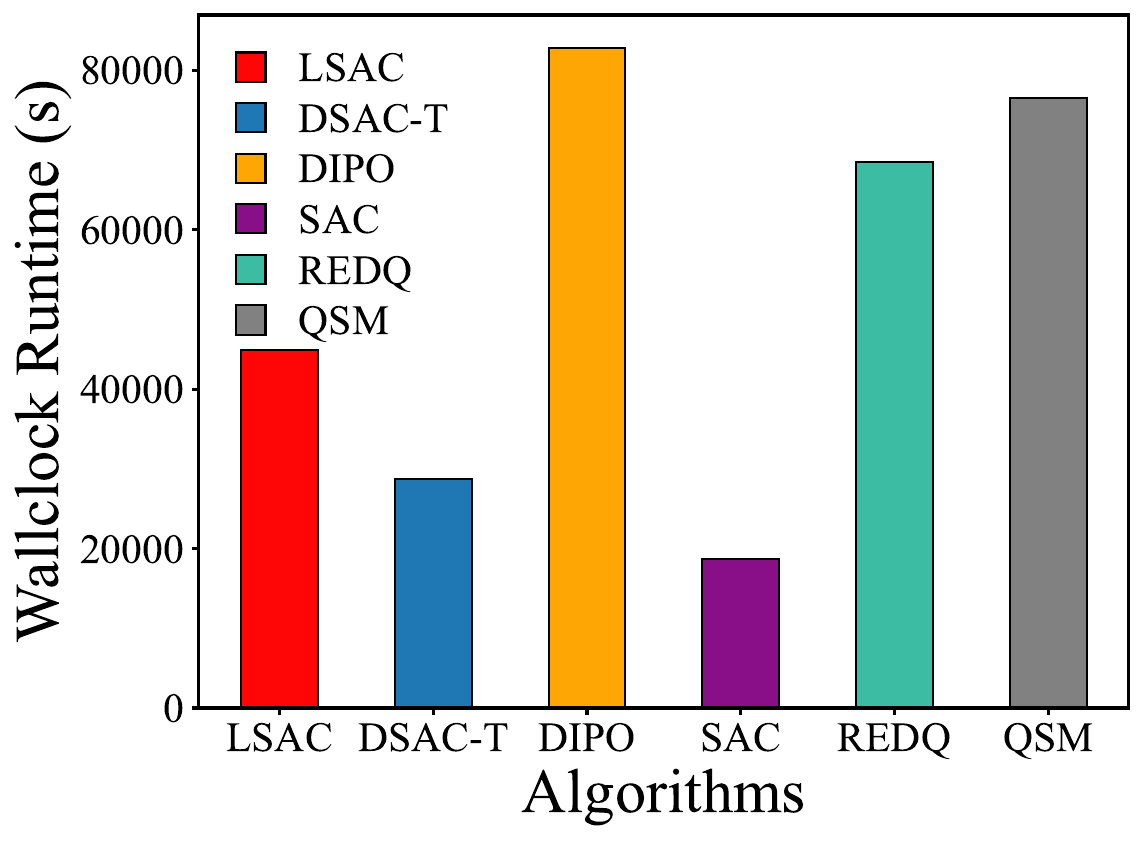}
        \inbetweentitlespace
        \caption*{\hspace{0pt} (e) Runtime at \texttt{2e5} steps, Cheetah}\label{fig:cheetah_wallclock_barplot}
    \end{subfigure}
    \hspace{3.3pt}
    \begin{subfigure}{0.935\figurewidth}
        \includegraphics[width=\linewidth]{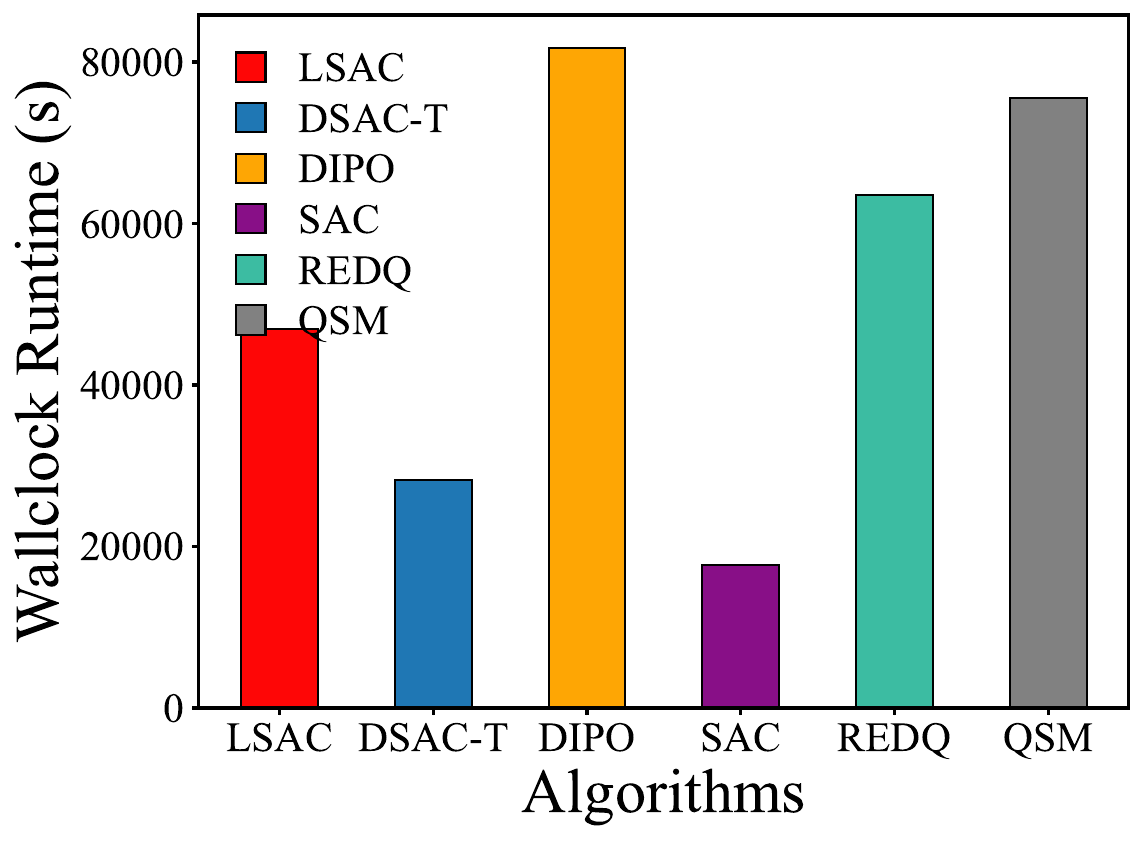}
        \inbetweentitlespace
        \caption*{\hspace{0pt} (f) Runtime at \texttt{2e5} steps, Walker}\label{fig:walker2d_wallclock_barplot}
    \end{subfigure}
    \begin{subfigure}{\figurewidth}
        \inbetweenrowsspace
        \includegraphics[width=\linewidth]{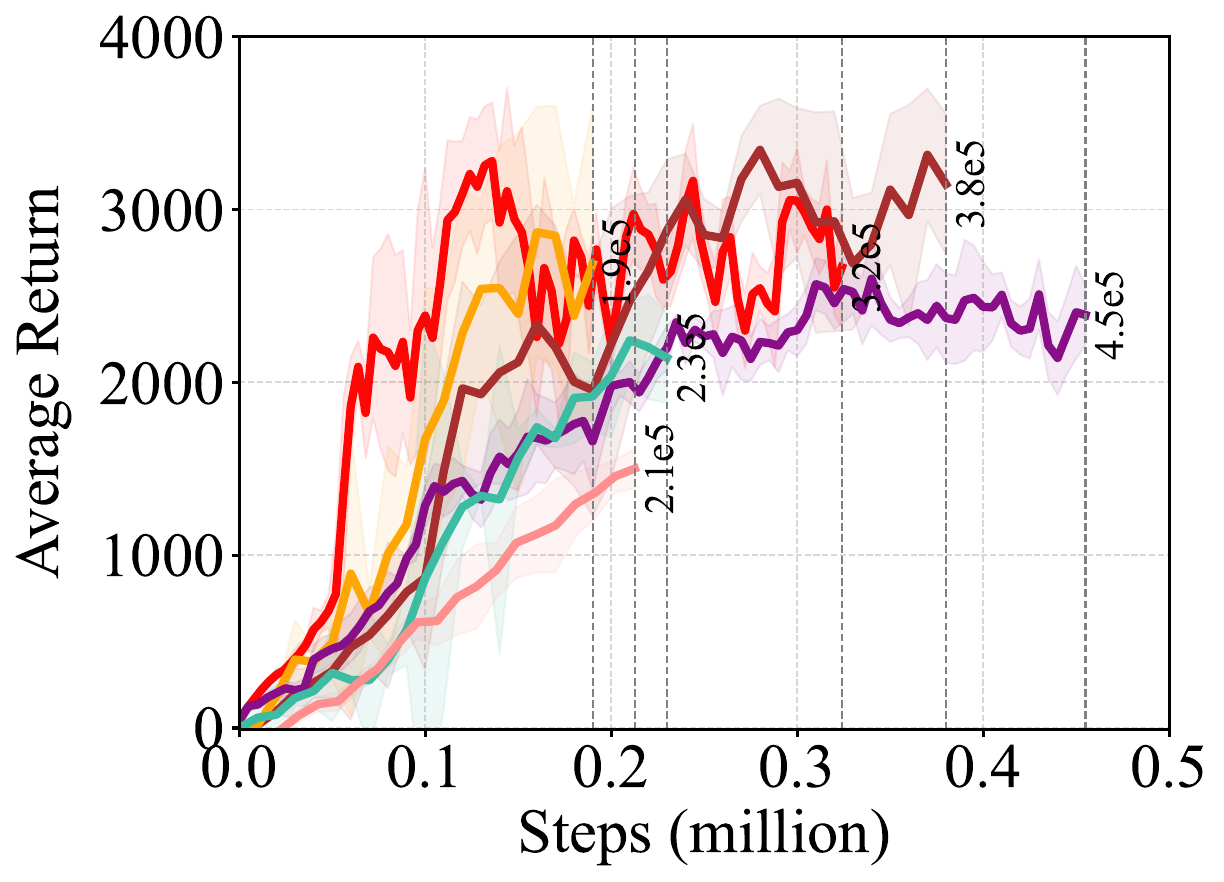}
        \inbetweentitlespace
        \caption*{\hspace{10pt} (g) AR at $10h$, Hopper}\label{fig:hopper_wallclock_curve}
    \end{subfigure}
    \inbetweenfiguresspace
    \begin{subfigure}{\figurewidth}
        \includegraphics[width=\linewidth]{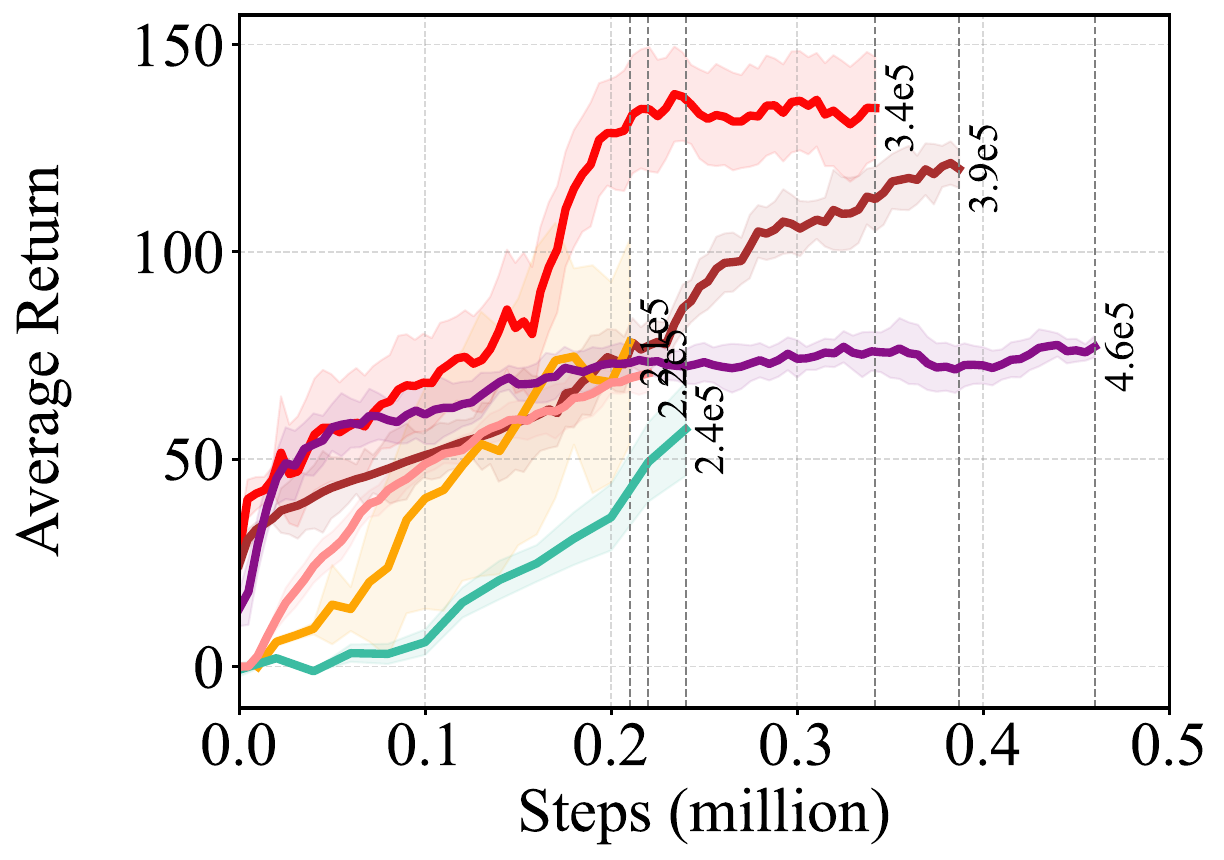}
        \inbetweentitlespace
        \caption*{\hspace{15pt} (h) AR at $10h$, Swimmer}\label{fig:swimmer_wallclock_curve}
    \end{subfigure}
    \inbetweenfiguresspace
    \begin{subfigure}{\figurewidth}
        \includegraphics[width=\linewidth]{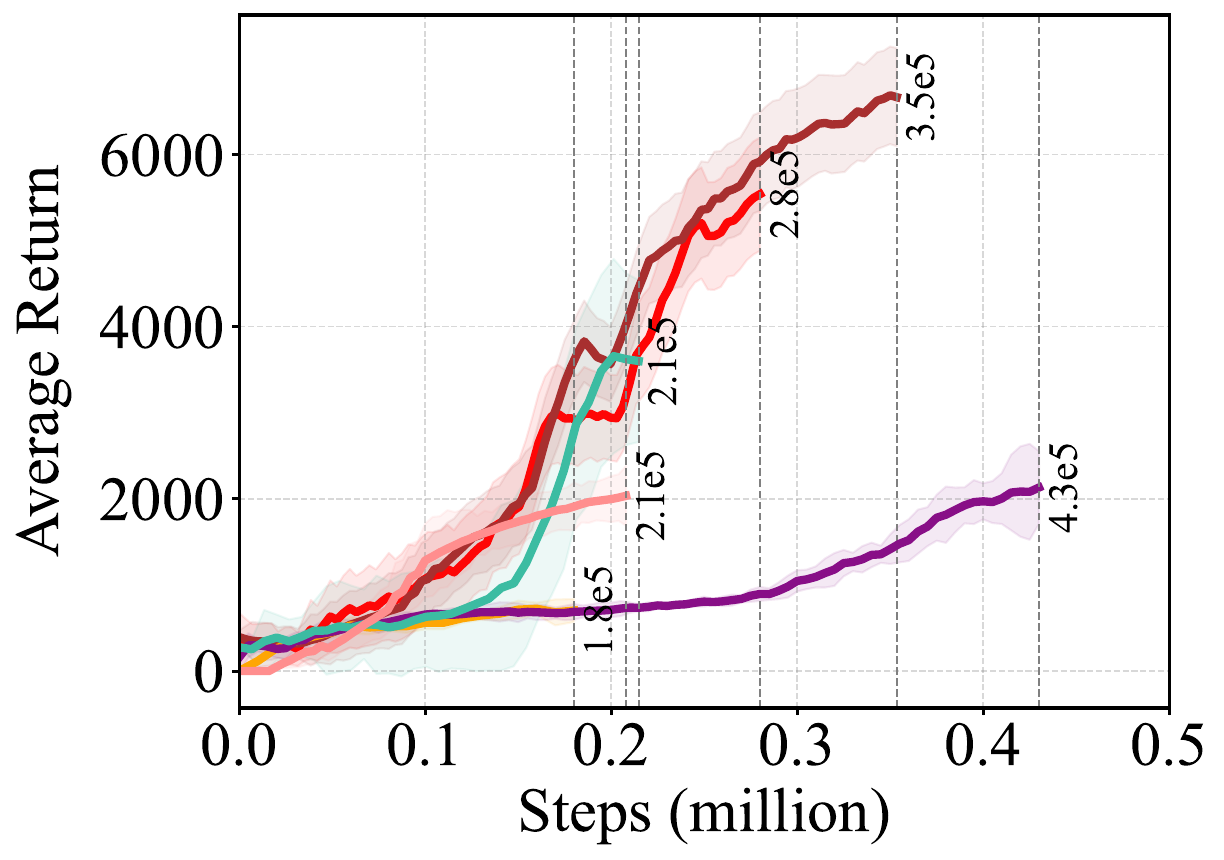}
        \inbetweentitlespace
        \caption*{\hspace{15pt} (i) AR at $10h$, Humanoid}\label{fig:humanoid_wallclock_curve}
    \end{subfigure}

    \begin{subfigure}{.935\figurewidth}
        \inbetweenrowsspace
        \includegraphics[width=\linewidth, clip, trim=0.3cm 0 0 0]{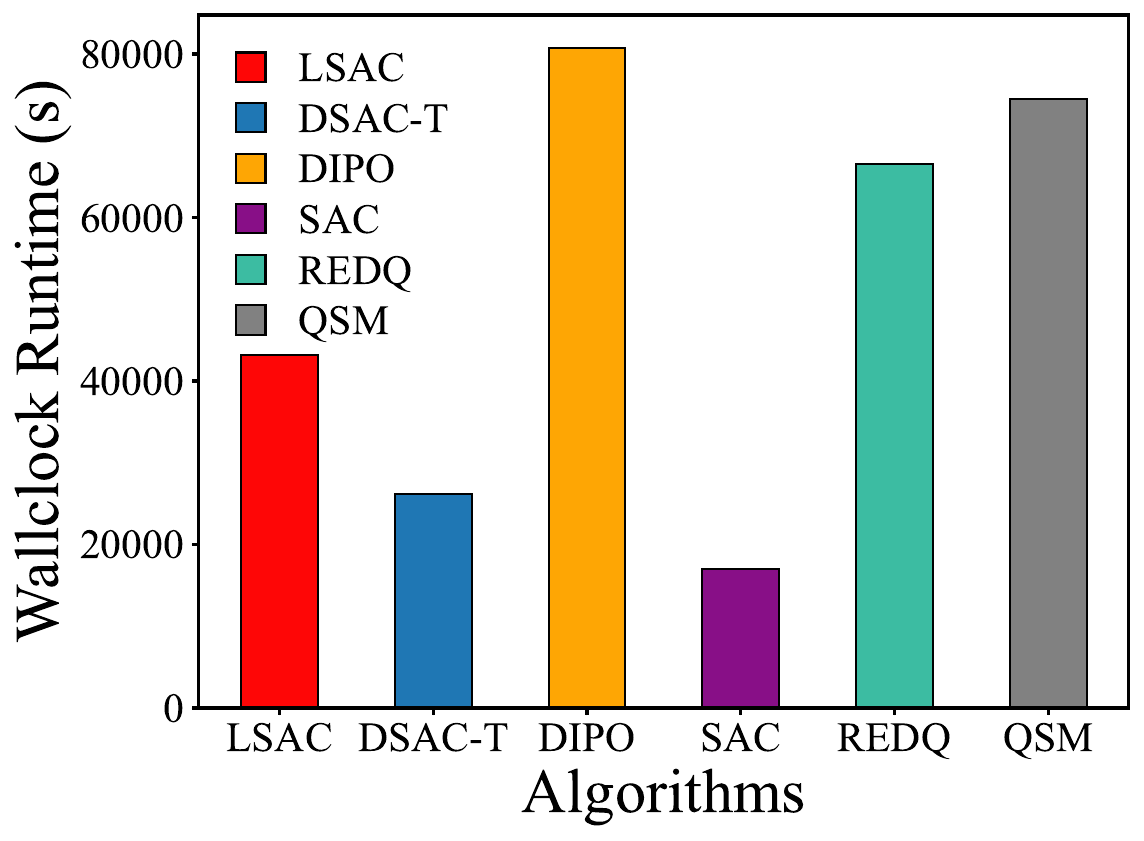}
        \inbetweentitlespace
        \caption*{\hspace{0pt} (j) Runtime at \texttt{2e5} steps, Hopper}\label{fig:hopper_wallclock_barplot}
    \end{subfigure}
    \inbetweenbarplotspace
    \begin{subfigure}{.935\figurewidth}
        \includegraphics[width=\linewidth, clip, trim=0.3cm 0 0 0]{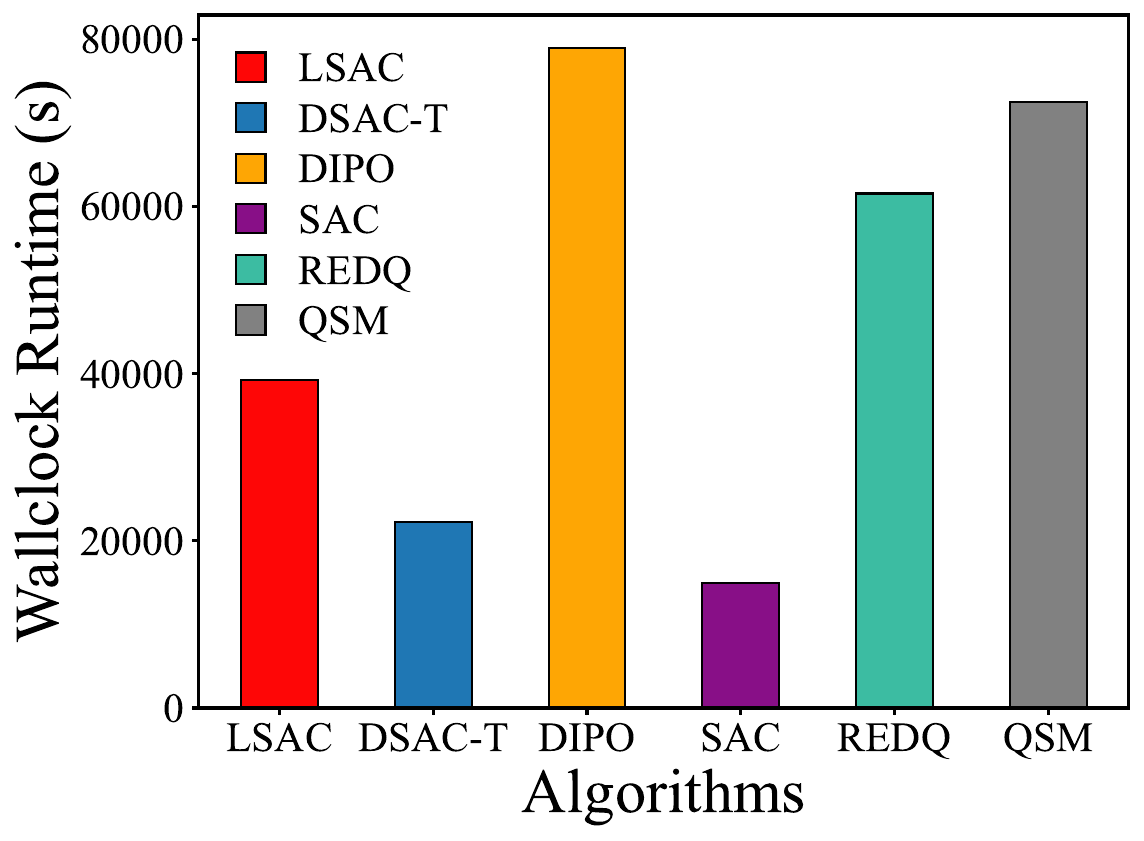}
        \inbetweentitlespace
        \caption*{\hspace{-1pt} (k) Runtime at \texttt{2e5} steps, Swimmer}\label{fig:swimmer_wallclock_barplot}
    \end{subfigure}
    \inbetweenbarplotspace
    \begin{subfigure}{.935\figurewidth}
        \includegraphics[width=\linewidth, clip, trim=0.3cm 0 0 0]{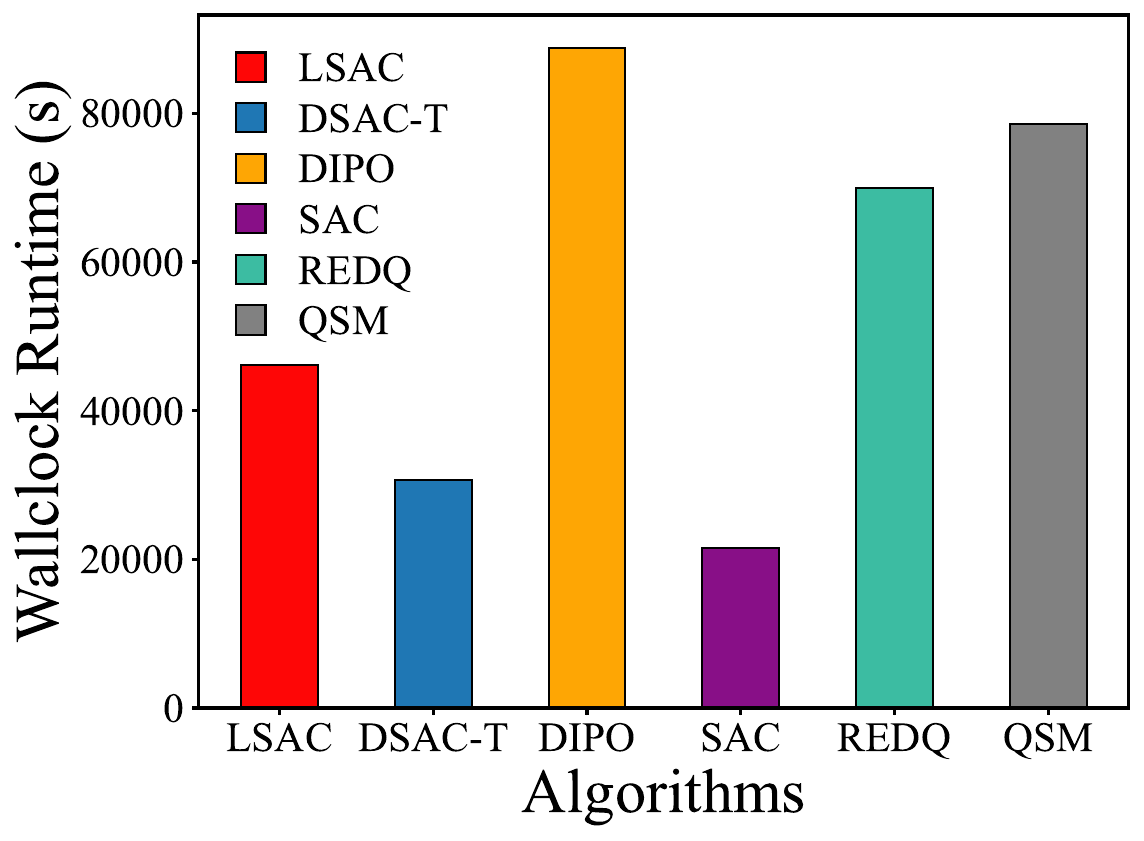}
        \inbetweentitlespace
        \caption*{\hspace{-2pt} (l) Runtime at \texttt{2e5} steps, Humanoid}\label{fig:humanoid_wallclock_barplot}
    \end{subfigure}
    \begin{subfigure}{2\figurewidth}
        \inbetweenrowsspace
        \includegraphics[width=\linewidth]{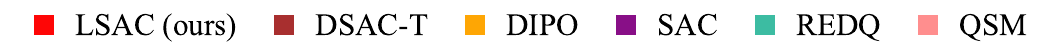}
    \end{subfigure}
    \caption{The first and the third row show the number of steps completed in 10 hour of training and the corresponding average return (AR) in corresponding environments. The second and the fourth row show the average wall-clock runtime in seconds to complete \texttt{2e5} steps in respective environments. }
    \label{fig:runtime_mujoco_curves_barplots}
\end{figure}

\begin{figure}[h]
    \def\inbetweenfiguresspace{\hspace{-3pt}} %
    \def\inbetweentitlespace{\vspace{-15pt}} %
    \def\inbetweenrowsspace{\vspace{5pt}} %
    \centering
    \captionsetup[subfigure]{labelformat=empty}
    
    \begin{subfigure}{\figurewidth}
        \includegraphics[width=\linewidth, clip, trim=0.3cm 0 0 0]{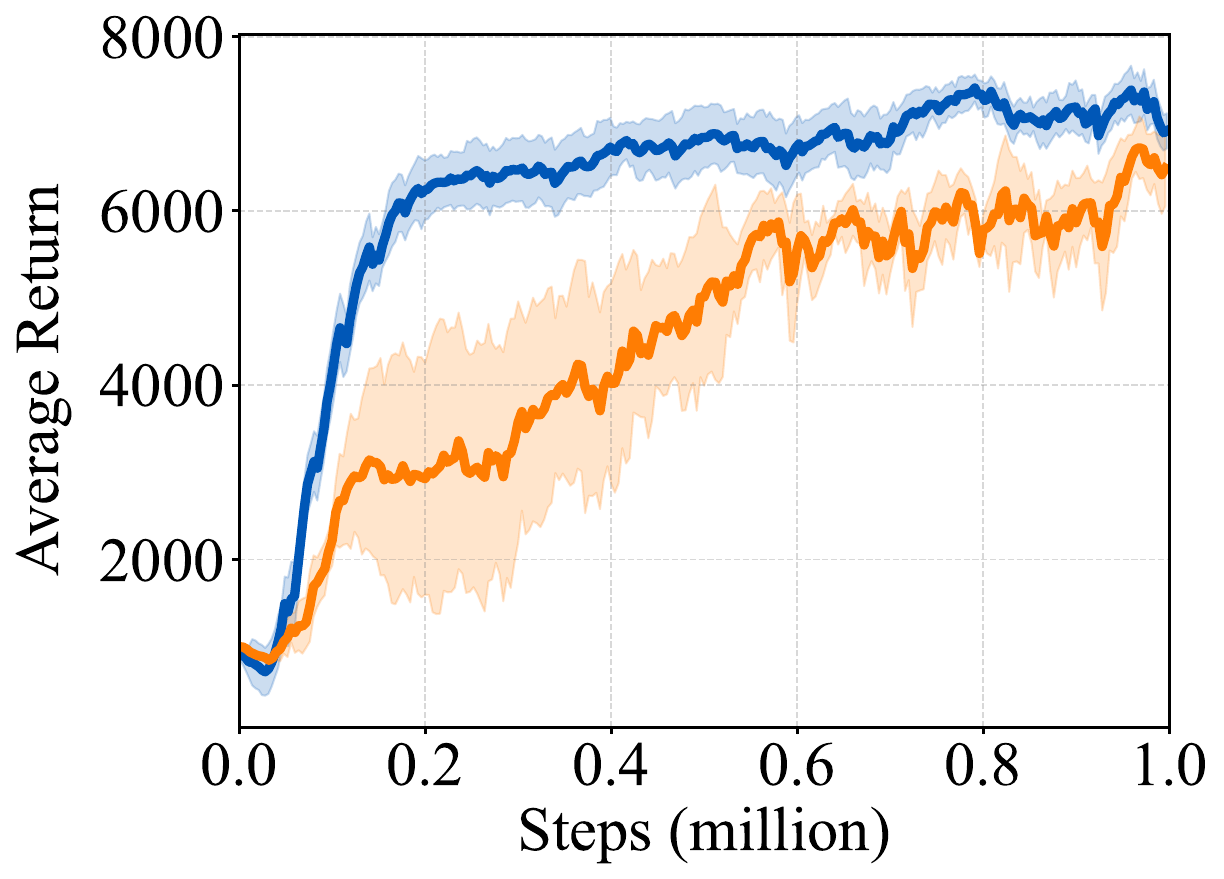}
        \inbetweentitlespace
        \caption{\hspace{15pt} (a) Ant-v3}\label{fig:distri_ablation_ant}
    \end{subfigure}
    \inbetweenfiguresspace
    \begin{subfigure}{\figurewidth}
        \includegraphics[width=\linewidth, clip, trim=0.3cm 0 0 0]{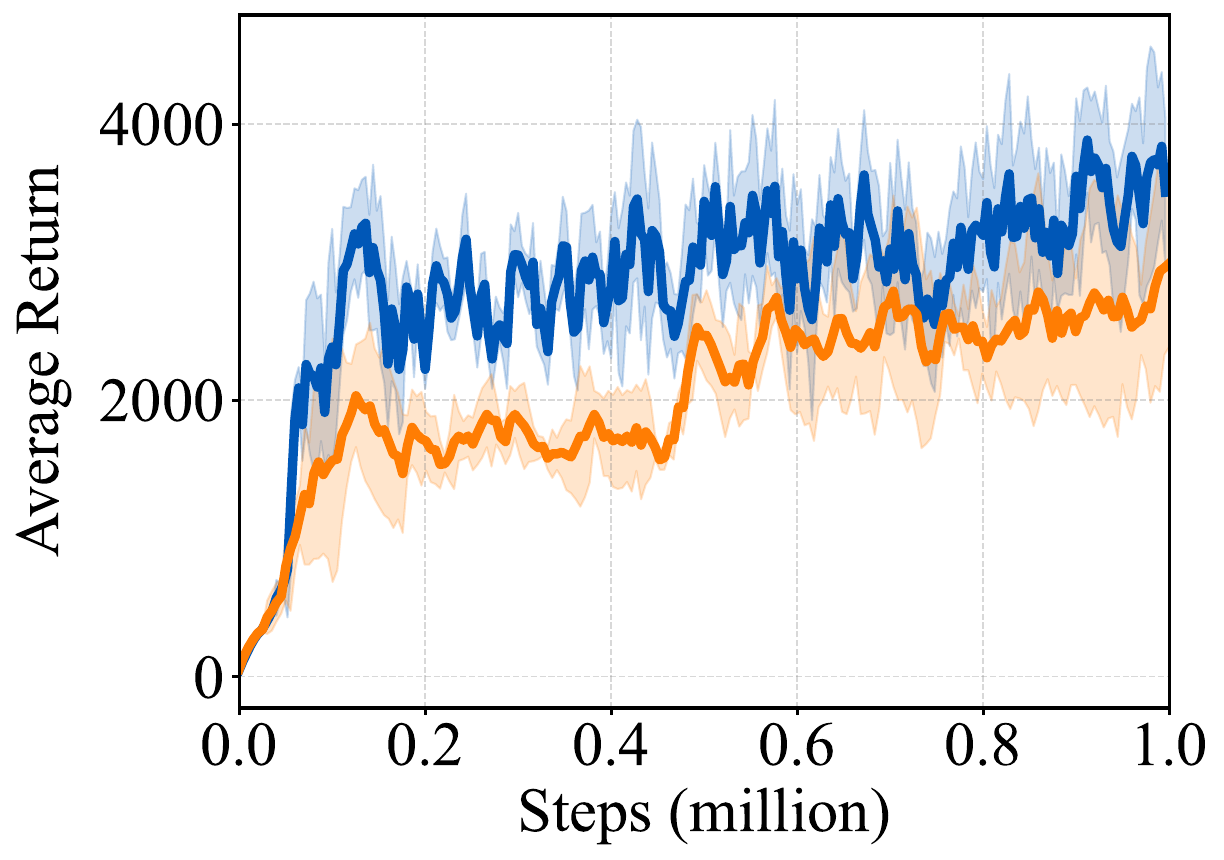}
        \inbetweentitlespace
        \caption{\hspace{15pt} (b) Hopper-v3}\label{fig:distri_ablation_hopper}
    \end{subfigure}
    \inbetweenfiguresspace
    \begin{subfigure}{\figurewidth}
        \includegraphics[width=\linewidth, clip, trim=0.3cm 0 0 0]{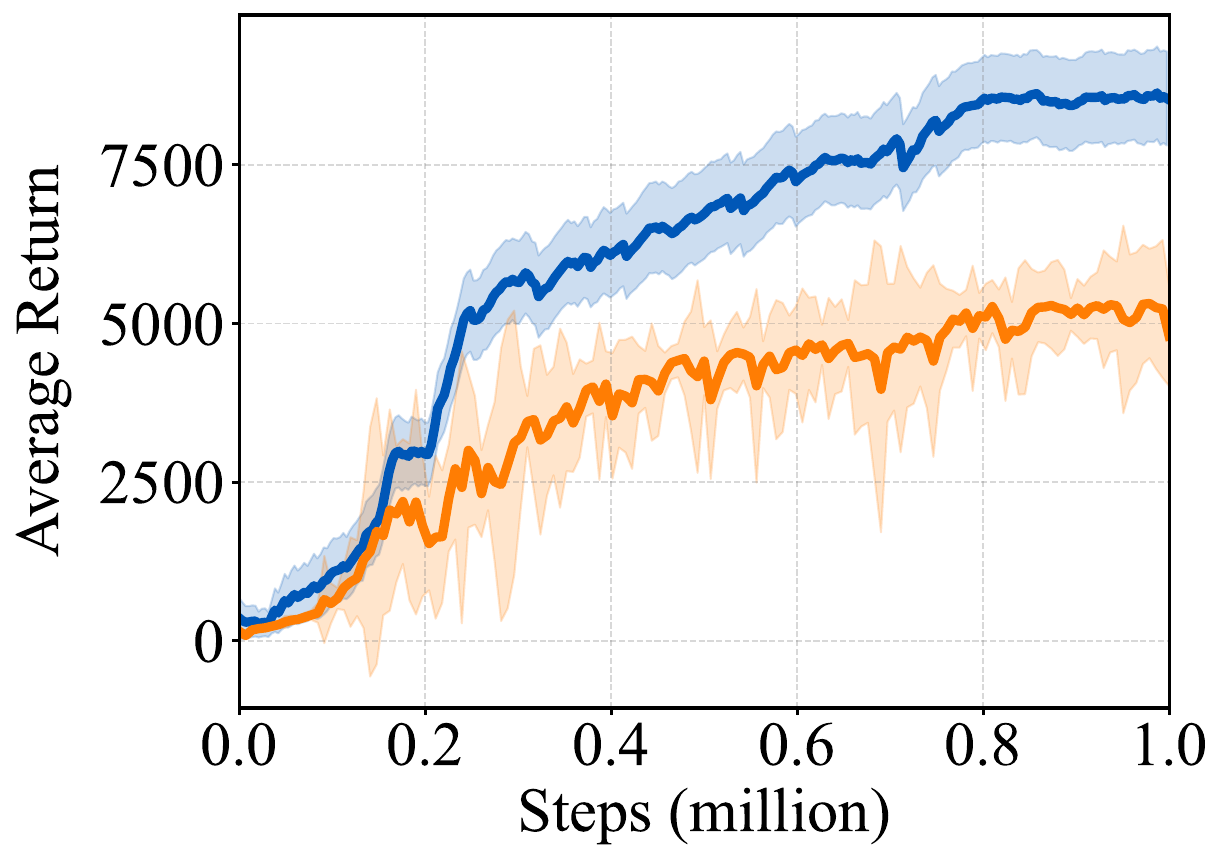}
        \inbetweentitlespace
        \caption{\hspace{15pt} (c) Humanoid-v3}\label{fig:distri_ablation_humanoid}
    \end{subfigure}
    \begin{subfigure}{1.5\figurewidth}
        \inbetweenrowsspace
        \includegraphics[width=\linewidth]{figures/distri_ablation_legend_all.pdf}
    \end{subfigure}
    \caption{Ablation on MuJoCo environments comparing the replacement of the distributional critic component with a standard critic. LSAC is more performant with improved stability than the ablated counterpart.}
    \label{fig:distributional_ablation_appendix}
\end{figure}

\begin{figure}[h]
    \def\inbetweenfiguresspace{\hspace{-3pt}} %
    \def\inbetweentitlespace{\vspace{-15pt}} %
    \def\inbetweenrowsspace{\vspace{5pt}} %
    \centering
    \captionsetup[subfigure]{labelformat=empty}
    \begin{subfigure}{\figurewidth}
        \includegraphics[width=\linewidth, clip, trim=0.3cm 0 0 0]{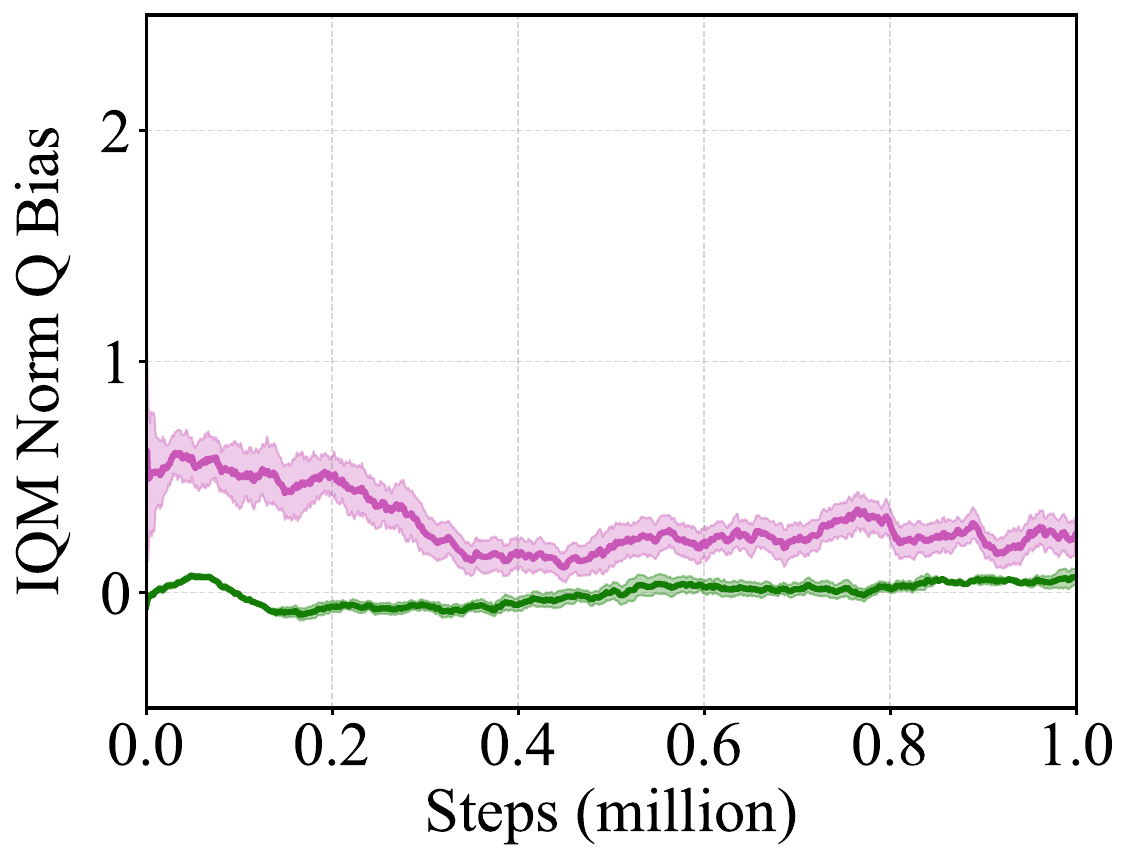}
        \inbetweentitlespace
        \caption{\hspace{5pt} (a) Ant-v3}\label{fig:qbias_distri_ablation_ant}
    \end{subfigure}
    \inbetweenfiguresspace
    \begin{subfigure}{\figurewidth}
        \includegraphics[width=\linewidth, clip, trim=0.3cm 0 0 0]{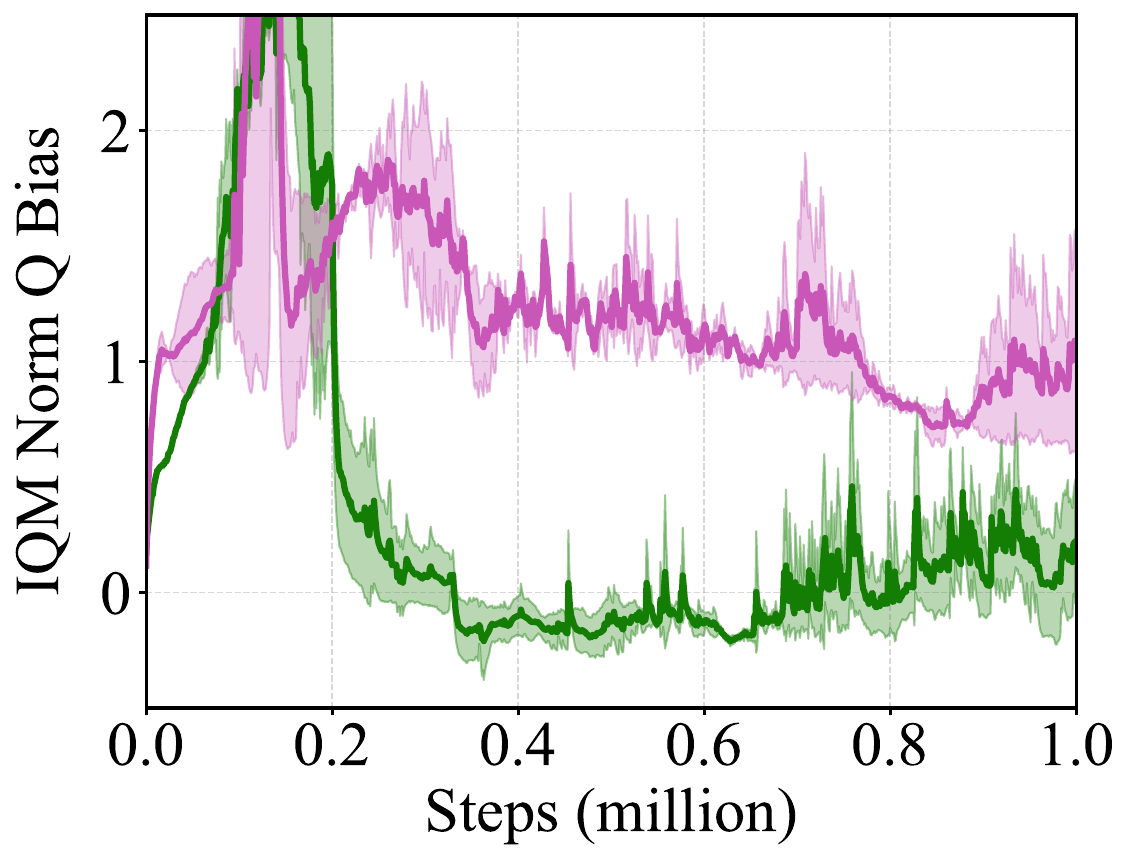}
        \inbetweentitlespace
        \caption{\hspace{5pt} (b) Hopper-v3}\label{fig:qbias_distri_ablation_hopper}
    \end{subfigure}
    \inbetweenfiguresspace
    \begin{subfigure}{\figurewidth}
        \includegraphics[width=\linewidth, clip, trim=0.3cm 0 6 0]{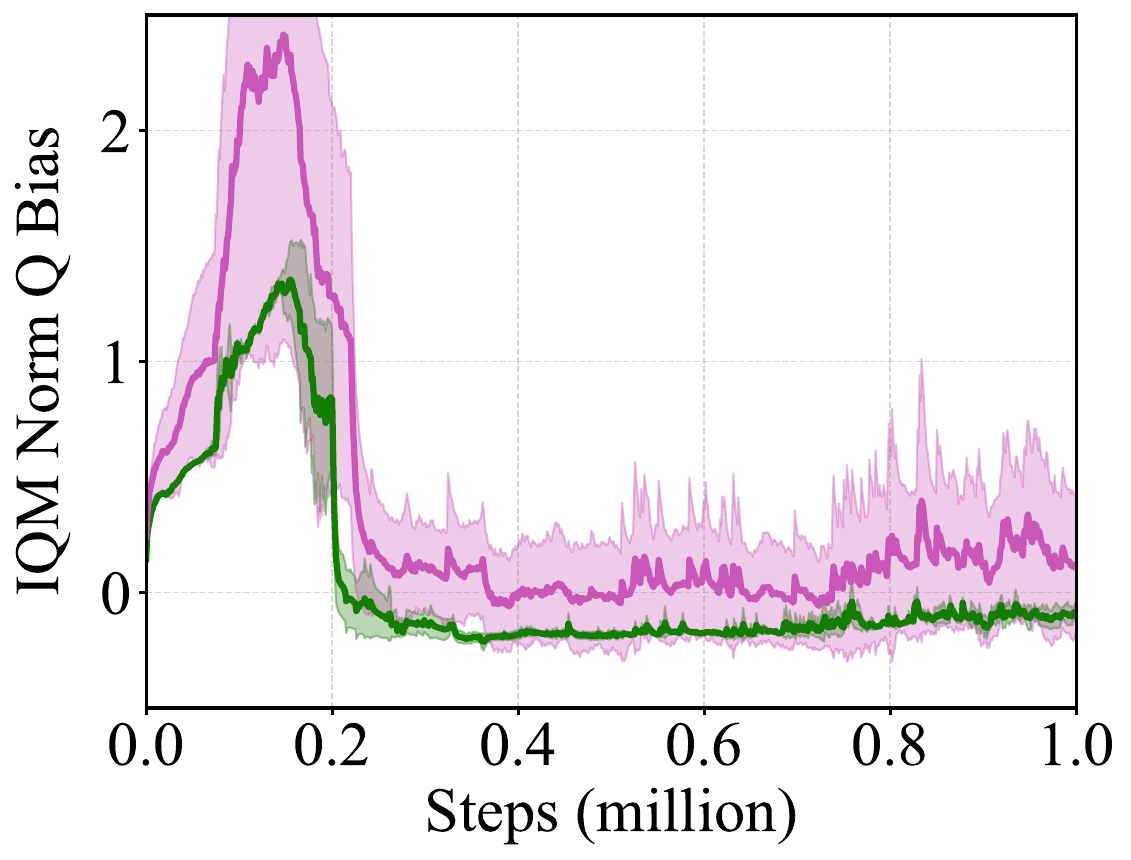}
        \inbetweentitlespace
        \caption{\hspace{8pt} (c) Humanoid-v3}\label{fig:qbias_distri_ablation_humanoid}
    \end{subfigure}
    \begin{subfigure}{1.5\figurewidth}
        \inbetweenrowsspace
        \includegraphics[width=\linewidth]{figures/distri_ablation_legend_qbias.pdf}
    \end{subfigure}
    \caption{Normalized $Q$ bias plots for ablation study of the distributional critic component in LSAC. The $Q$ bias value is estimated using the Monte-Carlo return over \texttt{1e3} episodes on-policy, starting from states sampled in the replay buffer.}
    \label{fig:distributional_ablation_qbias_appendix}
\end{figure}

\begin{figure}[h]
    \def\inbetweenfiguresspace{\hspace{-3pt}} %
    \def\inbetweentitlespace{\vspace{-15pt}} %
    \def\inbetweenrowsspace{\vspace{5pt}} %
    \centering
    \captionsetup[subfigure]{labelformat=empty}
    \begin{subfigure}{\figurewidth}
        \includegraphics[width=\linewidth, clip, trim=0 0 0 0]{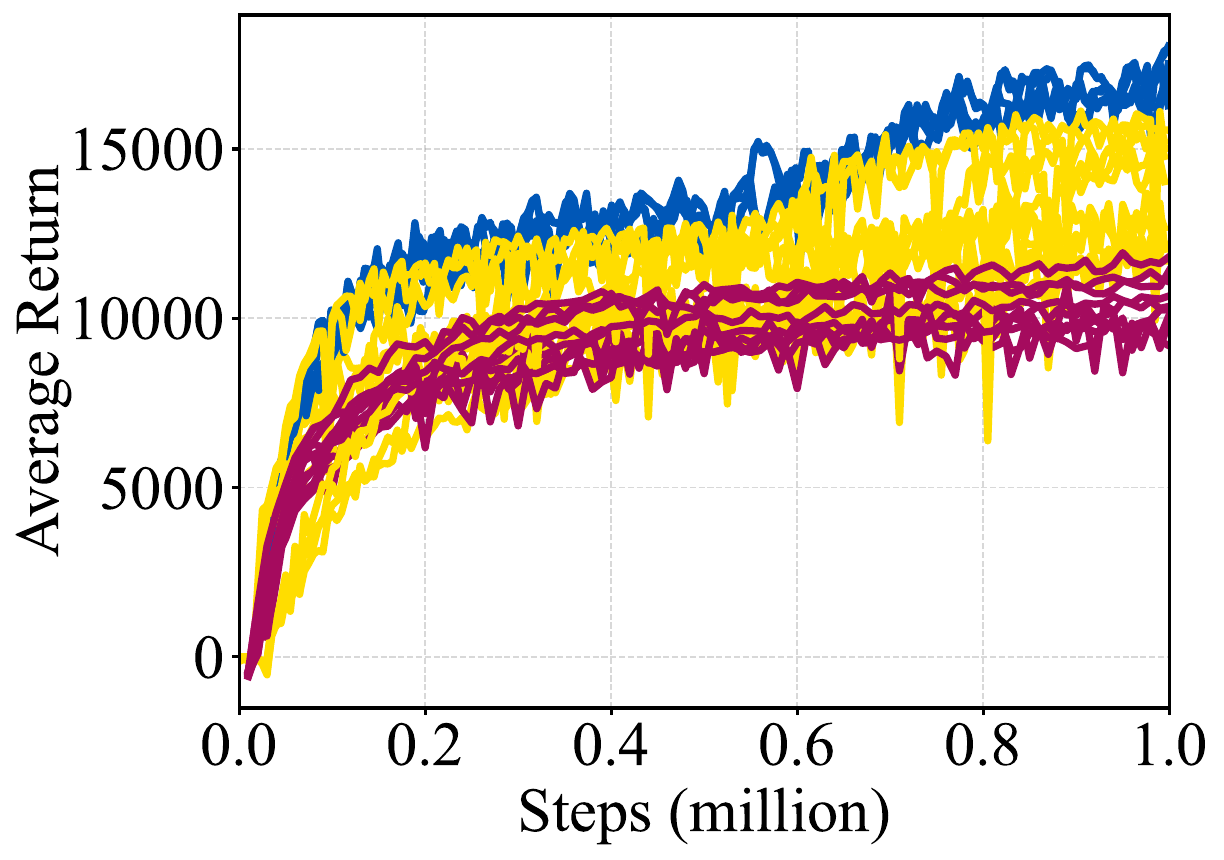}
        \inbetweentitlespace
        \caption{\hspace{15pt} (a) Halfcheetah-v3}\label{fig:halfcheetah_plot_stability}
        \inbetweenrowsspace
    \end{subfigure}
    \inbetweenfiguresspace
    \begin{subfigure}{\figurewidth}
        \includegraphics[width=\linewidth, clip, trim=0.3cm 0 0 0]{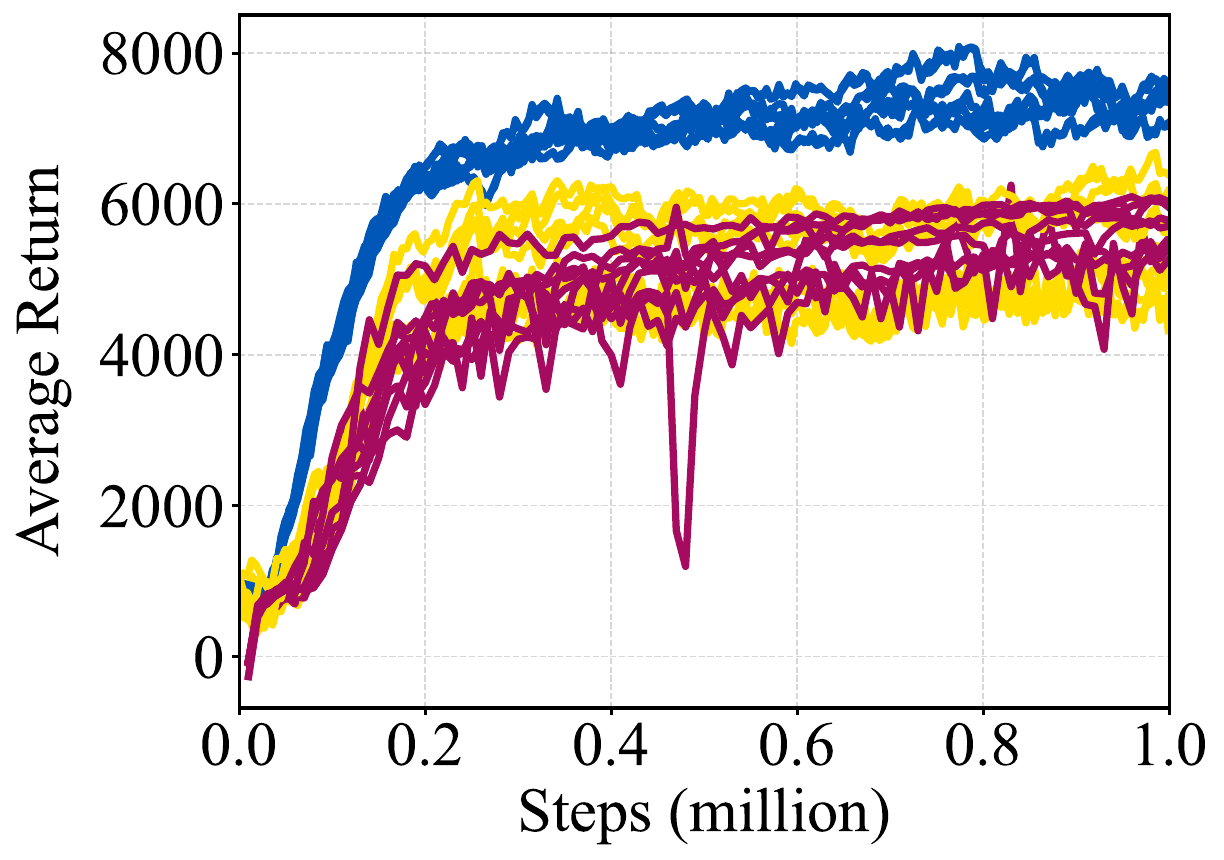}
        \inbetweentitlespace
        \caption{\hspace{15pt} (b) Ant-v3}\label{fig:ant_plot_stability}
        \inbetweenrowsspace
    \end{subfigure}
    \inbetweenfiguresspace
    \begin{subfigure}{\figurewidth}
        \includegraphics[width=\linewidth, clip, trim=0.3cm 0 6 0]{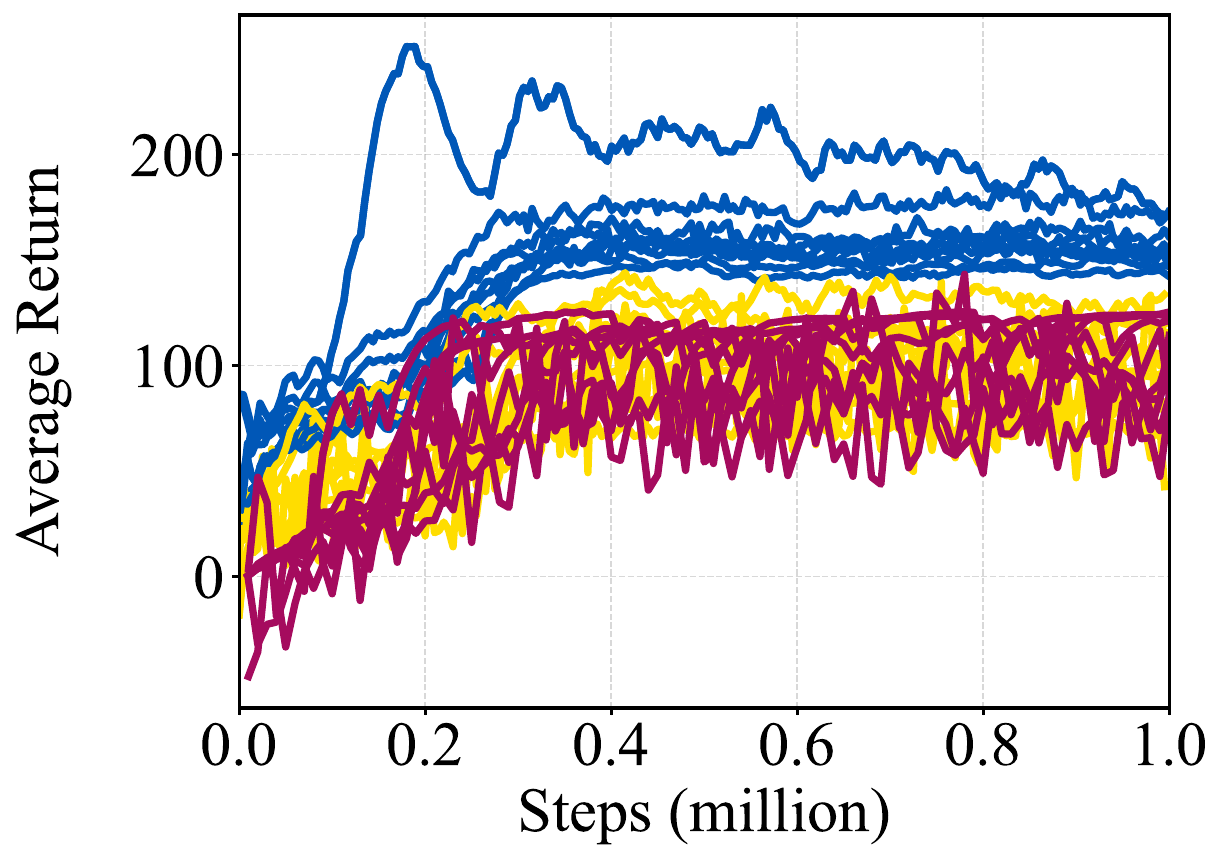}
        \inbetweentitlespace
        \caption{\hspace{17pt} (c) Swimmer-v3}\label{fig:swimmer_plot_stability}
        \inbetweenrowsspace
    \end{subfigure}
    \begin{subfigure}{\figurewidth}
        \includegraphics[width=\linewidth, clip, trim=0 0 0 0]{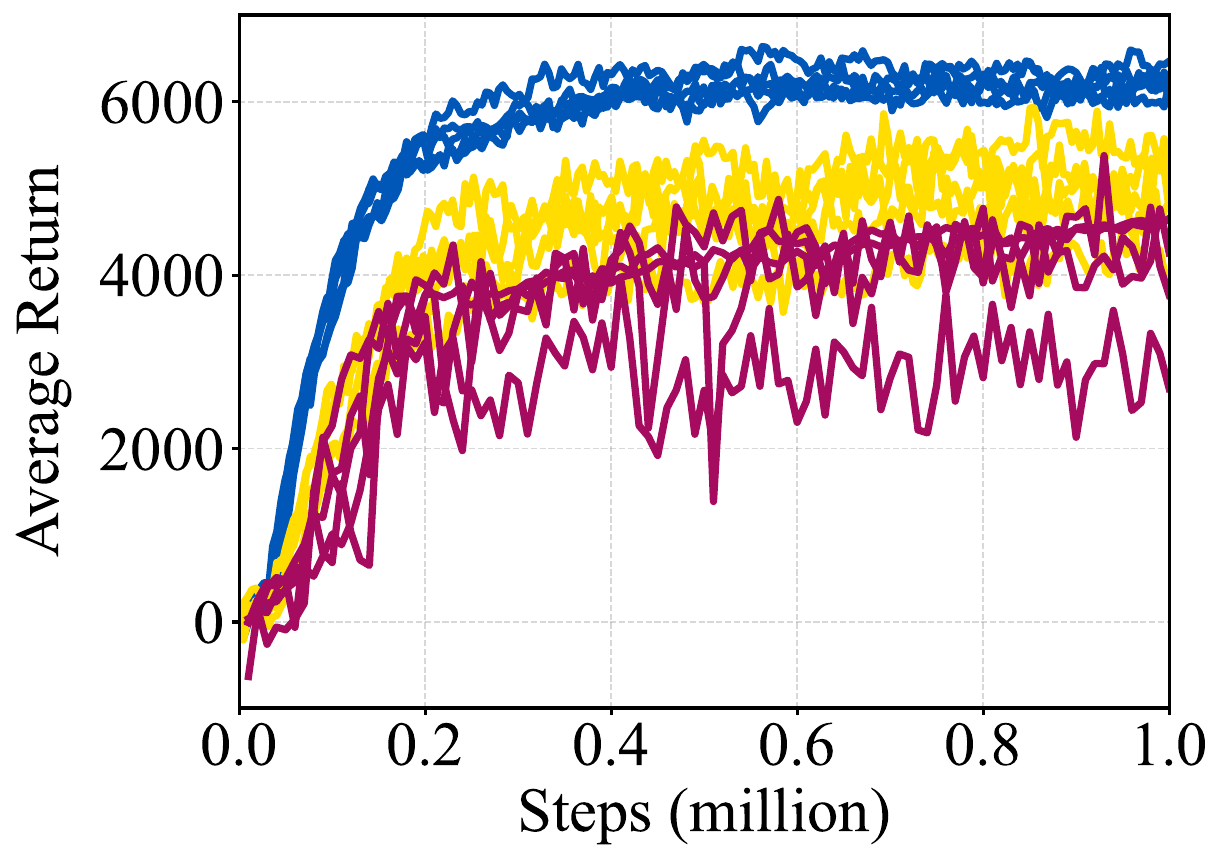}
        \inbetweentitlespace
        \caption{\hspace{15pt} (d) Walker2d-v3}\label{fig:walker2d_plot_stability}
    \end{subfigure}
    \inbetweenfiguresspace
    \begin{subfigure}{\figurewidth}
        \includegraphics[width=\linewidth, clip, trim=0.3cm 0 0 0]{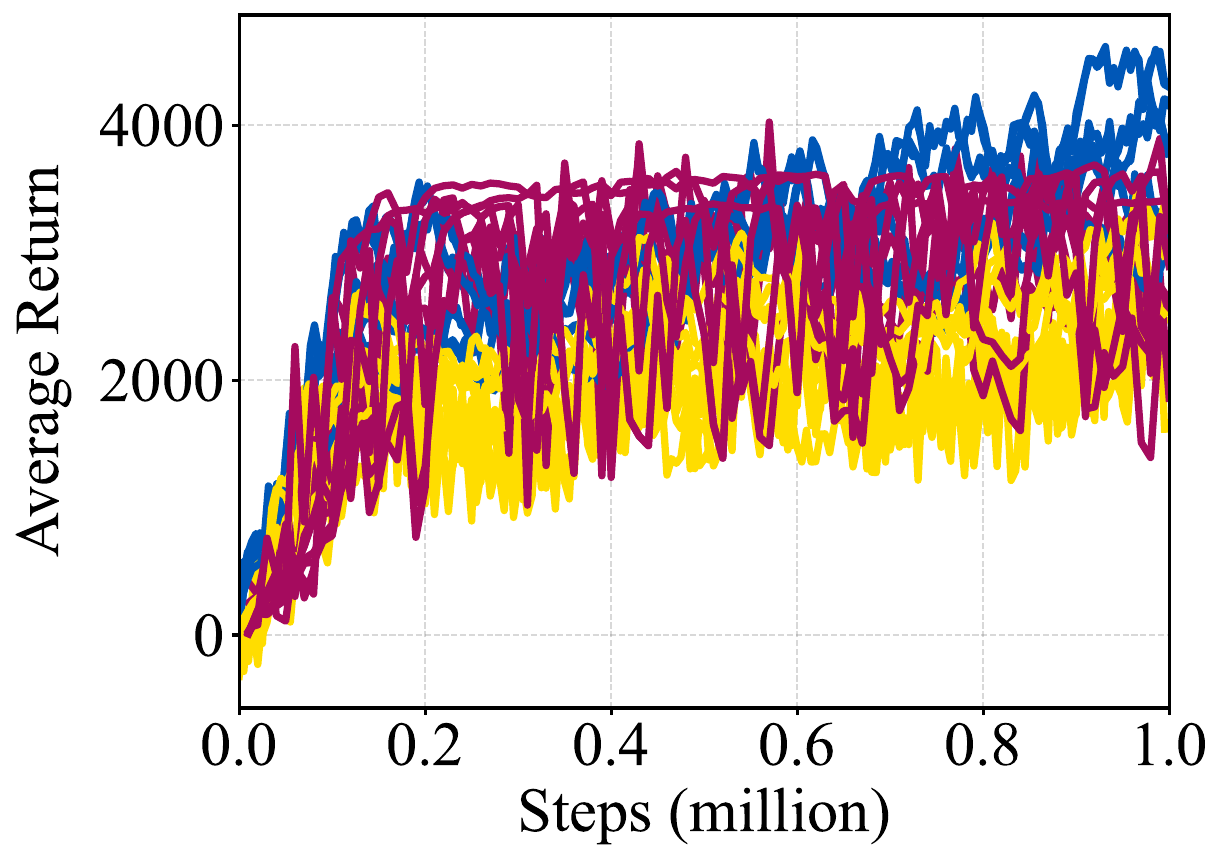}
        \inbetweentitlespace
        \caption{\hspace{10pt} (e) Hopper-v3}\label{fig:hopper_plot_stability}
    \end{subfigure}
    \inbetweenfiguresspace
    \begin{subfigure}{\figurewidth}
        \includegraphics[width=\linewidth, clip, trim=0.3cm 0 0 0]{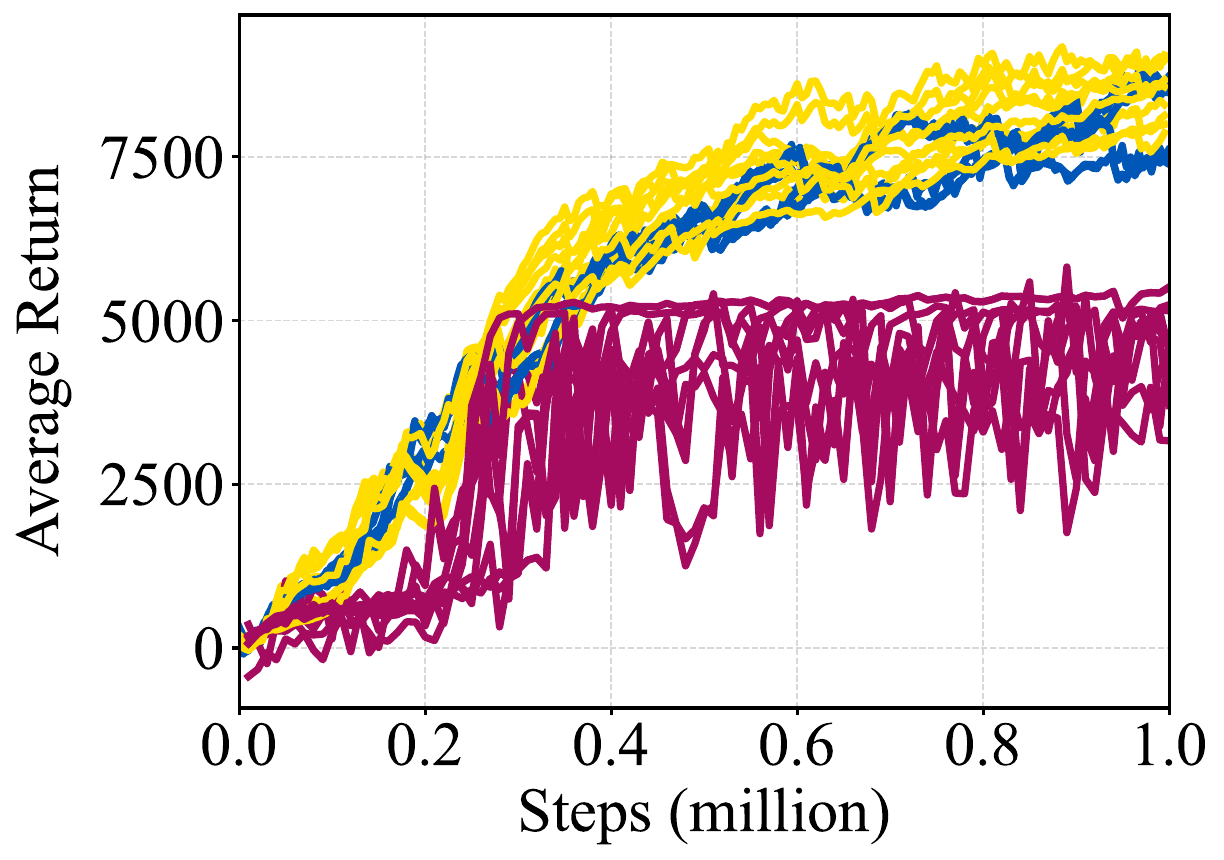}
        \inbetweentitlespace
        \caption{\hspace{15pt} (f) Humanoid-v3}\label{fig:humanoid_plot_stability}
    \end{subfigure}
    \begin{subfigure}{\figurewidth}
        \inbetweenrowsspace
        \includegraphics[width=\linewidth]{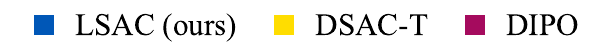}
    \end{subfigure}
    \caption{Training curves for the six most challenging MuJoCo continuous control tasks over \texttt{1e6} time steps and over 10 individual runs. LSAC is almost similarly stable as DSAC-T, and more stable compared to DIPO in most cases.}
    \label{fig:training_curves_stability}
\end{figure}

\end{document}